\documentclass[12pt,a4paper]{article}
\usepackage[utf8]{inputenc}
\usepackage[whole]{bxcjkjatype}

\usepackage{authblk}
\usepackage[top=25truemm,bottom=25truemm,left=25truemm,right=25truemm,footskip=10truemm]{geometry}
\usepackage{booktabs}
\usepackage{amsmath}
\usepackage{amsthm}
\usepackage{float}
\usepackage{amssymb}
\usepackage{amsfonts}
\usepackage{ascmac}
\usepackage{scalefnt}
\usepackage{indentfirst}
\usepackage{setspace}
\usepackage{subcaption}
\usepackage{lscape}
\usepackage{natbib}
\usepackage{url}
\usepackage{color}
\usepackage[dvipsnames]{xcolor}
\usepackage{mathtools}
\usepackage{hyperref}
\usepackage{algorithm}
\usepackage{algorithmic}
\usepackage{enumitem}

\theoremstyle{plain}
\newtheorem{theorem}{Theorem}
\newtheorem{proposition}[theorem]{Proposition}
\newtheorem{corollary}[theorem]{Corollary}
\newtheorem{lemma}[theorem]{Lemma}
\newtheorem{assumption}{Assumption}

\newtheorem{remark}{Remark}

\DeclareMathOperator*{\argmax}{arg\,max}

\newcommand{\RR}{\mathbb{R}}
\newcommand{\EE}{\mathbb{E}}
\newcommand{\PP}{\mathbb{P}}
\newcommand{\Var}{\operatorname{Var}}

\newcommand{\sgn}{\operatorname{sgn}}
\newcommand{\KL}{\operatorname{KL}}
\newcommand{\kl}{\operatorname{kl}}

\newcommand{\1}{\boldsymbol{1}}

\newcommand{\cA}{\mathcal{A}}
\newcommand{\cB}{\mathcal{B}}
\newcommand{\cD}{\mathcal{D}}
\newcommand{\cE}{\mathcal{E}}
\newcommand{\cG}{\mathcal{G}}
\newcommand{\cH}{\mathcal{H}}

\newcommand{\cR}{\mathcal{R}}
\newcommand{\cT}{\mathcal{T}}
\newcommand{\cX}{\mathcal{X}}
\newcommand{\uR}{\underline{R}}

\newcommand{\indep}{\mathop{\perp\!\!\!\perp}}

\newcommand{\dd}{\mathop{}\!\mathrm{d}}

\title{
PAC-Bayesian Reward-Certified\\[0.4\baselineskip]
Outcome Weighted Learning
}

\author[1,2,$\dag$]{Yuya Ishikawa}
\author[3,4,*]{Shu Tamano}

\affil[1]{\small{Department of Clinical Biostatistics, Graduate School of Medical and Dental Sciences, Institute of Science Tokyo, 1-5-45 Yushima, Bunkyo-ku, Tokyo, 113-8510, Japan}}
\affil[2]{\small{Biostatistics Division, Center for Research Administration and Support, National Cancer Center, 5-1-1 Tsukiji, Chuo-ku, Tokyo 104-0045, Japan}}

\affil[3]{\small{Department Multidisciplinary Sciences, Graduate School of Arts and Sciences, The University of
Tokyo, 3-8-1 Komaba, Meguro-Ku, Tokyo 153-8902, Japan}}
\affil[4]{\small{Department of Epidemiology, National Institute of Infectious Diseases, Japan Institute for Health Security, 1-23-1 Toyama, Shinjuku-Ku, Tokyo 162-0052, Japan}}

\affil[$\dag$]{\small{E-mail: yuishik2.ncc.go.jp}}

\affil[*]{\small{E-mail: tamano-shu212@g.ecc.u-tokyo.ac.jp}}

\date{}

\begin{document}

\maketitle

\begin{abstract}
    Estimating optimal individualized treatment rules (ITRs) via outcome weighted learning (OWL) often relies on observed rewards that are noisy or optimistic proxies for the true latent utility.
    Ignoring this reward uncertainty leads to the selection of policies with inflated apparent performance, yet existing OWL frameworks lack the finite-sample guarantees required to systematically embed such uncertainty into the learning objective.
    To address this issue, we propose PAC-Bayesian Reward-Certified Outcome Weighted Learning (PROWL).
    Given a one-sided uncertainty certificate, PROWL constructs a conservative reward and a strictly policy-dependent lower bound on the true expected value.
    Theoretically, we prove an exact certified reduction that transforms robust policy learning into a unified, split-free cost-sensitive classification task.
    This formulation enables the derivation of a nonasymptotic PAC-Bayes lower bound for randomized ITRs, where we establish that the optimal posterior maximizing this bound is exactly characterized by a general Bayes update.
    To overcome the learning-rate selection problem inherent in generalized Bayesian inference, we introduce a fully automated, bounds-based calibration procedure, coupled with a Fisher-consistent certified hinge surrogate for efficient optimization.
    Our experiments demonstrate that PROWL achieves improvements in estimating robust, high-value treatment regimes under severe reward uncertainty compared to standard methods for ITR estimation.
\end{abstract}

\noindent%
 {\it Keywords:} Causal inference, General Bayesian inference, Gibbs posterior, Optimal treatment regimes, Precision medicine, Temperature tuning

\section{Introduction}
\label{sec:introduction}

\subsection{Background}
\label{subsec:background}

Individualized treatment rules (ITRs), also called optimal treatment regimes or statistical treatment rules, provide a formal framework for personalized medicine by mapping patient covariates to treatment assignments.
They are particularly useful when treatment effects are heterogeneous and tailored decisions can improve clinical outcomes.
The modern literature combines the causal formulation of treatment regimes developed in biostatistics \citep{murphy2003optimal,robins2004optimal,moodie2007demystifying} with decision-theoretic treatment-rule analysis in econometrics \citep{manski2004statistical,hirano2009asymptotics}.
As emphasized by \citet{kosorok2019precision}, estimating optimal ITRs is often more crucial than merely assessing population average treatment effects whenever interventions are tailored to covariate information.
As reviewed by \citet{lipkovich2024modern}, modern treatment effect heterogeneity analysis is increasingly oriented toward individualized treatment recommendations, subgroup identification, and treatment regime estimation in clinical trials and observational studies.
Under standard identification conditions rooted in the potential-outcome framework \citep{robins1986new,robins2000marginal}, a primary goal is to estimate a rule that maximizes its value, namely the expected utility or outcome that would be observed if the rule were deployed in the target population.

A broad distinction in this literature is between indirect and direct approaches.
Indirect methods estimate conditional mean outcomes or treatment contrasts and then derive a treatment rule by plug-in optimization;
representative examples include Q-learning, A-learning, and related regression-based procedures \citep{qian2011performance,zhang2012robust,schulte2015q}.
Direct methods, in contrast, target the value criterion more explicitly, often by reducing policy learning to classification or empirical welfare maximization \citep{manski2004statistical,zhang2012estimating,Zhao2012,kitagawa2018should,athey2021policy}.
From this perspective, single-stage ITR estimation can be viewed as a special case of offline policy learning or contextual bandits, with treatments as actions and clinical outcomes as rewards \citep{hirano2009asymptotics,sakhi2023pac,gouverneur2025refined}.

Among direct methods, outcome weighted learning (OWL) is particularly attractive because it converts value maximization into a weighted classification problem \citep{Zhao2012}.
Under standard causal assumptions, the value of an ITR admits an inverse-probability-weighted representation, and OWL exploits this representation by assigning classification penalties to treatment assignments associated with better observed outcomes.
This reduction enables the use of large-margin classifiers such as support vector machines for policy learning.
Since it directly targets a clinically relevant value criterion while remaining algorithmically compatible with statistical learning tools, OWL has become one of the most visible templates for direct treatment-regime estimation.
The original framework has since been extended in several directions, including sparse penalization \citep{Song2015}, residual weighting \citep{zhou2017residual}, augmentation and multistage learning \citep{zhao2015new,liu2018augmented}, robust loss design \citep{fu2019robust}, personalized dose finding \citep{chen2016personalized}, treatment selection under composite outcomes such as effectiveness and cost \citep{xu2022estimation}, tree-based rules \citep{zhu2017greedy}, ordinal and multicategory treatments \citep{chen2018estimating,zhang2020multicategory,ma2023learning}, and Bayesian formulations that quantify uncertainty in the learned rule \citep{YazzourhFreeman2024}.
These developments confirm OWL as a versatile and durable foundation for personalized treatment learning.

In practice, however, the observed reward is often only a proxy for the utility that decision makers truly care about.
This discrepancy may arise from measurement error, surrogate endpoints, delayed outcomes, or preference-based aggregation of multiple outcomes into a single scalar reward.
The issue is not merely that the reward is noisy; rather, the reward may be systematically optimistic relative to the latent utility that should define the policy objective.
For example, the choice of utility construction can materially affect the estimated rule in cost-effectiveness analysis \citep{xu2022estimation}, and patient preferences can alter the optimal treatment even when the observed clinical outcomes are identical \citep{butler2018incorporating}.
If such reward uncertainty is ignored, direct optimization of the observed reward can select policies whose apparent performance is inflated relative to their true value.
Moreover, this distortion is inherently policy-dependent: two rules with similar nominal values may differ substantially in the extent to which they place treatment recommendations in regions where the observed reward is a poor proxy for the latent utility.
While existing OWL variants improve robustness to outliers, efficiency, or computational stability, they still largely treat the observed reward as the absolute target quantity to be optimized \citep{zhou2017residual,liu2018augmented,fu2019robust,YazzourhFreeman2024}.
Despite its practical and conceptual importance, the OWL literature contains little finite-sample theory for handling reward uncertainty in a way that directly informs how the rule should be learned.
Consequently, reward uncertainty should not be treated as a nuisance affecting only point estimation; it must enter the learning problem itself.

To address this issue, we propose PAC-Bayesian Reward-Certified Outcome Weighted Learning (PROWL).
We introduce a latent true reward $R^\ast$ and assume that the data provide an observable one-sided uncertainty certificate $U$ such that $R^\ast \ge R-U$ almost surely.
This induces a conservative reward $\underline{R}=(R-U)_+$ and, consequently, a policy-dependent lower bound on the true value.
Within this framework, our contributions are fourfold:
(i) we derive an exact certified reduction that converts reward-certified policy learning into a weighted classification problem;
(ii) we establish a PAC-Bayes finite-sample lower bound on the unobserved target value over a joint policy--nuisance space, enabling split-free learning;
(iii) we prove that the posterior exactly maximizing this bound takes the form of a general Bayes update;
and (iv) we develop a fully data-driven criterion for tuning the learning rate, along with a practical hinge-based surrogate posterior.
By incorporating the certificate directly into the learning objective, PROWL robustly targets the rule whose latent value can be most strongly certified from finite samples.

\subsection{Related Work}
\label{subsec:related-work}

Our proposed framework builds upon and connects two major theoretical domains:
PAC-Bayes theory and general Bayes inference.

PAC-Bayes theory provides nonasymptotic guarantees for randomized predictors by balancing empirical performance with a Kullback--Leibler complexity penalty relative to a prior \citep{mcallester1998some,seeger2002pac,mcallester2003pac,maurer2004note,catoni2007pac,rodriguez2024more}.
Originally developed for supervised learning, PAC-Bayes tools have recently proven useful for offline decision problems such as contextual bandits and offline policy learning \citep{sakhi2023pac,gouverneur2025refined}.
However, the existing PAC-Bayes policy-learning literature does not address the setting in which the reward entering the value function is itself uncertain, nor does it exploit the exact weighted-classification reduction that makes OWL attractive for ITR estimation.

Furthermore, our theoretical results intersect with the literature on general Bayes updating.
The framework of \citet{bissiri2016general} shows that coherent posterior updating can be based directly on a loss function, yielding a Gibbs or generalized Bayesian posterior.
This perspective is especially appealing when the policy objective is defined through a value or risk rather than through a fully specified likelihood, and it has motivated a growing literature on model-robust Bayesian learning and misspecification-aware updating \citep{holmes2017assigning,grunwald2017inconsistency,lyddon2019general,miller2019robust}.
A persistent challenge in that literature is the choice of learning rate, which controls the trade-off between prior information and empirical loss (see also \citet{syring2019calibrating,wu2023comparison,altamirano2023robust,tamano2025location}).
By contrast, our analysis shows that, in reward-certified OWL, the generalized posterior arises as the exact optimizer of a PAC-Bayes lower bound on the latent policy value.
This provides a finite-sample, decision-theoretic derivation of a general Bayes update rather than a purely axiomatic or heuristic one.

\subsection{Organization of the Article}
\label{subsec:organization}

The rest of this paper is organized as follows.
In Section \ref{sec:preliminaries}, we introduce the causal framework for ITRs, formalize reward uncertainty using one-sided certificates, and define the surrogate objective.
In Section \ref{sec:proposed-method}, we derive the PAC-Bayes lower bound, establish its equivalence to a general Bayes update, and present a practical surrogate implementation with data-driven learning rate calibration.
In Section \ref{sec:numerical-experiments}, we examine numerical experiments.
In Section \ref{sec:actual-data-experiments}, we apply the proposed method to actual data.
In Section \ref{sec:discussion}, we conclude this paper with a discussion of implications, limitations, and future work.
Finally, the appendices contain detailed technical proofs and extended configurations for the experiments.
\section{Preliminaries}
\label{sec:preliminaries}

\subsection{Problem Setup}
\label{subsec:problem-setup}

Let $(\Omega,\mathcal{F},\PP)$ be a probability space.
We observe the data $Z=(X,A,R)$, where $X\in\cX\subset\RR^p$ denotes the covariates, $A\in\{-1,1\}$ is the assigned treatment, and $R\in[0,1]$ is a proxy reward.
The target reward $R^\ast\in[0,1]$ is assumed to be unobserved.
The policy-learning sample $Z_{1:n}=\{(X_i,A_i,R_i)\}_{i=1}^n$ consists of independent and identically distributed (i.i.d.) draws, which are embedded in an underlying i.i.d.\ sequence $\{(X_i,A_i,R_i,R_i^\ast)\}_{i=1}^n$.
For $a\in\{-1,1\}$, let $R^a$ and $(R^\ast)^a$ denote the potential proxy and target rewards, respectively.
For a measurable ITR $d:\cX\to\{-1,1\}$, we define the counterfactual rewards under the regime as $R^d\coloneq R^{d(X)}$ and $(R^\ast)^d\coloneq (R^\ast)^{d(X)}$, with their corresponding expected values given by $V(d)\coloneq \EE[R^d]$ and $V^\ast(d)\coloneq \EE[(R^\ast)^d]$.

\begin{assumption}[Identification and positivity]
\label{ass:id}
    For each $a\in\{-1,1\}$, $R=R^a$ and $R^\ast=(R^\ast)^a$ almost surely on
    $\{A=a\}$.
    Furthermore, we assume conditional ignorability and strict overlap:
    \begin{equation*}
        \{R^1,R^{-1},(R^\ast)^1,(R^\ast)^{-1}\}\indep A\mid X
        ,
        \quad
        \pi(a\mid x)\coloneq \PP(A=a\mid X=x)\ge \varepsilon
    \end{equation*}
    for all $(x,a)\in\cX\times\{-1,1\}$ and some constant $\varepsilon\in(0,1/2]$.
\end{assumption}

Under Assumption~\ref{ass:id}, standard inverse-probability weighting yields the identification results:
\begin{equation*}
    V(d)=\EE\Biggl[
        \frac{R\1\{A=d(X)\}}{\pi(A\mid X)}
    \Biggr]
    ,
    \quad
    V^\ast(d)=\EE\Biggl[
        \frac{R^\ast\1\{A=d(X)\}}{\pi(A\mid X)}
    \Biggr]
    .
\end{equation*}
We utilize these identities throughout this paper and defer their standard proof to Appendix~\ref{app:identification}.

\subsection{Reward Uncertainty}
\label{subsec:reward-uncertainty}

\begin{assumption}[One-sided reward certificate]
\label{ass:certificate}
    There exists a measurable function
    $U:\cX\times\{-1,1\}\times[0,1]\to[0,1]$ such that $R^\ast \ge R-U(X,A,R)$ almost surely.
\end{assumption}

We define the certified lower reward as $\uR\coloneq (R-U)_+$ and its corresponding certified value as
\begin{equation*}
    V_{\uR}(d)
    \coloneq
    \EE\Biggl[
        \frac{\uR\1\{A=d(X)\}}{\pi(A\mid X)}
    \Biggr]
    .
\end{equation*}
Since $R^\ast\ge \uR$ almost surely, the true value of any policy is strictly bounded below by its certified value, i.e., $V^\ast(d)\ge V_{\uR}(d)$.
If $U\le R$ almost surely, then $\uR=R-U$ and the relationship simplifies to
\begin{equation*}
    V^\ast(d)
    \ge
    V(d)-\EE\Biggl[
        \frac{U\1\{A=d(X)\}}{\pi(A\mid X)}
    \Biggr]
    .
\end{equation*}
Consequently, reward uncertainty alters the ranking of policies only when the induced penalty is policy-dependent; this occurs, for instance, through treatment-specific certificates or active clipping in $\uR=(R-U)_+$.

\subsection{Policy Class}
\label{subsec:policy-class}

Let $(\cA,\cH)$ be a measurable nuisance-parameter space, and let $\nu_a:\cX\times\cA\to[0,1]$ be a jointly measurable function for each $a\in\{-1,1\}$.
For any $\alpha\in\cA$, we denote $\nu_{a,\alpha}(x)\coloneq \nu_a(x,\alpha)$.

Similarly, let $(\cB,\cG)$ be a measurable policy-parameter space, and let $f:\cX\times\cB\to\RR$ be jointly measurable.
For $\beta\in\cB$, we define $f_\beta(x)\coloneq f(x;\beta)$, $d_\beta(x)\coloneq \sgn\{f_\beta(x)\}$, adopting the convention $\sgn(0)=1$.
For a generic measurable score $g$, we write $d_g(x)\coloneq \sgn\{g(x)\}$.
We collect both the policy and nuisance parameters into a joint parameter space $\Theta\coloneq \cB\times\cA$.

A key structural advantage of our theoretical framework is its split-free nature: the nuisance parameter $\alpha$ is learned jointly with the policy parameter $\beta$ using the exact same policy-learning sample.
This property is fundamentally structural rather than algorithmic:
because the certified value identity introduced below is exact for every measurable $\alpha$, PAC-Bayes bounds can be applied directly on the joint space $\Theta$, provided that the prior is specified independently of $Z_{1:n}$.
\section{Proposed Method}
\label{sec:proposed-method}

In this section, we introduce our primary methodological framework and its corresponding estimator, which is PAC-Bayesian Reward-Certified Outcome Weighted Learning (PROWL).
To clearly distinguish between our structural theory and the practical estimator, we develop PROWL progressively.
First, we establish an exact certified value representation that translates the robust policy learning problem into a unified, cost-sensitive classification task (Section~\ref{subsec:certified-value-representation}).
Leveraging this split-free reduction, we construct a nonasymptotic PAC-Bayes lower bound on the unobserved target value.
We demonstrate that the optimal posterior distribution maximizing this bound inherently takes the form of a general Bayes update; this posterior, and its induced Gibbs policy, constitutes the foundational PROWL estimator (Section~\ref{subsec:pac-general-bayes}).
Finally, we bridge the gap to practical implementation by providing an automated, bound-based procedure for learning-rate selection and introducing a computationally efficient surrogate posterior (Section~\ref{subsec:learning-rate}).

\subsection{Certified Value Representation}
\label{subsec:certified-value-representation}

For $a\in\{-1,1\}$ and $\alpha\in\cA$, we define the doubly robust-style score
\begin{equation}
    \label{eq:Gamma-a}
    \Gamma_a^\alpha(Z)
    =
    \nu_{a,\alpha}(X)
    +
    \frac{\1\{A=a\}}{\pi(a\mid X)}\{\uR-\nu_{a,\alpha}(X)\}
    ,
\end{equation}
and, for a measurable ITR $d$, we define
\begin{equation}
    \label{eq:Gamma-d}
    \Gamma_d^\alpha(Z)
    =
    \sum_{a\in\{-1,1\}}\1\{d(X)=a\}\Gamma_a^\alpha(Z)
    .
\end{equation}
We then construct the certified advantage, the induced pseudo-label, and the absolute weight as follows:
\begin{equation}
\label{eq:DYW}
    D_\alpha(Z)\coloneq \Gamma_1^\alpha(Z)-\Gamma_{-1}^\alpha(Z)
    ,
    \quad
    Y_\alpha(Z)\coloneq \sgn\{D_\alpha(Z)\}
    ,
    \quad
    W_\alpha(Z)\coloneq |D_\alpha(Z)|
    .
\end{equation}
For each fixed $\alpha\in\cA$, let $\cR_{01}^\alpha(d)\coloneq \EE[W_\alpha(Z)\1\{Y_\alpha(Z)\neq d(X)\}]$ be the weighted $0$-$1$ risk, and define $C_\alpha^\sharp\coloneq 2^{-1}\EE[\Gamma_1^\alpha(Z)+\Gamma_{-1}^\alpha(Z)+W_\alpha(Z)]$.
We also define $\mu_a(x)\coloneq \EE[\uR\mid X=x,A=a]$ for $a\in\{-1,1\}$.

\begin{theorem}[Exact certified reduction]
\label{thm:exact-certified-reduction}
    For every measurable ITR $d$ and every $\alpha\in\cA$,
    \begin{equation*}
        V_{\uR}(d)
        =
        \EE[\Gamma_d^\alpha(Z)]
        =
        C_\alpha^\sharp-\cR_{01}^\alpha(d)
        .
    \end{equation*}
    Consequently, for each fixed $\alpha$, maximizing $V_{\uR}(d)$ over a measurable $d$ is strictly equivalent to minimizing $\cR_{01}^\alpha(d)$.
    Moreover, any Bayes rule for $V_{\uR}$ takes the form
    \begin{equation*}
        d_{\uR}^\ast(x)\in \argmax_{a\in\{-1,1\}}\mu_a(x)
        .
    \end{equation*}
\end{theorem}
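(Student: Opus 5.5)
The plan is to prove the three claims in order: the identity $V_{\uR}(d)=\EE[\Gamma_d^\alpha(Z)]$ by conditioning on $X$, the classification identity by a pointwise algebraic decomposition, and the Bayes-rule characterization by pointwise maximization.

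\textbf{First identity.} Fix $\alpha$ and $a\in\{-1,1\}$. I condition on $X$ and use that $\nu_{a,\alpha}(X)$ and $\pi(a\mid X)$ are $X$-measurable. This gives
\begin{equation*}
\EE\bigl[\Gamma_a^\alpha(Z)\mid X\bigr]=\nu_{a,\alpha}(X)+\frac{\pi(a\mid X)}{\pi(a\mid X)}\bigl\{\mu_a(X)-\nu_{a,\alpha}(X)\bigr\}=\mu_a(X).
\end{equation*}
The middle step uses $\EE[\1\{A=a\}\uR\mid X]=\pi(a\mid X)\mu_a(X)$. Strict overlap from Assumption~\ref{ass:id} ensures $\pi(a\mid X)\ge\varepsilon$, so the division is legitimate and the weights are bounded. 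Boundedness holds because $\uR\in[0,1]$ and $\nu\in[0,1]$, so every expectation is finite. The indicators $\1\{d(X)=a\}$ are $X$-measurable, so $\EE[\Gamma_d^\alpha(Z)]=\EE\bigl[\sum_a \1\{d(X)=a\}\mu_a(X)\bigr]$. The same conditioning applied to the definition of $V_{\uR}(d)$ gives $\EE\bigl[\uR\1\{A=d(X)\}/\pi(A\mid X)\mid X\bigr]=\sum_a\1\{d(X)=a\}\mu_a(X)$. Hence $V_{\uR}(d)=\EE[\Gamma_d^\alpha(Z)]$, for every measurable $\alpha$ and with no requirement that $\nu_{a,\alpha}$ be correct.

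\textbf{Classification identity.} I verify the following pointwise identity:
\begin{equation*}
\Gamma_d^\alpha=\tfrac12\bigl(\Gamma_1^\alpha+\Gamma_{-1}^\alpha\bigr)+\tfrac12\,d(X)\,D_\alpha.
\end{equation*}
Next I write $d(X)D_\alpha=d(X)Y_\alpha W_\alpha$. Since $d(X)Y_\alpha\in\{-1,1\}$, we have $d(X)Y_\alpha=1-2\,\1\{Y_\alpha\neq d(X)\}$. The convention $\sgn(0)=1$ is harmless here: when $D_\alpha=0$ we have $W_\alpha=0$, so the term vanishes regardless of the sign. Combining,
\begin{equation*}
\Gamma_d^\alpha=\tfrac12\bigl(\Gamma_1^\alpha+\Gamma_{-1}^\alpha+W_\alpha\bigr)-W_\alpha\1\{Y_\alpha\neq d(X)\}.
\end{equation*}
Taking expectations, which are all finite, gives $C_\alpha^\sharp-\cR_{01}^\alpha(d)$. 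Because $C_\alpha^\sharp$ does not depend on $d$, maximizing $V_{\uR}$ over $d$ is equivalent to minimizing $\cR_{01}^\alpha$, with the same set of maximizers and minimizers.

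\textbf{Bayes rule.} From the first step,
\begin{equation*}
V_{\uR}(d)=\EE\bigl[\mu_{d(X)}(X)\bigr]\le\EE\Bigl[\max_a\mu_a(X)\Bigr],
\end{equation*}
and equality holds iff $d(X)\in\argmax_a\mu_a(X)$ almost surely. A measurable selector exists, namely $\sgn\{\mu_1-\mu_{-1}\}$ with the stated tie convention, so the supremum is attained. Therefore any Bayes rule satisfies $d^\ast_{\uR}(x)\in\argmax_a\mu_a(x)$ for $P_X$-almost every $x$.

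\textbf{Main obstacle.} I expect the only genuinely delicate points to be measure-theoretic. The first is justifying the conditional expectations and integrability, which rests on overlap and boundedness. The second is phrasing ``any Bayes rule'' as an almost-sure statement, which I will make explicit. The algebra itself is routine.
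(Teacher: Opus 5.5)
Your proposal is correct and matches the paper's proof essentially step for step. Both arguments condition on $X$ to show $V_{\uR}(d)=\EE[\Gamma_d^\alpha(Z)]=\EE[\mu_{d(X)}(X)]$ for every $\alpha$, use the pointwise identity $\Gamma_d^\alpha=\tfrac12(\Gamma_1^\alpha+\Gamma_{-1}^\alpha+W_\alpha)-W_\alpha\1\{Y_\alpha\neq d(X)\}$ obtained from $d(X)Y_\alpha=1-2\1\{Y_\alpha\neq d(X)\}$, and characterize Bayes rules by pointwise maximization of $\mu_{d(X)}(X)$ with the selector $\sgn\{\mu_1-\mu_{-1}\}$.
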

See Appendix~\ref{app:pf-thm-exact-certified-reduction} for the proof.
Theorem~\ref{thm:exact-certified-reduction} demonstrates that maximizing the certified lower bound is structurally equivalent to solving a cost-sensitive classification problem, which allows nuisance parameter learning to be seamlessly absorbed into the joint parameter optimization without the need for computationally expensive sample splitting.

The first equality in Theorem~\ref{thm:exact-certified-reduction} represents the split-free identity crucial to our PAC-Bayes development; it holds exactly for every $\alpha$, allowing nuisance learning to be subsumed within the joint parameter $\theta=(\beta,\alpha)$.
The second equality provides the exact signed classification reduction for a fixed choice of nuisance parameter.

While we defer the discussion of special cases (certified OWL, certified RWL, and arm-specific augmentation) to Appendix~\ref{app:special-cases}, we explicitly establish the conditional variance calculations that theoretically motivate our optimal nuisance design below.
For $a\in\{-1,1\}$, let $\sigma_a^2(x)\coloneq \Var(\uR\mid X=x,A=a)$ and $p_a(x)\coloneq \pi(a\mid x)$.

\begin{proposition}[Conditional moments and variance-optimal nuisance choices]
\label{prop:variance-optimality}
    Fix $\alpha\in\cA$.
    The following statements hold for $P_X$-almost every $x\in\cX$.
    
    \begin{enumerate}[label=(\roman*)]
        \item For each $a\in\{-1,1\}$,
        \begin{equation}
        \label{eq:Gamma-mean-var}
            \begin{split}
                \EE[\Gamma_a^\alpha(Z)\mid X=x]
                &=
                \mu_a(x)
                ,
                \\
                \Var(\Gamma_a^\alpha(Z)\mid X=x)
                &=
                \frac{\sigma_a^2(x)}{p_a(x)}
                +
                \frac{1-p_a(x)}{p_a(x)}
                \{\mu_a(x)-\nu_{a,\alpha}(x)\}^2
                .
            \end{split}
        \end{equation}
        Consequently, the conditional variance of $\Gamma_a^\alpha(Z)$ is uniquely minimized at $x$ by selecting $\nu_{a,\alpha}(x)=\mu_a(x)$.
    
        \item Writing $p(x)\coloneq \pi(1\mid x)$ and $q(x)\coloneq \pi(-1\mid x)=1-p(x)$, we have
        \begin{equation}
        \label{eq:D-mean}
            \EE[D_\alpha(Z)\mid X=x]
            =
            \mu_1(x)-\mu_{-1}(x)
            ,
        \end{equation}
        and
        \begin{equation}
        \label{eq:D-var}
            \begin{split}
                &\Var(D_\alpha(Z)\mid X=x)
                \\
                &=
                \frac{\sigma_1^2(x)}{p(x)}
                +
                \frac{\sigma_{-1}^2(x)}{q(x)}
                +
                \frac{
                    \bigl[
                        q(x)\{\mu_1(x)-\nu_{1,\alpha}(x)\}
                        +
                        p(x)\{\mu_{-1}(x)-\nu_{-1,\alpha}(x)\}
                    \bigr]^2
                }{
                    p(x)q(x)
                }
                .
            \end{split}
        \end{equation}
        Therefore, the conditional variance of $D_\alpha(Z)$ is minimized at $x$ by any choice of nuisance pair satisfying the linear constraint
        \begin{equation}
            \label{eq:variance-min-condition}
            q(x)\nu_{1,\alpha}(x)+p(x)\nu_{-1,\alpha}(x)
            =
            q(x)\mu_1(x)+p(x)\mu_{-1}(x)
            .
        \end{equation}
    
        \item Restricted to the treatment-free subclass $\nu_{1,\alpha}=\nu_{-1,\alpha}=m_\alpha$, the unique minimizer of the conditional variance is
        \begin{equation}
            \label{eq:treatment-free-optimal}
            m^\dagger(x)
            =
            q(x)\mu_1(x)+p(x)\mu_{-1}(x)
            .
        \end{equation}
        In the context of balanced randomization, where $p(x)=q(x)=1/2$, this expression elegantly simplifies to
        \begin{equation}
            \label{eq:balanced-treatment-free-optimal}
            m^\dagger(x)
            =
            \EE[\uR\mid X=x]
            .
        \end{equation}
    \end{enumerate}
\end{proposition}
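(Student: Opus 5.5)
The plan is a direct conditional-moment computation given $X=x$, built on the decomposition $\uR=\mu_A(X)+\epsilon$ with $\EE[\epsilon\mid X,A]=0$ and $\EE[\epsilon^2\mid X=x,A=a]=\sigma_a^2(x)$. Under Assumption~\ref{ass:id} we may condition on $X=x$ and treat $A$ as a Bernoulli variable with $\PP(A=a\mid X=x)=p_a(x)$. Two facts drive everything: $\1\{A=1\}\1\{A=-1\}=0$, and $\1\{A=a\}^2=\1\{A=a\}$. Throughout, $\nu_{a,\alpha}(x)$ is deterministic given $x$, since $\alpha$ is fixed.

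\textbf{Part (i).} First compute the conditional mean. Using $\EE[\1\{A=a\}\uR\mid X=x]=p_a(x)\mu_a(x)$, we get $\EE[\Gamma_a^\alpha\mid X=x]=\nu+\{p_a\mu_a-p_a\nu\}/p_a=\mu_a$. For the variance, write
\[
\Gamma_a^\alpha-\mu_a=\frac{\1\{A=a\}}{p_a}\epsilon+\Bigl(\frac{\1\{A=a\}}{p_a}-1\Bigr)(\mu_a-\nu_{a,\alpha}).
\]
The cross term has zero conditional expectation because $\EE[\epsilon\mid X,A]=0$. The first square has expectation $\sigma_a^2/p_a$. The second square has expectation $(\mu_a-\nu)^2\,\Var(\1\{A=a\})/p_a^2=(\mu_a-\nu)^2(1-p_a)/p_a$. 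This yields \eqref{eq:Gamma-mean-var}. Since $1-p_a>0$ (because $\varepsilon\le 1/2$ and $p_a\le 1-\varepsilon<1$), the variance is a strictly convex quadratic in $\nu$, so its minimizer $\nu=\mu_a$ is unique.

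\textbf{Part (ii).} The mean follows by linearity from part (i). For the variance, expand $D_\alpha-(\mu_1-\mu_{-1})$ in the same way. The noise part is $\1\{A=1\}\epsilon/p-\1\{A=-1\}\epsilon/q$. The two indicators are disjoint, so its second moment is $\sigma_1^2/p+\sigma_{-1}^2/q$, and it is uncorrelated with the remaining part. Setting $\delta_a=\mu_a-\nu_{a,\alpha}$, the remaining part is $(\1\{A=1\}/p-1)\delta_1-(\1\{A=-1\}/q-1)\delta_{-1}$. Using $\1\{A=-1\}=1-\1\{A=1\}$, note that $\1\{A=-1\}/q-1=-(\1\{A=1\}-p)/q$, while $\1\{A=1\}/p-1=(\1\{A=1\}-p)/p$. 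Hence the remaining part equals $(\1\{A=1\}-p)\,(\delta_1/p+\delta_{-1}/q)$. Its second moment is
\[
pq\,(\delta_1/p+\delta_{-1}/q)^2=\frac{(q\delta_1+p\delta_{-1})^2}{pq},
\]
which gives \eqref{eq:D-var}. The first two terms of \eqref{eq:D-var} do not depend on $\alpha$, and the last term is nonnegative and vanishes exactly under \eqref{eq:variance-min-condition}, so that constraint characterizes the minimizers.

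\textbf{Part (iii).} Substitute $\nu_{1,\alpha}=\nu_{-1,\alpha}=m$ into \eqref{eq:variance-min-condition}. Since $p+q=1$, the left side reduces to $m$, giving the unique solution $m^\dagger$ in \eqref{eq:treatment-free-optimal}. When $p=q=1/2$, this becomes $(\mu_1+\mu_{-1})/2=\sum_a \pi(a\mid x)\mu_a(x)=\EE[\uR\mid X=x]$, which is \eqref{eq:balanced-treatment-free-optimal}.

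The main obstacle is the sign bookkeeping in the cross term of part (ii): collapsing both augmentation terms onto the single centered indicator $\1\{A=1\}-p$. Everything else is routine, modulo a brief remark that all statements hold for $P_X$-a.e.\ $x$, where the conditional quantities are defined via regular versions.
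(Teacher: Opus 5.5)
Your proposal is correct and follows the paper's proof essentially step for step: the same noise-plus-bias decomposition of $\Gamma_a^\alpha(Z)-\mu_a(x)$ with a vanishing cross term in (i), and the same substitution into \eqref{eq:variance-min-condition} in (iii). The only difference is cosmetic and occurs in (ii): you collapse both augmentation terms onto the single centered indicator $\1\{A=1\}-p(x)$ and read off $pq(\delta_1/p+\delta_{-1}/q)^2$ directly, whereas the paper expands $D_\alpha(Z)-\{\mu_1(x)-\mu_{-1}(x)\}$ separately on $\{A=1\}$ and $\{A=-1\}$ and then factors the resulting quadratic.
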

See Appendix~\ref{app:pf-prop-variance-optimality} for the proof.
Proposition~\ref{prop:variance-optimality} provides the theoretical foundation for optimal nuisance parameter design, showing that setting the nuisance functions to the conditional expected certified rewards uniquely minimizes the conditional variance of the certified score, thereby maximizing statistical efficiency.

\subsection{PAC-Bayes Bound and General Bayes Update}
\label{subsec:pac-general-bayes}

Let $\Pi_0\in\mathcal P(\Theta)$ be a prior distribution that is independent of the policy-learning sample $Z_{1:n}$.
For any $\theta=(\beta,\alpha)\in\Theta$, define the empirical certified value as $\hat{V}_{\uR,n}(d_\theta) \coloneq n^{-1}\sum_{i=1}^n \Gamma_{d_\beta}^\alpha(Z_i)$.
For a posterior $Q\ll\Pi_0$, let $d_Q$ denote the Gibbs policy that draws $\tilde{\theta}=(\tilde{\beta},\tilde{\alpha})\sim Q$ and deploys the rule $d_{\tilde{\beta}}$.
Its expected target value and empirical certified value are given by $V^\ast(d_Q)\coloneq \EE_{\theta\sim Q}[V^\ast(d_\beta)]$ and $\hat{V}_{\uR,n}(d_Q)\coloneq \EE_{\theta\sim Q}[\hat{V}_{\uR,n}(d_\theta)]$, respectively.
By Theorem~\ref{thm:exact-certified-reduction}, $\hat{V}_{\uR,n}(d_\theta)$ is an unbiased estimator of $V_{\uR}(d_\beta)$ for every fixed $\theta$.

Set $c_\varepsilon\coloneq \varepsilon^{-1}$, $K_\varepsilon\coloneq 2\varepsilon^{-1}-1$, and define the certified value loss as $\ell_{\mathrm{val}}(\theta;Z) \coloneq (c_\varepsilon-\Gamma_{d_\beta}^\alpha(Z))/K_\varepsilon$.
Since $\Gamma_{d_\beta}^\alpha(Z)\in[1-\varepsilon^{-1},\varepsilon^{-1}]$,
the loss $\ell_{\mathrm{val}}(\theta;Z)$ naturally takes values within the range $[0,1]$.

\begin{theorem}[PAC-Bayes lower bound and exact general Bayes update]
\label{thm:exact-pac-bayes}
    Fix $\delta\in(0,1)$ and $\eta>0$.
    With probability at least $1-\delta$, the inequality
    \begin{equation*}
        V^\ast(d_Q)
        \ge
        \hat{V}_{\uR,n}(d_Q)
        -
        K_\varepsilon
        \Biggl\{
            \frac{\KL(Q\|\Pi_0)+\log(1/\delta)}{\eta n}
            +
            \frac{\eta}{8}
        \Biggr\}
    \end{equation*}
    holds simultaneously for all posteriors $Q\ll\Pi_0$.
    
    For a fixed learning rate $\eta$, the right-hand side of this bound is uniquely maximized by the posterior
    \begin{equation*}
        \hat{Q}_\eta^{\mathrm{PROWL,val}}(\dd\theta)
        \propto
        \exp\Biggl\{
            -\eta\sum_{i=1}^n \ell_{\mathrm{val}}(\theta;Z_i)
        \Biggr\}\Pi_0(\dd\theta)
        \propto
        \exp\Biggl\{
            \frac{\eta n}{K_\varepsilon}\hat{V}_{\uR,n}(d_\theta)
        \Biggr\}\Pi_0(\dd\theta)
        .
    \end{equation*}
\end{theorem}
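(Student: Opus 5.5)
The plan is to derive the bound from a Catoni/McAllester-type PAC-Bayes inequality for bounded losses, then certify the value via Theorem~\ref{thm:exact-certified-reduction} and Assumption~\ref{ass:certificate}. Finally, the maximizing posterior is identified through the Donsker--Varadhan variational formula.

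\textbf{Step 1 (population loss is certified value).} For fixed $\theta=(\beta,\alpha)$, the loss $\ell_{\mathrm{val}}(\theta;Z)\in[0,1]$. By the first equality of Theorem~\ref{thm:exact-certified-reduction} applied with $d=d_\beta$, its population risk is
\[
L(\theta)\coloneq \EE[\ell_{\mathrm{val}}(\theta;Z)] = \{c_\varepsilon - V_{\uR}(d_\beta)\}/K_\varepsilon .
\]
This holds for every measurable $\alpha$, which is what allows us to work on the joint space $\Theta$ without sample splitting. The empirical risk is
\[
\hat L_n(\theta)=\{c_\varepsilon-\hat V_{\uR,n}(d_\theta)\}/K_\varepsilon .
\]

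\textbf{Step 2 (PAC-Bayes inequality).} For fixed $\theta$, the summands $\ell_{\mathrm{val}}(\theta;Z_i)$ are i.i.d.\ and lie in $[0,1]$. Hoeffding's lemma therefore gives
\[
\EE\exp\{\eta n(L(\theta)-\hat L_n(\theta))\}\le \exp(\eta^2 n/8).
\]
Integrating against $\Pi_0$ and using Fubini is legitimate because $\Pi_0$ does not depend on $Z_{1:n}$ and the loss is jointly measurable. Markov's inequality then shows that, with probability at least $1-\delta$,
\[
\log \int \exp\{\eta n(L-\hat L_n)\}\,\dd\Pi_0 \le \eta^2 n/8+\log(1/\delta).
\]
The Donsker--Varadhan formula $\EE_Q[h]\le \KL(Q\|\Pi_0)+\log\EE_{\Pi_0}[e^{h}]$ turns this into a statement holding simultaneously for all $Q\ll\Pi_0$:
\[
\EE_Q[L]\le \EE_Q[\hat L_n]+\frac{\KL(Q\|\Pi_0)+\log(1/\delta)}{\eta n}+\frac{\eta}{8}.
\]
Multiplying by $K_\varepsilon$ and rearranging yields
\[
\EE_Q[V_{\uR}(d_\beta)]\ge \hat V_{\uR,n}(d_Q)-K_\varepsilon\{\cdots\}.
\]
Because $R^\ast\ge\uR$ almost surely, the IPW identification in Section~\ref{subsec:reward-uncertainty} gives $V^\ast(d_\beta)\ge V_{\uR}(d_\beta)$ pointwise in $\beta$. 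Taking expectations under $Q$ gives $V^\ast(d_Q)\ge \EE_Q[V_{\uR}(d_\beta)]$, which completes the bound.

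\textbf{Step 3 (optimal posterior).} For fixed $\eta$, maximizing the right-hand side over $Q$ is equivalent to minimizing
\[
\EE_Q\Bigl[\eta\textstyle\sum_i \ell_{\mathrm{val}}(\theta;Z_i)\Bigr]+\KL(Q\|\Pi_0).
\]
To see this, multiply by $\eta n/K_\varepsilon$ and use the affine relation between $\hat V_{\uR,n}$ and $\hat L_n$. The Gibbs variational identity
\[
\EE_Q[h]+\KL(Q\|\Pi_0)=\KL(Q\|\hat Q)-\log\int e^{-h}\,\dd\Pi_0,
\]
with $h=\eta\sum_i\ell_{\mathrm{val}}$, shows that $\hat Q$ is the unique minimizer, since $\KL(Q\|\hat Q)=0$ if and only if $Q=\hat Q$. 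The normalizer is finite because $h$ is bounded. The second proportionality follows because $\sum_i\ell_{\mathrm{val}}=n c_\varepsilon/K_\varepsilon-(n/K_\varepsilon)\hat V_{\uR,n}(d_\theta)$, and the constant term cancels on normalization.

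\textbf{Main obstacle.} The delicate point is the measurability and uniformity bookkeeping in Step 2. The bound must hold simultaneously over all data-dependent $Q$ on the joint space $\Theta$, including posteriors over the nuisance parameter $\alpha$ learned from the same sample. This works only because the certified identity of Theorem~\ref{thm:exact-certified-reduction} holds exactly for every fixed $\alpha$, so the Hoeffding step is applied before $Q$ enters, and the prior is sample-independent. I would state this exchange-of-integrals argument explicitly.
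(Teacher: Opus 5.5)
Your proposal is correct and follows essentially the same route as the paper: Hoeffding's lemma for each fixed $\theta$, Fubini over the sample-independent prior, then Markov's inequality and Donsker--Varadhan for the simultaneous bound, followed by $V^\ast\ge V_{\uR}$, with the Gibbs/Donsker--Varadhan variational principle giving the unique maximizer. Your orientation of the exponent, $\eta n(L-\hat L_n)$, is the correct one; the paper's written proof applies Donsker--Varadhan to $s(\hat L_{\mathrm{val},n}-L_{\mathrm{val}})$ with $s=\eta n$, which as written gives the reverse inequality and should use $s=-\eta n$ (a harmless fix, since Hoeffding's lemma holds for all real $s$).
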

See Appendix~\ref{app:pf-thm-exact-pac-bayes} for the proof.
Theorem~\ref{thm:exact-pac-bayes} reveals that rigorously bounding the target reward uncertainty strictly yields a general Bayes update, providing strong theoretical justification that optimizing the nonasymptotic PAC-Bayes lower bound intrinsically recovers the outcome-weighted learning objective.

Theorem~\ref{thm:exact-pac-bayes} establishes an exact general Bayes update \citep{bissiri2016general} over the joint space $\Theta$.
This forms our central conceptual result: maximizing a nonasymptotic PAC-Bayes lower bound on the unobserved target value exactly yields a mathematically principled general Bayes posterior.

Furthermore, Theorem~\ref{thm:exact-certified-reduction} demonstrates that, conditional on a fixed nuisance parameter $\alpha$, the $\beta$-dependent component of the empirical loss is exactly $K_\varepsilon^{-1}\sum_{i=1}^n W_{\alpha,i}\1\{Y_{\alpha,i}\neq d_\beta(X_i)\}$, up to an $\alpha$-dependent additive constant.
Consequently, the induced update for the policy parameter exactly mirrors the certified advantage-weighted OWL general Bayes update, conditional on $\alpha$.

\subsection{Learning-rate Selection and Practical Computation}
\label{subsec:learning-rate}

To optimally tune the learning rate with respect to the value function, we define
\begin{equation*}
    \hat{L}_{\mathrm{val},n}(Q)
    \coloneq
    \frac{c_\varepsilon-\hat{V}_{\uR,n}(d_Q)}{K_\varepsilon}
    ,
    \quad
    \xi(n)\coloneq \exp\Bigl(\frac{1}{12n}\Bigr)\sqrt{\frac{\pi n}{2}}+2
    ,
\end{equation*}
and $c_{n,\delta}(Q) \coloneq (\KL(Q\|\Pi_0)+\log\{\xi(n)/\delta\})/ n$.
For a temperature scale $\gamma>0$, let
\begin{equation*}
    \mathrm{LCB}_{n,\delta}^{\mathrm{val}}(Q;\gamma)
    \coloneq
    c_\varepsilon
    -
    K_\varepsilon
    \frac{
    1-\exp\{-c_{n,\delta}(Q)-\gamma \hat{L}_{\mathrm{val},n}(Q)\}
    }{
    1-\exp(-\gamma)
    }
    .
\end{equation*}

\begin{proposition}[Exact-value certification and temperature selection]
\label{prop:temperature-selection}
    With probability at least $1-\delta$, the bound
    \begin{equation*}
        V^\ast(d_Q)\ge \mathrm{LCB}_{n,\delta}^{\mathrm{val}}(Q;\gamma)
    \end{equation*}
    holds simultaneously for all posteriors $Q\ll\Pi_0$ and all $\gamma>0$.
    For a fixed $\gamma$, the unique maximizer of the mapping $Q\mapsto \mathrm{LCB}_{n,\delta}^{\mathrm{val}}(Q;\gamma)$ is $\hat{Q}_\gamma^{\mathrm{PROWL,val}}$.
    Hence, for any finite grid $\cT_n\subset(0,\infty)$,
    \begin{equation*}
        \hat{\gamma}\in
        \argmax_{\gamma\in\cT_n}
        \mathrm{LCB}_{n,\delta}^{\mathrm{val}}
        \bigl(
            \hat{Q}_\gamma^{\mathrm{PROWL,val}};\gamma
        \bigr)
    \end{equation*}
    guarantees that
    \begin{equation*}
        V^\ast(d_{\hat{Q}_{\hat{\gamma}}^{\mathrm{PROWL,val}}})
        \ge
        \max_{\gamma\in\cT_n}
        \mathrm{LCB}_{n,\delta}^{\mathrm{val}}
        \bigl(
            \hat{Q}_\gamma^{\mathrm{PROWL,val}};\gamma
        \bigr)
        .
    \end{equation*}
\end{proposition}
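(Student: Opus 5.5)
Our plan is to combine the Seeger--Maurer PAC-Bayes-kl inequality with Catoni's convex relaxation, applied to the $[0,1]$-valued loss $\ell_{\mathrm{val}}$ on the joint space $\Theta$, and then convert back to value scale through Theorem~\ref{thm:exact-certified-reduction} and Assumption~\ref{ass:certificate}.

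\textbf{Step 1 (population loss).} For $\theta=(\beta,\alpha)$ set $L_{\mathrm{val}}(\theta)\coloneq \EE[\ell_{\mathrm{val}}(\theta;Z)]$. By Theorem~\ref{thm:exact-certified-reduction}, $L_{\mathrm{val}}(\theta)=(c_\varepsilon-V_{\uR}(d_\beta))/K_\varepsilon$, whatever $\alpha$ is. Since $V^\ast(d_\beta)\ge V_{\uR}(d_\beta)$, we get $V^\ast(d_Q)\ge c_\varepsilon-K_\varepsilon L_{\mathrm{val}}(Q)$, where $L_{\mathrm{val}}(Q)\coloneq\EE_Q L_{\mathrm{val}}(\theta)$. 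Because $\Pi_0$ does not depend on $Z_{1:n}$ and the losses are i.i.d.\ in $[0,1]$ for each fixed $\theta$, the PAC-Bayes-kl theorem of \citet{maurer2004note} applies directly on $\Theta$. We use its refined constant $\xi(n)$; the Stirling-type bound $\EE\exp\{n\,\kl(\hat L\|L)\}\le \xi(n)$ is the source of the $e^{1/(12n)}\sqrt{\pi n/2}+2$ term. It gives, with probability at least $1-\delta$ and simultaneously for all $Q\ll\Pi_0$,
\[
\kl\bigl(\hat L_{\mathrm{val},n}(Q)\,\big\|\,L_{\mathrm{val}}(Q)\bigr)\le c_{n,\delta}(Q).
\]

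\textbf{Step 2 (relaxation uniformly in $\gamma$).} We invoke the variational inequality $\kl(\hat p\|p)\ge \sup_{\gamma>0}\{-\log(1-p+pe^{-\gamma})-\gamma\hat p\}$. Since this is an algebraic consequence of the single kl event, no union bound over $\gamma$ is needed. Inverting it gives $L_{\mathrm{val}}(Q)\le \{1-\exp(-c_{n,\delta}(Q)-\gamma\hat L_{\mathrm{val},n}(Q))\}/(1-e^{-\gamma})$ for every $\gamma>0$. Multiplying by $-K_\varepsilon$, adding $c_\varepsilon$, and using Step 1 yields $V^\ast(d_Q)\ge\mathrm{LCB}^{\mathrm{val}}_{n,\delta}(Q;\gamma)$ for all $Q$ and $\gamma$ on the same event. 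The supremum identity is verified by convex duality of $\kl$ in its first argument; this is routine Catoni-bound algebra.

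\textbf{Step 3 (maximizer).} For fixed $\gamma$, the map $Q\mapsto\mathrm{LCB}$ is a strictly decreasing function of $\KL(Q\|\Pi_0)+\gamma n\hat L_{\mathrm{val},n}(Q)$, since $t\mapsto 1-e^{-t/n}$ is increasing. By the Donsker--Varadhan (Gibbs) variational principle, this objective is uniquely minimized by $Q\propto\exp\{-\gamma\sum_i\ell_{\mathrm{val}}(\theta;Z_i)\}\Pi_0$, which is $\hat Q^{\mathrm{PROWL,val}}_\gamma$ from Theorem~\ref{thm:exact-pac-bayes} with $\eta=\gamma$. Uniqueness follows from strict convexity of $\KL$; the normalizer is finite because the loss is bounded.

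\textbf{Step 4 (data-driven $\gamma$).} The bound holds simultaneously over all $(Q,\gamma)$ on one event, and $\hat\gamma$ and $\hat Q_{\hat\gamma}$ are data-dependent but are instances of $(Q,\gamma)$. Therefore
\[
V^\ast(d_{\hat Q_{\hat\gamma}})\ge\mathrm{LCB}(\hat Q_{\hat\gamma};\hat\gamma)=\max_{\gamma\in\cT_n}\mathrm{LCB}(\hat Q_\gamma;\gamma).
\]

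The main obstacle is Step 2: getting uniformity over the continuum of $\gamma$ for free. This requires going through the kl bound rather than through the $\eta$-indexed Hoeffding bound of Theorem~\ref{thm:exact-pac-bayes}, which holds only for a fixed $\eta$. It also requires checking that the inversion is exact, including the edge case where the kl constraint is inactive.
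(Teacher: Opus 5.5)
Your proposal is correct and follows essentially the same route as the paper. Both arguments proceed in the same order:
the Seeger--Langford kl bound with constant $\xi(n)$ (Lemma~\ref{lem:uniform-seeger-langford});
the Catoni relaxation via the variational formula at $\lambda=-\gamma$, holding on that single event for all $\gamma>0$ (Lemma~\ref{lem:uniform-catoni});
the linear conversion through Theorem~\ref{thm:exact-certified-reduction} and $V^\ast\ge V_{\uR}$;
strict monotonicity plus Donsker--Varadhan to identify the maximizer $\hat{Q}_\gamma^{\mathrm{PROWL,val}}$;
and evaluation of the uniform event at the data-dependent pair $(\hat{Q}_{\hat\gamma}^{\mathrm{PROWL,val}},\hat\gamma)$.
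Your inversion starting from $\kl\le c_{n,\delta}(Q)$ is the right bookkeeping; the paper's intermediate display in the proof of Lemma~\ref{lem:uniform-catoni} writes $n\,c_{n,\delta}(Q)$, which would not yield the stated $\exp\{-c_{n,\delta}(Q)-\gamma\hat{L}_n(Q)\}$ form and appears to be a typo.
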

See Appendix~\ref{app:pf-prop-temperature-selection} for the proof.
Proposition~\ref{prop:temperature-selection} establishes that the learning rate can be safely and automatically tuned by maximizing a fully observable empirical lower confidence bound, thereby entirely circumventing the hyperparameter selection issues that typically plague standard general Bayes methods.

For practical computation, we employ a smooth surrogate posterior family over the same joint space $\Theta$.
Define the certified hinge loss as $\ell_{\mathrm{h}}(\theta;Z)\coloneq W_\alpha(Z)(1-Y_\alpha(Z)f_\beta(X))_+$ for $\theta=(\beta,\alpha)\in\Theta$, and construct the corresponding practical posterior:
\begin{equation*}
    \hat{Q}_\lambda^{\mathrm{PROWL,h}}(\dd\theta)
    \propto
    \exp\Biggl\{
        -\lambda\sum_{i=1}^n \ell_{\mathrm{h}}(\theta;Z_i)
    \Biggr\}\Pi_0(\dd\theta)
    ,
    \quad
    \lambda>0
    .
\end{equation*}
This specific family is designed for computational feasibility rather than exact inference: while it may not strictly maximize the theoretical PAC-Bayes bound detailed in Theorem~\ref{thm:exact-pac-bayes}, it can be rigorously evaluated using the exact-value certificate provided in Proposition~\ref{prop:temperature-selection}.
If the prior density follows $\pi_0(\beta,\alpha)\propto \exp\{-\lambda_0 J(\beta,\alpha)\}$, then the Maximum a Posteriori (MAP) estimator under $\hat{Q}_\lambda^{\mathrm{PROWL,h}}$ efficiently solves
\begin{equation*}
    \min_{(\beta,\alpha)\in\Theta}
    \Biggl\{
        \frac{1}{n}\sum_{i=1}^n \ell_{\mathrm{h}}(\theta;Z_i)
        +
        \frac{\lambda_0}{\lambda n}J(\beta,\alpha)
    \Biggr\}
    .
\end{equation*}
While we defer the standard variational characterization of $\hat{Q}_\lambda^{\mathrm{PROWL,h}}$ to Appendix~\ref{app:surrogate}, we establish the theoretical soundness of the certified hinge loss below.
For a measurable score $g:\cX\to\RR$, define the exact certified hinge risk as $\cR_{\mathrm{h}}^\alpha(g) \coloneq \EE[W_\alpha(Z)\{1-Y_\alpha(Z)g(X)\}_+]$.

\begin{proposition}[Fisher consistency and excess-risk domination of the certified hinge loss]
\label{prop:hinge-calibration}
    Fix $\alpha\in\cA$ and define the following quantities for $P_X$-almost every $x$:
    \begin{equation*}
        u_\alpha(x)
        \coloneq
        \EE\bigl[
            W_\alpha(Z)\1\{Y_\alpha(Z)=1\}
            \mid X=x
        \bigr],
        \quad
        v_\alpha(x)
        \coloneq
        \EE\bigl[
            W_\alpha(Z)\1\{Y_\alpha(Z)=-1\}
            \mid X=x
        \bigr]
        .
    \end{equation*}
    Then the following properties hold:
    
    \begin{enumerate}[label=(\roman*)]
        \item For $P_X$-almost every $x$, $u_\alpha(x)-v_\alpha(x)=\mu_1(x)-\mu_{-1}(x)$.
    
        \item Define the optimal measurable score as
        \begin{equation*}
            g_{\alpha}^{\dagger}(x)
            \coloneq
            \begin{cases}
            1, & u_\alpha(x)>v_\alpha(x),\\
            -1, & u_\alpha(x)<v_\alpha(x),\\
            0, & u_\alpha(x)=v_\alpha(x).
            \end{cases}
        \end{equation*}
        This score $g_\alpha^\dagger$ universally minimizes $\cR_{\mathrm{h}}^\alpha(g)$ across all measurable choices of $g$, and the induced classification rule $d_{g_\alpha^\dagger}$ remains Bayes-optimal for the certified value $V_{\uR}$.
        Equivalently, the certified hinge loss is Fisher-consistent with respect to the certified value-maximization objective.
    
        \item For every measurable score $g$, the excess value is strictly upper-bounded by the excess hinge risk:
        \begin{equation}
        \label{eq:excess-hinge}
            V_{\uR}(d_{\uR}^\ast)-V_{\uR}(d_g)
            \le
            \cR_{\mathrm{h}}^\alpha(g)
            -
            \inf_h \cR_{\mathrm{h}}^\alpha(h),
        \end{equation}
        where the infimum is taken over all measurable scores $h:\cX\to\RR$.
    \end{enumerate}
\end{proposition}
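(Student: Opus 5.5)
The plan is to reduce everything to pointwise (in $x$) computations, using the identity $W_\alpha Y_\alpha = D_\alpha$, which holds because $Y_\alpha=\sgn(D_\alpha)$ and $W_\alpha=|D_\alpha|$; when $D_\alpha=0$ we have $W_\alpha=0$, so the convention $\sgn(0)=1$ is harmless.

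\textbf{Part (i).} Since $u_\alpha(x)-v_\alpha(x)=\EE[W_\alpha Y_\alpha\mid X=x]=\EE[D_\alpha\mid X=x]$, Proposition~\ref{prop:variance-optimality}(ii), equation \eqref{eq:D-mean}, gives $u_\alpha-v_\alpha=\mu_1-\mu_{-1}$ directly. Integrability holds because $\Gamma_a^\alpha$ is bounded by $\varepsilon^{-1}$ in absolute value.

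\textbf{Part (ii).} Condition on $X=x$. For the score value $t=g(x)$, the conditional hinge risk is
\[
\phi_x(t)=u_\alpha(x)(1-t)_+ + v_\alpha(x)(1+t)_+ ,
\]
which is piecewise linear and convex in $t$. I would check the three regions $t\le -1$, $t\in[-1,1]$, and $t\ge 1$. On $[-1,1]$ the function equals $u+v+t(v-u)$, and it is nondecreasing for $t>1$ and nonincreasing for $t<-1$. Hence the minimum equals $u+v-|u-v|=2\min(u,v)$ and is attained at $t=\sgn(u-v)$; when $u=v$ any $t\in[-1,1]$ works, so $t=0$ is allowed. Therefore $g_\alpha^\dagger$ minimizes $\phi_x$ pointwise, so it minimizes $\cR_{\mathrm h}^\alpha$ after integrating. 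One detail is measurability of $g_\alpha^\dagger$, which follows because $u_\alpha$ and $v_\alpha$ are versions of conditional expectations. By part (i), $d_{g_\alpha^\dagger}(x)=\sgn(u-v)$ selects $\argmax_a\mu_a(x)$ whenever $\mu_1\neq\mu_{-1}$; ties are immaterial, including the convention $\sgn(0)=1$. The Bayes characterization in Theorem~\ref{thm:exact-certified-reduction} then yields Bayes optimality for $V_{\uR}$.

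\textbf{Part (iii).} This is the step needing some care. By Theorem~\ref{thm:exact-certified-reduction}, $V_{\uR}(d)=\EE[\mu_{d(X)}(X)]$, which follows from $\EE[\Gamma_d^\alpha]$ and Proposition~\ref{prop:variance-optimality}(i). So the left side of \eqref{eq:excess-hinge} equals
\[
\EE\bigl[\,|\mu_1-\mu_{-1}|(X)\,\1\{d_g(X)\neq \sgn(\mu_1-\mu_{-1})(X)\}\bigr]
=\EE\bigl[\,|u_\alpha-v_\alpha|(X)\,\1\{\cdots\}\bigr].
\]
By part (ii), the right side equals $\EE[\phi_X(g(X))-2\min(u_\alpha,v_\alpha)(X)]$. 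It therefore suffices to prove the pointwise inequality
\[
\phi_x(t)-2\min(u,v)\ge |u-v|\,\1\{\sgn(t)\neq\sgn(u-v)\}.
\]

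The pointwise inequality is a case check. Assume $u>v$ (the case $u<v$ is symmetric, and $u=v$ is trivial since the right side vanishes). A misclassification means $t<0$. Using $\phi_x(t)\ge u(1-t)+v(1+t)$ for all $t$, we get $\phi_x(t)-2v\ge (u-v)(1-t)\ge u-v$ because $t<0$. For the symmetric case the convention $\sgn(0)=1$ matters: with $u<v$ and $t=0$ we get $\phi_x(0)-2u=v-u$, which matches, and for $t\ge 0$ we have $(v-u)(1+t)\ge v-u$.

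Integrating the pointwise inequality gives \eqref{eq:excess-hinge}. The main obstacle is keeping the boundary conventions and the versions of the conditional expectations consistent; the inequalities themselves are elementary.
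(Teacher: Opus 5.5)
Your proposal is correct and follows the paper's proof essentially step for step. Part (i) uses $W_\alpha Y_\alpha = D_\alpha$ together with \eqref{eq:D-mean}; part (ii) is the piecewise analysis of $\phi_x(t)=u(1-t)_+ + v(1+t)_+$; and part (iii) rests on the same pointwise domination of the excess weighted $0$--$1$ risk by the excess hinge risk, followed by integration. The differences are cosmetic: you write the excess value directly as $\EE[|\mu_1-\mu_{-1}|\1\{\cdot\}]$ instead of going through $V_{\uR}=C_\alpha^\sharp-\cR_{01}^\alpha$, and the global bound $\phi_x(t)\ge u(1-t)+v(1+t)$ lets you skip the paper's separate subcases $t<-1$ and $t>1$.
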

See Appendix~\ref{app:pf-prop-hinge-calibration} for the proof.
Proposition~\ref{prop:hinge-calibration} reveals that the certified hinge loss serves as a Fisher-consistent surrogate for the unobserved target value, guaranteeing that optimizing this convex surrogate inherently maximizes the exact certified lower value.

\begin{corollary}[Posterior-family exact-value selection]
\label{cor:family-selection}
    Let $\mathcal{Q}_n=\{\hat{Q}_\lambda:\lambda\in\Lambda_n\}$ be any finite,
    data-dependent family of posteriors satisfying $\hat{Q}_\lambda\ll\Pi_0$ almost surely for every $\lambda\in\Lambda_n$.
    Then, on the high-probability event defined in Proposition~\ref{prop:temperature-selection}, we have
    \begin{equation*}
        V^\ast(d_{\hat{Q}_\lambda})
        \ge
        \mathrm{LCB}_{n,\delta}^{\mathrm{val}}(\hat{Q}_\lambda;\gamma)
        \quad
        \text{for all }(\lambda,\gamma)\in\Lambda_n\times\cT_n
        .
    \end{equation*}
    Consequently, any chosen maximizer $(\hat\lambda,\hat{\gamma})\in \argmax_{(\lambda,\gamma)\in\Lambda_n\times\cT_n}\mathrm{LCB}_{n,\delta}^{\mathrm{val}}(\hat{Q}_\lambda;\gamma)$ rigorously satisfies
    \begin{equation*}
        V^\ast(d_{\hat{Q}_{\hat\lambda}})
        \ge
        \max_{(\lambda,\gamma)\in\Lambda_n\times\cT_n}
        \mathrm{LCB}_{n,\delta}^{\mathrm{val}}(\hat{Q}_\lambda;\gamma)
        .
    \end{equation*}
\end{corollary}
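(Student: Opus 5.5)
The corollary follows from Proposition~\ref{prop:temperature-selection}. The key point is that its high-probability event is a single event, and it does not depend on which posterior is examined. Let $E$ denote the event on which
\[
V^\ast(d_Q)\ge \mathrm{LCB}_{n,\delta}^{\mathrm{val}}(Q;\gamma)
\quad\text{for all } Q\ll\Pi_0 \text{ and all } \gamma>0 .
\]
By Proposition~\ref{prop:temperature-selection}, $\PP(E)\ge 1-\delta$. This guarantee is uniform over all absolutely continuous posteriors. Such uniformity is characteristic of PAC-Bayes bounds: they come from a Donsker--Varadhan change of measure applied to a single exponential-moment inequality under the data-independent prior $\Pi_0$, and this inequality is then union-bounded or otherwise made uniform over $\gamma$. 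So no additional probability cost is incurred when $Q$ is allowed to depend on the data.

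First, fix an outcome $\omega\in E$, and additionally assume $\hat{Q}_\lambda(\omega)\ll\Pi_0$ for every $\lambda\in\Lambda_n$. This extra condition holds almost surely, and since $\Lambda_n$ is finite, the intersection over $\lambda$ is still almost sure. Hence we lose nothing beyond a null set.

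For such $\omega$, each realized $\hat{Q}_\lambda(\omega)$ is simply some posterior absolutely continuous with respect to $\Pi_0$. Instantiating the uniform statement at $Q=\hat{Q}_\lambda(\omega)$ and $\gamma\in\cT_n$ then gives the first display for every pair $(\lambda,\gamma)\in\Lambda_n\times\cT_n$.

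One technicality should be checked: $\mathrm{LCB}$ and $V^\ast(d_Q)$ must be evaluated at a random $Q$. Proposition~\ref{prop:temperature-selection} asserts a deterministic inequality for each $Q$ on $E$, so pointwise substitution is legitimate. The only care needed is to make sure $E$ was defined as the event where the inequality holds for all $Q$ simultaneously, which is exactly how the proposition states it.

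Second, choose any maximizer $(\hat\lambda,\hat\gamma)$ over the finite set $\Lambda_n\times\cT_n$; a maximizer exists because the set is finite. Applying the first display at $(\hat\lambda,\hat\gamma)$ gives
\[
V^\ast(d_{\hat{Q}_{\hat\lambda}})\ge \mathrm{LCB}_{n,\delta}^{\mathrm{val}}(\hat{Q}_{\hat\lambda};\hat\gamma).
\]
By the definition of the maximizer, the right-hand side equals the maximum over the grid, which is the second display.

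The main obstacle is conceptual rather than computational: justifying that plugging in data-dependent $\hat Q_\lambda$ and $\hat\gamma$ does not break the guarantee. I would address it by citing the simultaneity in Proposition~\ref{prop:temperature-selection}, both over $Q$ and over $\gamma$, and by noting that measurability of the selected index is irrelevant, since the inequality holds pointwise on $E$.
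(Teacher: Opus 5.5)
Your proposal is correct and follows the paper's proof essentially verbatim: you invoke the single event of Proposition~\ref{prop:temperature-selection}, which is simultaneous in $(Q,\gamma)$, instantiate it at the realized $\hat{Q}_\lambda$ and each $\gamma\in\cT_n$, and then evaluate at the chosen maximizer $(\hat\lambda,\hat\gamma)$. Your explicit handling of the almost-sure absolute continuity, by intersecting finitely many full-measure events, is a small extra bit of care that the paper leaves implicit.
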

See Appendix~\ref{app:pf-cor-family-selection} for the proof.
Corollary~\ref{cor:family-selection} ensures that our automated tuning strategy remains robust and theoretically sound even when the optimization search space is restricted to a discrete, computationally tractable grid of practical surrogate posteriors.

\subsection{Learned Certificates via Auxiliary Calibration}
\label{subsec:learned-certificates}

While the core theory treats $U$ as a known, valid certificate, a more practical approach when $U$ must be actively learned is to calibrate it using an auxiliary sample that is completely independent of $Z_{1:n}$.
Let $\cD_m^{\mathrm{cal}}$ denote such an auxiliary sample, and let $\hat{U}=\mathfrak{A}_U(\cD_m^{\mathrm{cal}})$ be the estimated certificate.

\begin{corollary}[Learned certificates via auxiliary calibration]
\label{cor:learned-certificate}
    Suppose there exists an event $\cE_{\mathrm{cert}}\in \sigma(\cD_m^{\mathrm{cal}})$ with $\PP(\cE_{\mathrm{cert}})\ge 1-\alpha_{\mathrm{cert}}$ such that, conditional on $\cE_{\mathrm{cert}}$, the learned function $\hat{U}$ is measurable, bounded in $[0,1]$, and satisfies $R^\ast \ge R-\hat{U}(X,A,R)$ almost surely under the policy-learning distribution.
    Then, conditionally on $\cD_m^{\mathrm{cal}}$ and $\cE_{\mathrm{cert}}$, all preceding theoretical results hold seamlessly by simply substituting $U$ with $\hat{U}$ and $\uR$ with $\uR_{\hat{U}}\coloneq (R-\hat{U})_+$.
    In particular, every $1-\delta$ PAC-Bayes bound established above translates to a valid joint-probability guarantee with a confidence level of at least $(1-\alpha_{\mathrm{cert}})(1-\delta)$.
\end{corollary}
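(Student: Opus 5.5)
The plan is a conditioning argument. The calibration sample $\cD_m^{\mathrm{cal}}$ is independent of $Z_{1:n}$ and of the underlying sequence $\{(X_i,A_i,R_i,R_i^\ast)\}$. Hence, conditional on $\cD_m^{\mathrm{cal}}$, the learned map $\hat{U}=\mathfrak{A}_U(\cD_m^{\mathrm{cal}})$ is a fixed, deterministic function. The policy-learning sample remains i.i.d.\ with the same law. The prior $\Pi_0$ is still independent of $Z_{1:n}$; if it is allowed to depend on $\cD_m^{\mathrm{cal}}$, it becomes fixed after conditioning as well.

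Work on the event $\cE_{\mathrm{cert}}\in\sigma(\cD_m^{\mathrm{cal}})$. There $\hat{U}$ is measurable, takes values in $[0,1]$, and satisfies $R^\ast\ge R-\hat{U}(X,A,R)$ almost surely. This is exactly Assumption~\ref{ass:certificate} with $U$ replaced by $\hat{U}$, and Assumption~\ref{ass:id} is untouched. The next step is to check that every earlier result uses $U$ only through Assumption~\ref{ass:certificate} and the boundedness $\uR\in[0,1]$. This covers the following results.
\begin{enumerate}[label=(\roman*)]
\item Theorem~\ref{thm:exact-certified-reduction} and Proposition~\ref{prop:variance-optimality} are population identities for a fixed measurable $\uR$.
\item Theorem~\ref{thm:exact-pac-bayes}, Proposition~\ref{prop:temperature-selection} and Corollary~\ref{cor:family-selection} use $V^\ast(d)\ge V_{\uR}(d)$, which follows from $R^\ast\ge\uR$. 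They also use $\ell_{\mathrm{val}}\in[0,1]$, which follows from $\uR_{\hat U}\in[0,1]$ and $\nu_{a,\alpha}\in[0,1]$.
\item Proposition~\ref{prop:hinge-calibration} is again a population statement for a fixed $\uR$.
\end{enumerate}
Each result therefore holds verbatim under the conditional law given $\cD_m^{\mathrm{cal}}$, for every realization in $\cE_{\mathrm{cert}}$, with $\uR_{\hat U}$ in place of $\uR$.

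For the probability statement, let $\cE_{\mathrm{PB}}$ be the event on which a given PAC-Bayes bound holds. By the previous step, $\PP(\cE_{\mathrm{PB}}\mid\cD_m^{\mathrm{cal}})\ge1-\delta$ almost surely on $\cE_{\mathrm{cert}}$. The tower property then gives
\begin{equation*}
\PP(\cE_{\mathrm{PB}}\cap\cE_{\mathrm{cert}})
=\EE\bigl[\1_{\cE_{\mathrm{cert}}}\PP(\cE_{\mathrm{PB}}\mid\cD_m^{\mathrm{cal}})\bigr]
\ge(1-\delta)\PP(\cE_{\mathrm{cert}})
\ge(1-\alpha_{\mathrm{cert}})(1-\delta).
\end{equation*}

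The main obstacle is measure-theoretic rigor in the conditioning step. One must justify that the conditional law of $Z_{1:n}$ given $\cD_m^{\mathrm{cal}}$ equals its unconditional law. One must also justify that conditional probabilities of events defined through a data-dependent function $\hat U$ can be computed by freezing $\hat U$, which is a substitution or disintegration lemma. I would handle this by writing $\hat U(\cdot)=u(\cdot;\cD_m^{\mathrm{cal}})$ with $u$ jointly measurable. Then the independence (freezing) lemma applies: for independent $S$ and $T$ and jointly measurable $h$, $\EE[h(S,T)\mid T]=\EE[h(S,t)]|_{t=T}$. Here $h$ is the indicator of the bound's event, so the measurability of that indicator must also be checked.
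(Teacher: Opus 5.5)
Your proposal is correct and follows essentially the same route as the paper: condition on $\cD_m^{\mathrm{cal}}$, note that on $\cE_{\mathrm{cert}}$ the frozen $\hat{U}$ satisfies Assumption~\ref{ass:certificate} so every earlier result applies verbatim with $\uR_{\hat{U}}$, and then integrate $\1_{\cE_{\mathrm{cert}}}\PP(\cE_{\mathrm{PB}}\mid\cD_m^{\mathrm{cal}})\ge(1-\delta)\1_{\cE_{\mathrm{cert}}}$ via the tower property. Your freezing-lemma argument with a jointly measurable $u(\cdot;\cD_m^{\mathrm{cal}})$ makes rigorous a step the paper only asserts, and the measurability concern you raise is settled by taking $\cE_{\mathrm{PB}}$ to be the measurable Markov-inequality event from the PAC-Bayes proofs, on which the bound holds simultaneously for all $Q\ll\Pi_0$.
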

See Appendix~\ref{app:pf-cor-learned-certificate} for the proof.
Corollary~\ref{cor:learned-certificate} demonstrates that our theoretical framework effortlessly scales to realistic scenarios where the exact bounds are unknown, allowing researchers to plug in data-driven certificates without compromising the integrity of the rigorous PAC-Bayes guarantees.
\section{Numerical Experiments}
\label{sec:numerical-experiments}

\subsection{Experimental Setup}
\label{subsec:experimental-setup}

We evaluate the proposed method using two complementary synthetic benchmarks for single-stage ITR learning, characterized by a proxy reward $R$, an unobserved target reward $R^\ast$, and an oracle one-sided reward certificate.
For each arm $a\in\{-1,1\}$, the simulator specifies a base certificate envelope $U_{0,a}(x)$ and defines, at an uncertainty level $\rho\ge 0$,
\begin{equation*}
    U_{\rho,a}(x)
    =
    \min\{\rho\,U_{0,a}(x),\bar{u}_a\}
    ,
    \quad
    \uR^a
    =
    (R^a-U_{\rho,a}(X))_+
    .
\end{equation*}
Thus, the parameter $\rho$ controls the severity of the reward uncertainty while preserving its underlying spatial structure.
We employ oracle certificates throughout the synthetic study to ensure that any performance discrepancies purely reflect differences in policy learning capabilities rather than certificate-estimation errors.
In these setups, the true logging propensity is assumed to be known to the learner.
Scenario 1 utilizes balanced randomization, whereas Scenario 2 employs a covariate-dependent logging propensity to emulate non-uniform clinical allocation.
Unless otherwise stated, all configurations are averaged over $30$ Monte Carlo replications and evaluated on an independent test sample of size $10{,}000$.

Scenario 1 serves as a deliberately benign baseline.
Characterized by a linear treatment boundary and policy-invariant reward uncertainty, it represents an environment where standard ITR estimators should naturally excel.
Its primary role is diagnostic: determining whether robust certification introduces an unnecessary performance penalty (robustness cost) in a simple, well-specified setting.

Conversely, Scenario 2 is constructed to be substantially more challenging and realistic.
It incorporates correlated clinical and biomarker covariates, non-linear outcome surfaces, a genuinely beneficial treated subgroup confined to a moderate-risk clinical window, and treatment-specific proxy optimism concentrated in clinically vulnerable regions.
In this environment, directly optimizing the proxy reward can materially conflict with optimizing the latent target reward, perfectly encapsulating the regime that motivates the development of PROWL.

We conduct two primary sets of experiments: a $\rho$-sweep and an $N$-sweep.
In the $\rho$-sweep, we fix the sample size at $N=200$ and vary the uncertainty level
\begin{equation*}
    \rho\in\{0,0.25,0.5,0.75,1.0,1.25,1.5,1.75,2.0\}
    .
\end{equation*}
In the $N$-sweep, we fix $\rho=1.5$ and vary the sample size $N\in\{100,200,500,1000,2000\}$.
To isolate the fundamental effect of reward certification from function-class mismatches, the main benchmark utilizes a shared, scenario-specific score class across all methods:
a standardized linear score with an intercept in Scenario 1, and a commonly enriched clinical basis in Scenario 2.
All exact data-generating formulas, nuisance specifications, and hyperparameter tuning details are deferred to Appendix~\ref{app:subsec:detailed-experimental-setup}.

\subsection{Baselines}
\label{subsec:baselines}

We benchmark PROWL against five established methods:
(i) Standard OWL \citep{Zhao2012}, the canonical direct outcome-weighted classification approach;
(ii) Residual Weighted Learning (RWL) \citep{zhou2017residual}, which utilizes treatment-free residualization to stabilize weights;
(iii) Q-learning \citep{qian2011performance,schulte2015q}, a standard regression-based indirect method;
(iv) Policy Tree \citep{sverdrup2020policytree}, which provides an interpretable empirical-welfare benchmark; and
(v) PROWL ($U=0$), an ablation of our proposed method that retains the PAC-Bayesian learning machinery but removes the reward certification by setting $\uR=R$.
We refer to our full proposed method simply as PROWL.

For OWL, RWL, Q-learning, and Policy Tree, we report outcomes under two distinct reward families.
The $R$-family trains each method using the raw proxy reward $R$.
The $\uR$-family substitutes $R$ with the certified reward $\uR=(R-U)_+$ while keeping the respective learning algorithms otherwise unchanged.
Comparing the methods against the $R$-family isolates the overall cost of ignoring reward uncertainty.
Conversely, the comparison against the $\uR$-family provides a more stringent test, evaluating whether PROWL's performance gains stem from its principled PAC-Bayesian learning mechanism rather than merely from a naive substitution of $R$ with the conservative surrogate $\uR$.

\subsection{Evaluation Metrics}
\label{subsec:evaluation-metrics}

Let $d^\ast\in\argmax_d V^\ast(d)$ and $d_{\uR}^\ast\in\argmax_d V_{\uR}(d)$ denote the oracle policies for the latent target value and the certified lower value, respectively.
For any learned rule $\hat{d}$, we report:
\begin{equation*}
    \mathrm{TargetRegret}(\hat{d})
    =
    V^\ast(d^\ast)-V^\ast(\hat{d})
    ,
\end{equation*}
and
\begin{equation*}
    \mathrm{RobustRegret}(\hat{d})
    =
    V_{\uR}(d_{\uR}^\ast)-V_{\uR}(\hat{d})
    .
\end{equation*}

Target regret captures the latent scientific decision problem defined by $R^\ast$, making it the primary metric of substantive interest.
In contrast, robust regret assesses the certified lower-value objective explicitly optimized by PROWL, quantifying the certifiable utility left uncaptured by the learned policy.
For both metrics, lower values indicate superior performance.
Across all simulations, these regrets are computed on the independent test sample utilizing the true oracle conditional means from the simulator, rather than relying on noisy realized test outcomes.

\subsection{Results}
\label{subsec:results}

Figures~\ref{fig:scenario1-rho-sweep-main}--\ref{fig:scenario2-n-sweep-main} summarize the primary comparisons.
In each figure, the left panel plots target regret against the $R$-family baselines, while the right panel plots robust regret against the $\uR$-family baselines.
Complementary cross-family panels are provided in Appendix Figures~\ref{app:fig:scenario1-rho-sweep-comp}--\ref{app:fig:scenario2-n-sweep-comp}.
Additionally, certificate diagnostics and a split-free ablation study are detailed in Appendix~\ref{app:subsec:num-results}.

Figure~\ref{fig:scenario1-rho-sweep-main} presents the results for Scenario 1 across varying uncertainty levels ($\rho$).
As anticipated for a benign, policy-invariant environment, the best-performing methods are practically indistinguishable.
Although standard OWL and Policy Tree exhibit lower efficiency, PROWL closely matches the performance of Q-learning and RWL throughout the entire sweep.
This confirms that in a well-specified regime where standard methods already excel, incorporating reward certification does not impose a noticeable performance penalty.
 
Figure~\ref{fig:scenario2-rho-sweep-main} details the $\rho$-sweep results for Scenario 2, where the comparative differences become substantial.
As $\rho$ increases, the regret of the classical baselines rises correspondingly, with the uncertified PROWL ($U=0$) ablation exhibiting the most significant deterioration.
In stark contrast, PROWL consistently maintains low target and robust regrets.
The right panel is particularly revealing: even when compared against the $\uR$-family plug-in baselines, PROWL achieves noticeably superior performance at high uncertainty levels, showing the clearest separation around $\rho=2.0$.
This underscores that PROWL's empirical advantage derives from its core PAC-Bayesian learning principles rather than a simple substitution of $R$ with $\uR$.

Figure~\ref{fig:scenario1-n-sweep-main} illustrates the $N$-sweep for Scenario 1.
As the sample size increases, all methods naturally improve, and the performance gap among the leading estimators narrows.
While standard OWL and Policy Tree remain comparatively weaker, PROWL consistently tracks the top performers (Q-learning and RWL).
This reinforces the qualitative takeaway from the $\rho$-sweep: PROWL remains highly competitive with established methods even in simple, well-specified scenarios.

Figure~\ref{fig:scenario2-n-sweep-main} reports the $N$-sweep for Scenario 2.
While regret universally decreases as $N$ grows, PROWL consistently ranks as the best or near-best estimator across all sample sizes, achieving the lowest overall regrets.
This robust performance is evident in both the latent target and the certified objectives, holding up even against stronger $\uR$-family baselines.
As detailed in Appendix~\ref{app:subsec:num-results}, these conclusions remain intact across the complementary metrics: although utilizing $\uR$ instead of $R$ improves classical methods, PROWL yields further substantial gains.
The additional certificate diagnostics and the split-free ablation study provided in the appendix further corroborate that PROWL’s efficacy is fundamentally rooted in learning directly against the reward certificate.

\begin{figure*}[tb]
\vskip 0.2in
\begin{center}
\centerline{\includegraphics[width=\columnwidth]{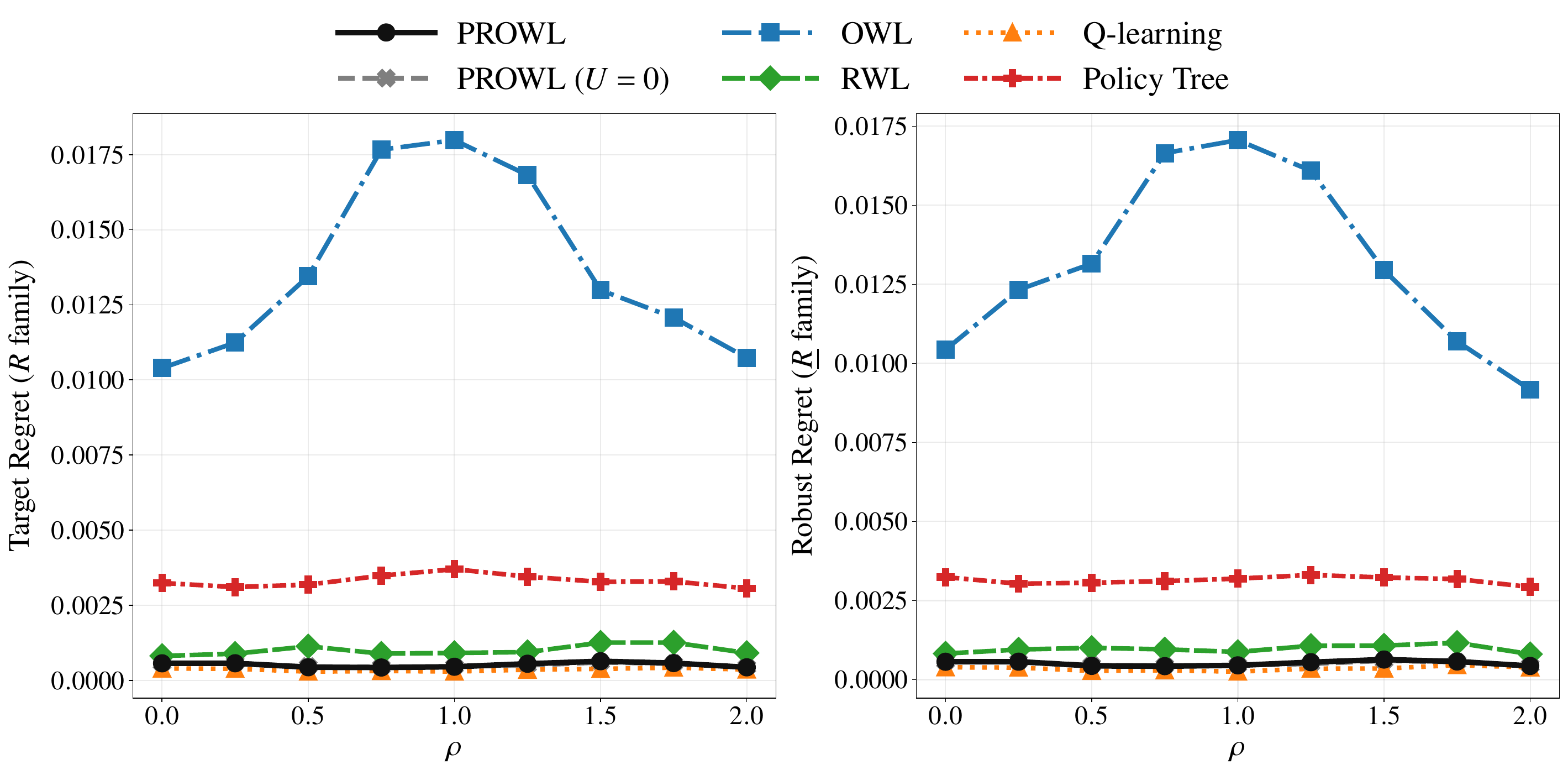}}
\caption{
Performance comparison across varying uncertainty levels ($\rho$) for Scenario 1.
The left panel reports target regret against the $R$-family baselines, and the right panel reports robust regret against the $\uR$-family baselines.
}
\label{fig:scenario1-rho-sweep-main}
\end{center}
\vskip -0.2in
\end{figure*}

\begin{figure*}[tb]
\vskip 0.2in
\begin{center}
\centerline{\includegraphics[width=\columnwidth]{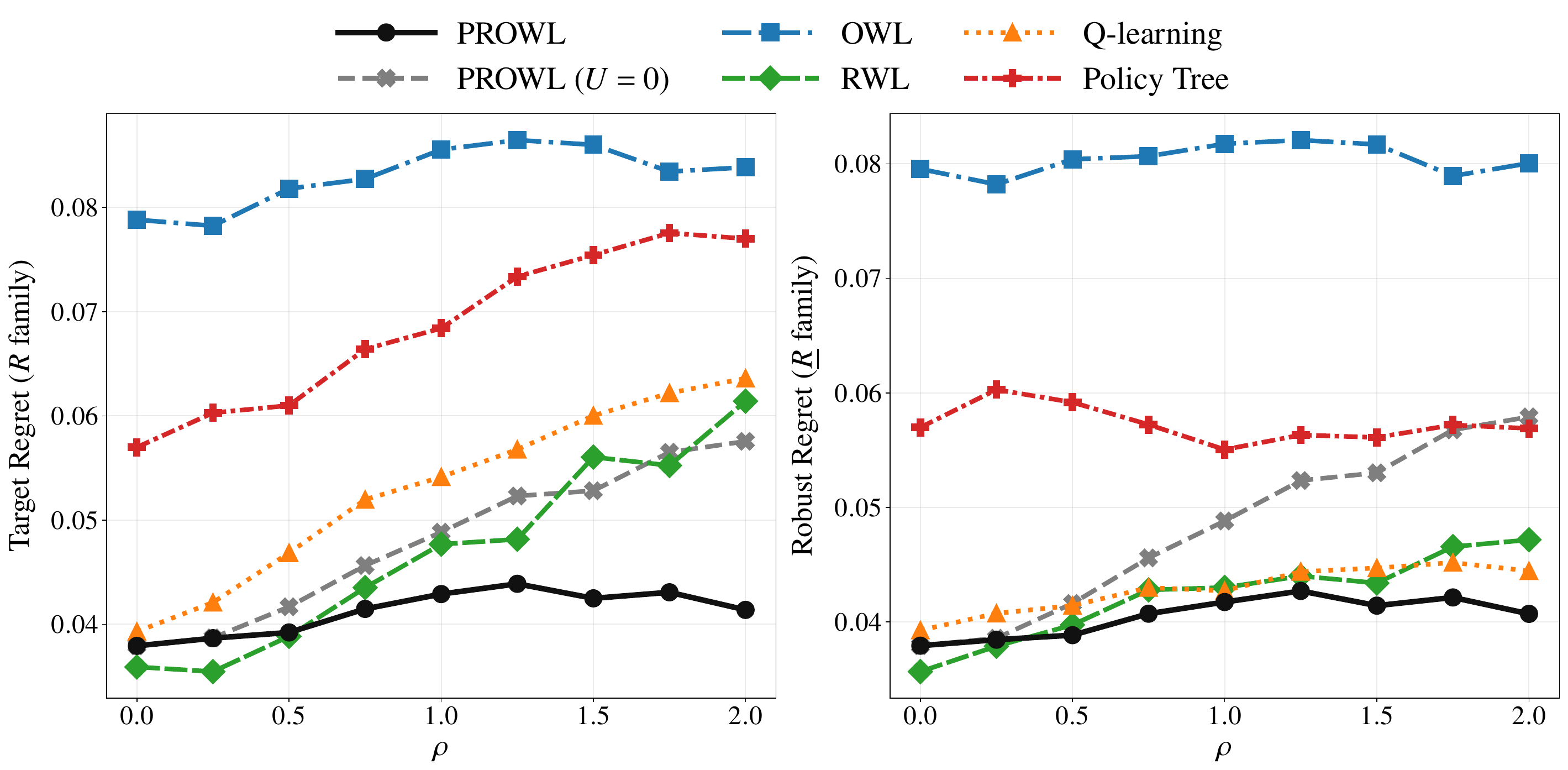}}
\caption{
Performance comparison across varying uncertainty levels ($\rho$) for Scenario 2.
The left panel reports target regret against the $R$-family baselines, and the right panel reports robust regret against the $\uR$-family baselines.
}
\label{fig:scenario2-rho-sweep-main}
\end{center}
\vskip -0.2in
\end{figure*}

\begin{figure*}[tb]
\vskip 0.2in
\begin{center}
\centerline{\includegraphics[width=\columnwidth]{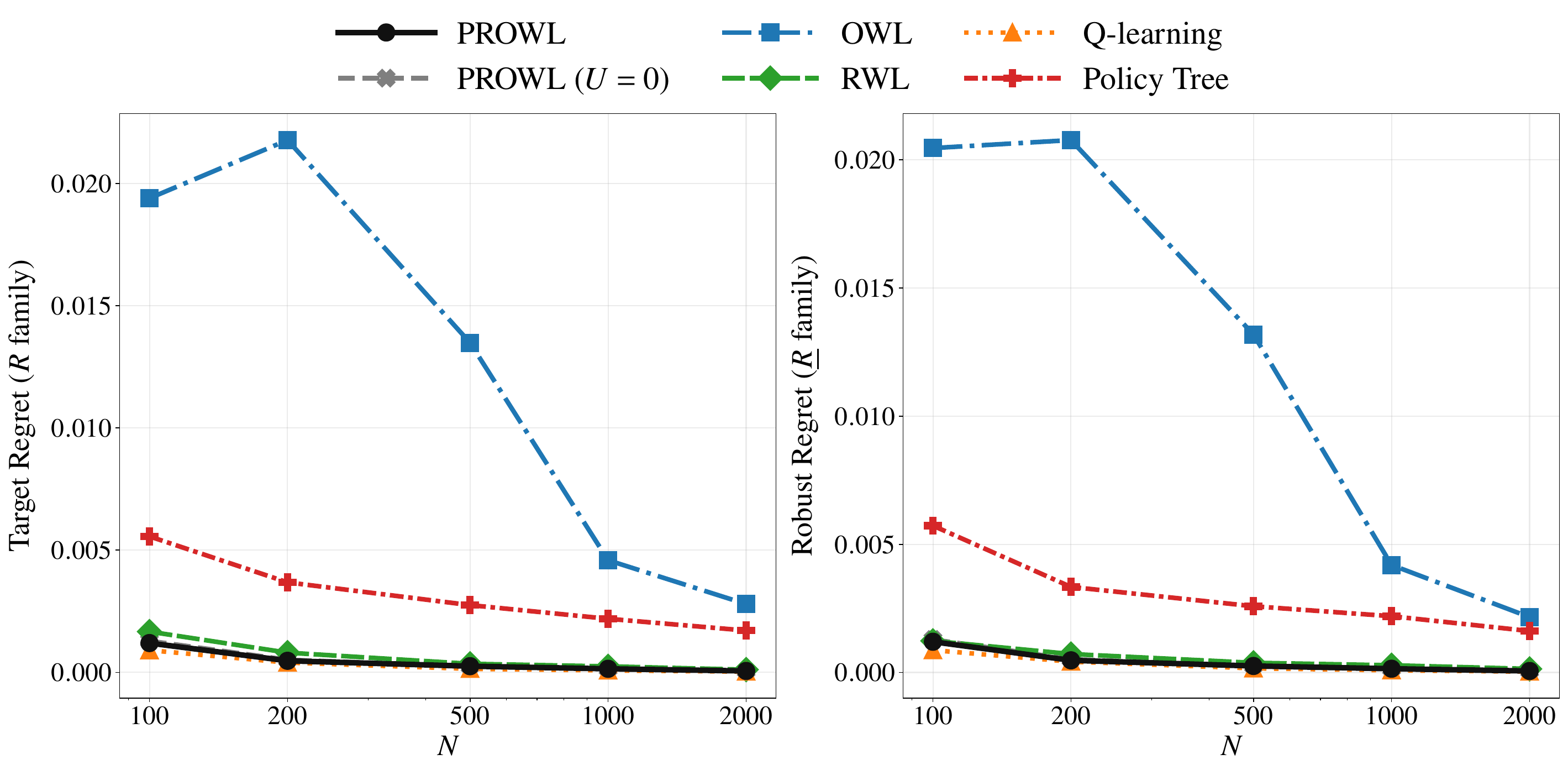}}
\caption{
Performance comparison across varying sample sizes ($N$) for Scenario 1.
The left panel reports target regret against the $R$-family baselines, and the right panel reports robust regret against the $\uR$-family baselines.
}
\label{fig:scenario1-n-sweep-main}
\end{center}
\vskip -0.2in
\end{figure*}

\begin{figure*}[tb]
\vskip 0.2in
\begin{center}
\centerline{\includegraphics[width=\columnwidth]{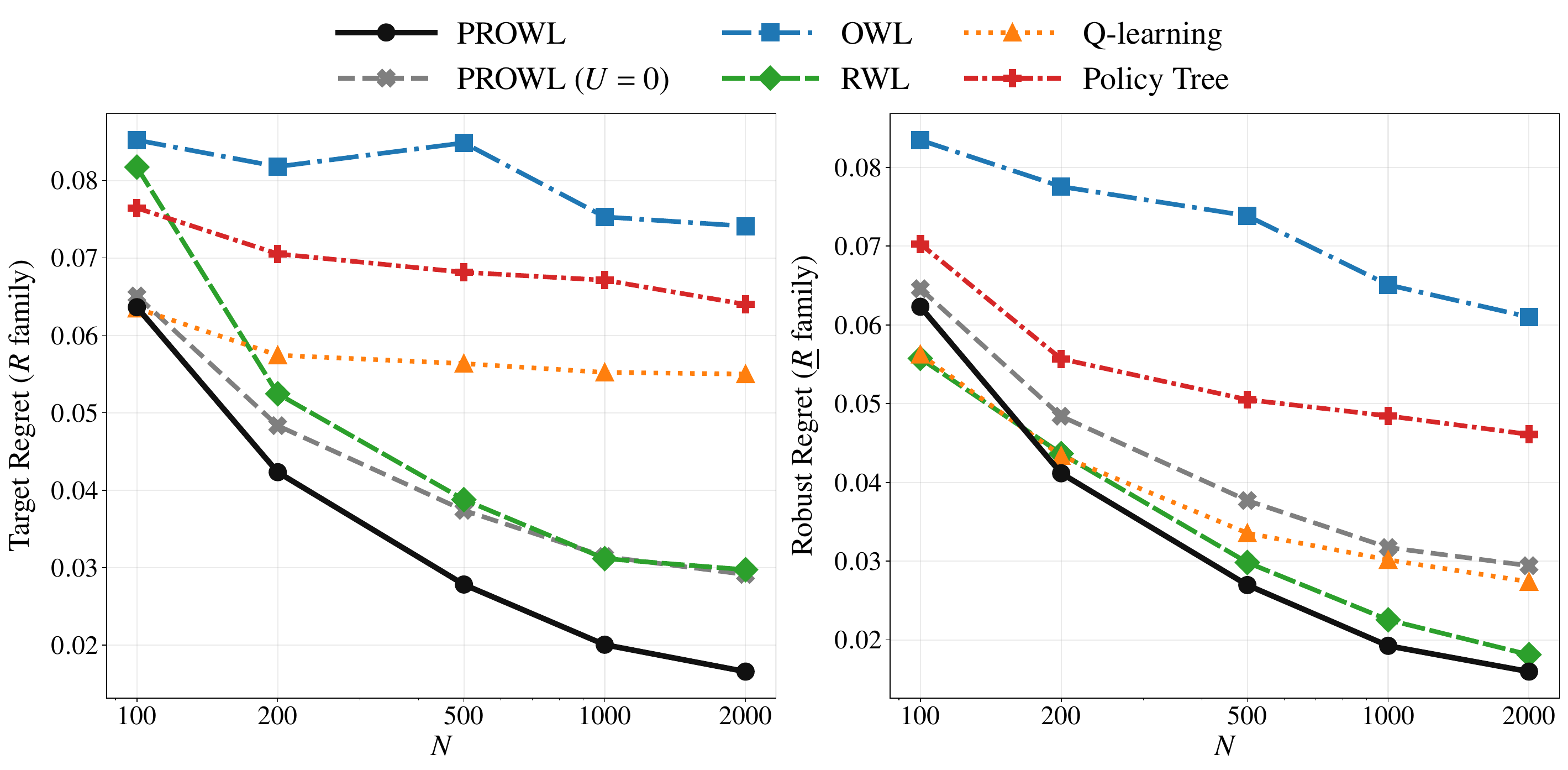}}
\caption{
Performance comparison across varying sample sizes ($N$) for Scenario 1.
The left panel reports target regret against the $R$-family baselines, and the right panel reports robust regret against the $\uR$-family baselines.
}
\label{fig:scenario2-n-sweep-main}
\end{center}
\vskip -0.2in
\end{figure*}
\section{Actual Data Experiments}
\label{sec:actual-data-experiments}

We next evaluate PROWL on deidentified data from the Electronic Alerts for Acute Kidney Injury (AKI) Amelioration study, using the public ELAIA-1 release, which also underlies the secondary analysis of \citet{huang2025identification}.
This experiment is designed to address a different question from those explored in Section~\ref{sec:numerical-experiments}.
Our goal is not to determine whether blanket alerting was beneficial on average, but rather to ascertain whether a clinically sensible, selective alerting rule can be learned when the utility itself is composite and preference-sensitive.
In this setting, a successful method should achieve high certified utility and avoid deterioration on the trial's primary hard-outcome composite, doing so without generating an unnecessarily aggressive alerting policy.

\subsection{Dataset Information}
\label{subsec:actual-dataset-information}

Each row of the ELAIA-1 release corresponds to a hospitalized patient with AKI who was individually randomized to receive either an electronic AKI alert or usual care.
We encode the treatment as
\begin{equation*}
    A_i=
    \begin{cases}
        1, & \text{if } \texttt{alert}_i=1,\\
        -1, & \text{if } \texttt{alert}_i=0.
    \end{cases}
\end{equation*}
Because treatment assignment relies on patient-level simple randomization, we use the known propensity $\pi(A=1\mid X)=\pi(A=-1\mid X)=1/2$ throughout both training and evaluation, rather than estimating it from the data.

Our primary policy covariate set comprises exclusively pre-randomization information.
These encompass hospital and admission context, demographics, AKI timing variables, renal-function markers, laboratory measurements, and vital signs at randomization, comorbidity indicators, recent medication or exposure history, and alert-burden variables.
We explicitly exclude post-randomization process variables, event-time variables, utilization and cost summaries, and the leakage-prone \texttt{max\_stage} field.
Within each training split, continuous variables are median-imputed and standardized with missingness indicators added as necessary; categorical and binary variables are mode-imputed; and hospital affiliation is encoded via six one-hot indicators.
The fitted preprocessing map is then applied without modification to the held-out evaluation fold.
Additional variable-level details are deferred to Appendix~\ref{app:sec:detail-actual-exp}.

To model the uncertain hard-clinical utility, we define for patient $i$
\begin{equation*}
    G_i
    =
    \Bigl(
        1-\texttt{death14}_i,\,
        1-\texttt{dialysis14}_i,\,
        1-\texttt{aki\_progression14}_i
    \Bigr)^\top
    \in\{0,1\}^3
    .
\end{equation*}
Our nominal proxy reward is $R_i=w_0^\top G_i$, with $w_0=(0.60,0.25,0.15)^\top$, which prioritizes survival, followed by dialysis avoidance, and finally the avoidance of AKI progression.
To encode preference uncertainty within these weights, we define the uncertainty set
\begin{equation*}
    \mathcal{W}_\rho
    =
    \Biggl\{
        w\in\mathbb{R}_+^3:
        \sum_{j=1}^3 w_j=1,
        \,
        |w_j-w_{0j}|\le \rho \Delta_j
    \Biggr\}
    ,
    \quad
    \Delta=(0.10,0.05,0.05)^\top
    ,
\end{equation*}
with $\rho=1$ in the main analysis.
The certified lower reward is then defined at the patient level as $\uR_i = \min_{w\in\mathcal{W}_\rho} w^\top G_i$, which constitutes a trivial three-dimensional linear program, and the induced one-sided certificate is $U_i=R_i-\uR_i$.
As an external clinical anchor aligned with the original trial endpoint, we also evaluate the composite-free reward $R_i^{\mathrm{comp}}=1-\texttt{composite\_outcome}_i$.

\subsection{Baselines}
\label{subsec:actual-baselines}

Our primary comparison evaluates two blanket rules (Never alert and Always alert), three classical learners trained on the nominal reward $R$, the same three learners trained on the certified reward $\uR$, the uncertified ablation PROWL $(U=0)$, and PROWL itself.
Specifically, the adaptive baselines are OWL, Q-learning, and RWL alongside their $\uR$-plugins.
The two reward families serve complementary purposes: comparison against the $R$-family quantifies the cost of entirely ignoring reward uncertainty, whereas comparison against the $\uR$-family isolates whether any performance gain stems merely from plugging a conservative reward into an otherwise standard learner.

For comparability, all adaptive learners employ linear policy scores in the main analysis.
OWL, Q-learning, and RWL are tuned via three-fold cross-validation over the penalty grid $\{10^{-3},10^{-2},10^{-1},1\}$, utilizing the fold-held-out augmented IPW (AIPW) value for the corresponding reward variant.
PROWL adopts the same policy class but couples it with arm-specific ridge nuisance regressions and data-driven temperature selection over $\eta,\gamma \in \{1/8,1/4,1/2,1,2,4,8\}$, optimized via the exact-value lower confidence bound described in Section~\ref{subsec:learning-rate}.
In the main text, we deploy PROWL and PROWL $(U=0)$ via the deterministic mean-rule induced by the fitted posterior;
Appendix~\ref{app:sec:detail-actual-exp} compares this approach with MAP and Gibbs deployment strategies.

We intentionally omit PolicyTree from the primary actual-data comparison.
This omission is not conceptual;
tree-based rules are already evaluated in the synthetic experiments;
but rather computational and inferential.
The central empirical question is whether reward certification improves policy learning within a common score-based regime class.
By contrast, PolicyTree optimizes over a fundamentally different hypothesis class and relies on an exact exhaustive tree search, whose amortized runtime for a depth-$k$ tree is on the order of $O(P^{k}N^{k}(\log N + D) + PN\log N)$, where $P$ is the feature dimension, $N$ is the number of distinct training observations, and $D$ is the number of actions \citep{sverdrup2020policytree}.
With $30$ repeated hospital-stratified splits, two reward families, and a high-dimensional EHR covariate set, this exhaustive search becomes the dominant computational bottleneck.
We therefore reserve tree-based comparisons for the synthetic benchmarks, where they remain both informative and computationally tractable.

\subsection{Evaluation Metrics}
\label{subsec:actual-evaluation-metrics}

All actual-data results are derived from $30$ repeated $70/30$ train/test splits, stratified by hospital and treatment arm.
For any reward $Y\in[0,1]$ and learned policy $d$, we estimate the held-out value using the AIPW estimator:
\begin{equation*}
    \hat{V}_{\mathrm{AIPW}}(d;Y)
    =
    \frac{1}{n_{\mathrm{test}}}
    \sum_{i\in\mathrm{test}}
    \Biggl[
        \hat{\mu}_{d(X_i)}(X_i)
        +
        \frac{\1\{A_i=d(X_i)\}}{1/2}
        \{Y_i-\hat{\mu}_{A_i}(X_i)\}
    \Biggr]
    ,
\end{equation*}
where $\hat{\mu}_a(x)$ represents an arm-specific outcome regression estimated on the training fold, and the propensity is fixed at the known randomized value of $1/2$.

We report three primary value criteria:
the certified value $\hat{V}_{\uR}$, the nominal value $\hat{V}_R$, and the composite-free value $\hat{V}_{\mathrm{comp}}$ computed from $R^{\mathrm{comp}}=1-\texttt{composite\_outcome}$.
The first metric strictly aligns with the PROWL objective, the second quantifies performance under the nominal proxy utility, and the third anchors the learned rule to the original trial's hard-outcome composite.
In the main text, we additionally report the estimated 14-day mortality risk and the alert rate:
\begin{equation*}
    \hat{p}_{\mathrm{alert}}
    =
    \frac{1}{n_{\mathrm{test}}}
    \sum_{i\in\mathrm{test}}
    \1\{d(X_i)=1\}
    .
\end{equation*}
Appendix~\ref{app:sec:detail-actual-exp} further decomposes the hard-outcome components into mortality, dialysis, AKI progression, composite outcome, and discharge-to-home rates.
All metrics are reported as the mean, with standard errors (SE) computed across the repeated sample splits.

\subsection{Results}
\label{subsec:actual-results}

Table~\ref{tab:actual-policy-comparison} and Figure~\ref{fig:actual-point-range} summarize the primary results.
The strongest overall rule is the blanket Never alert policy, which attains both the largest certified value and the largest composite-free value.
This finding is inherently informative: the ELAIA data do not represent a regime in which indiscriminate alerting is broadly beneficial. Consequently, any learned rule must justify its utility against a strong conservative benchmark rather than a blanket intervention.

Against this backdrop, PROWL emerges as the strongest adaptive learner.
Among the learned policies, it achieves the highest certified value, the highest composite-free value, and the numerically lowest mortality risk, while recommending alerts for approximately half of the patient population.
Relative to its uncertified ablation PROWL $(U=0)$, the gains are modest but directionally uniform: the certified value increases, the composite-free value improves, and the alert rate decreases.
When compared to the strongest raw-reward and lower-reward plug-in competitors (Q-learning ($R$) and Q-learning ($\uR$)), PROWL achieves marginally better certified and composite-free values while maintaining an equivalent alert burden.
Conversely, the OWL variants are substantially more aggressive, alerting roughly two-thirds to three-quarters of patients while delivering inferior certified and clinical performance.
These absolute differences are necessarily small because the conservative no-alert benchmark is already highly effective on this dataset; the salient empirical question is therefore not whether a learned rule can dominate Never alert, but whether certification systematically improves the quality of a selective alerting regime once the decision to learn such a rule has been made.

Figure~\ref{fig:actual-policy-allocation} clarifies how certification structurally alters the learned policy.
We compare PROWL against the strongest raw-reward baseline, Q-learning $(R)$, and the strongest lower-reward plug-in baseline, Q-learning $(\uR)$.
All three methods increase their alerting frequency in tandem with the estimated baseline risk, and all are more aggressive in teaching hospitals than in non-teaching hospitals.
However, PROWL is systematically more conservative in the non-teaching sites, particularly within the highest-risk quintile, whereas the two Q-learning policies continue to recommend alerts more indiscriminately.
In the top baseline-risk quintile at non-teaching hospitals, PROWL recommends alerts for about $30\%$ of patients, compared to roughly $34\%$ for Q-learning $(R)$ and $35\%$ for Q-learning $(\uR)$.
In teaching hospitals, the behavior of the three rules is much more closely aligned.
Thus, the practical role of reward certification is not to suppress alerts uniformly, but rather to strategically reallocate alerts away from sites and patient strata where the proxy reward appears excessively optimistic relative to the clinically anchored lower utility.

Overall, the actual-data analysis does not suggest that a learned adaptive rule should overturn the conservative Never alert benchmark.
Instead, it demonstrates that when learning an adaptive alerting policy under uncertain clinical utility, PROWL yields the most favorable empirical trade-off among the evaluated competitors:
it secures the highest certified value, prevents deterioration on the trial's primary composite anchor, achieves the lowest mortality risk, and maintains a moderate alert burden.

\begin{table}[t]
\centering
\caption{
Repeated sample-split AIPW comparison on the ELAIA-1 AKI-alert trial under the three-component uncertain hard-clinical utility with $\rho=1$.
Entries are means with standard errors over $30$ repeated $70/30$ hospital-by-alert stratified splits.
Certified value is the estimate of $V_{\uR}$, nominal value is the corresponding estimate of $V_R$, and composite-free value uses $R^{\mathrm{comp}}=1-\texttt{composite\_outcome}$.
Among adaptive learned policies, PROWL attains the largest certified value and the largest composite-free value.
}
\label{tab:actual-policy-comparison}
\setlength{\tabcolsep}{6pt}
\scalebox{0.8}{
\begin{tabular}{lrrrrr}
\toprule
Methods & Certified value & Nominal value & Composite-free value & Mortality risk & Alert rate 
\\
\midrule
Never alert         & $0.905\,(0.001)$ & $0.918\,(0.001)$ &
$0.797\,(0.001)$    & $0.086\,(0.001)$ & $0.000\,(0.000)$ \\

Always alert        & $0.900\,(0.001)$ & $0.914\,(0.001)$ &
$0.783\,(0.002)$    & $0.088\,(0.001)$ & $1.000\,(0.000)$ \\

OWL ($R$)           & $0.900\,(0.001)$ & $0.914\,(0.001)$ &
$0.786\,(0.002)$    & $0.089\,(0.001)$ & $0.672\,(0.024)$ \\

Q-learning ($R$)    & $0.903\,(0.001)$ & $0.917\,(0.001)$ &
$0.793\,(0.002)$    & $0.086\,(0.001)$ & $0.500\,(0.005)$ \\

RWL ($R$)           & $0.903\,(0.001)$ & $0.916\,(0.001)$ &
$0.793\,(0.002)$    & $0.088\,(0.001)$ & $0.450\,(0.030)$ \\

OWL ($\uR$)         & $0.901\,(0.001)$ & $0.915\,(0.001)$ &
$0.787\,(0.002)$    & $0.087\,(0.001)$ & $0.715\,(0.024)$ \\

Q-learning ($\uR$)  & $0.904\,(0.001)$ & $0.917\,(0.001)$ &
$0.794\,(0.002)$    & $0.086\,(0.001)$ & $0.500\,(0.005)$ \\

RWL ($\uR$)         & $0.903\,(0.001)$ & $0.917\,(0.001)$ &
$0.794\,(0.002)$    & $0.087\,(0.001)$ & $0.475\,(0.022)$ \\

PROWL ($U=0$)       & $0.904\,(0.001)$ & $0.917\,(0.001)$ &
$0.794\,(0.002)$    & $0.086\,(0.001)$ & $0.509\,(0.009)$ \\

PROWL               & $0.904\,(0.001)$ & $0.917\,(0.001)$ &
$0.795\,(0.002)$    & $0.086\,(0.001)$ & $0.498\,(0.008)$ \\
\bottomrule
\end{tabular}
}
\end{table}

\begin{figure*}[tb]
\vskip 0.2in
\begin{center}
\centerline{\includegraphics[width=\columnwidth]{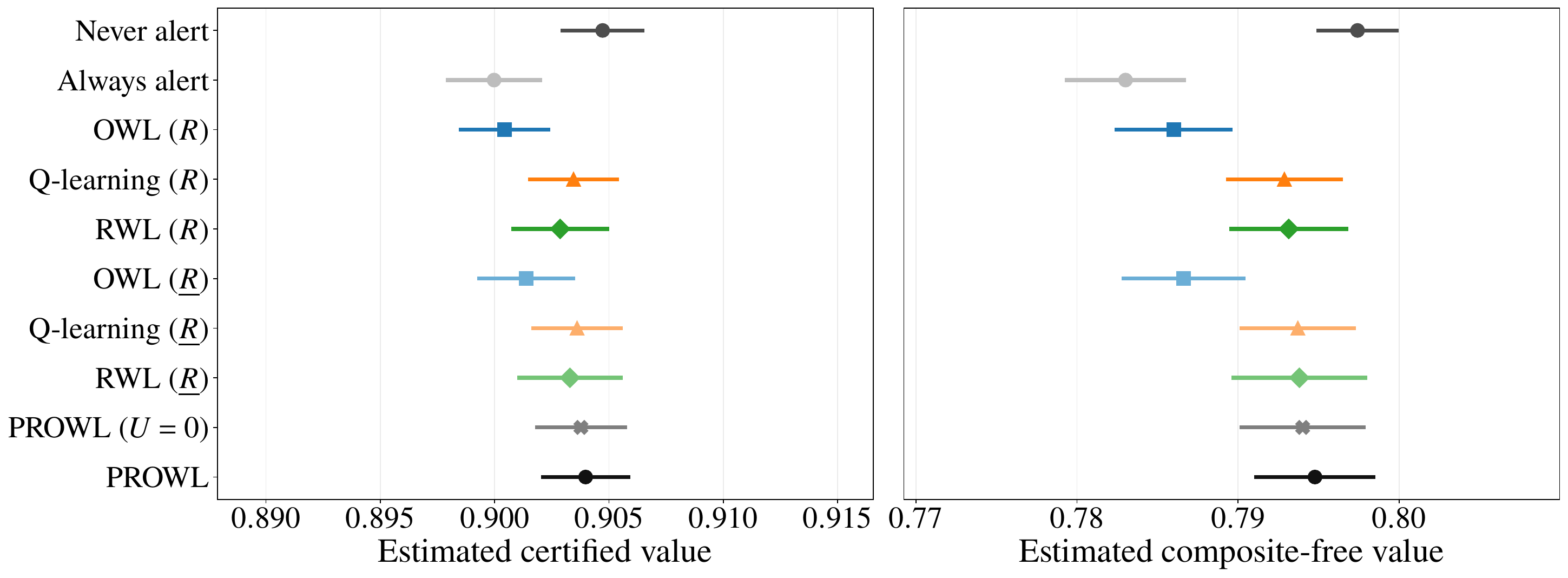}}
\caption{
Point-range summary of the primary actual-data comparison on ELAIA-1.
Left panel reports the estimated certified value $\hat{V}_{\uR}$ with $95\%$ confidence intervals across the $30$ repeated sample splits.
Right panel reports the estimated composite-free value $\hat{V}_{\mathrm{comp}}$ with the same uncertainty summary.
The figure complements Table~\ref{tab:actual-policy-comparison} by emphasizing method ordering on the certified objective and on the trial's primary hard-outcome anchor.
}
\label{fig:actual-point-range}
\end{center}
\vskip -0.2in
\end{figure*}

\begin{figure*}[tb]
\vskip 0.2in
\begin{center}
\centerline{\includegraphics[width=\columnwidth]{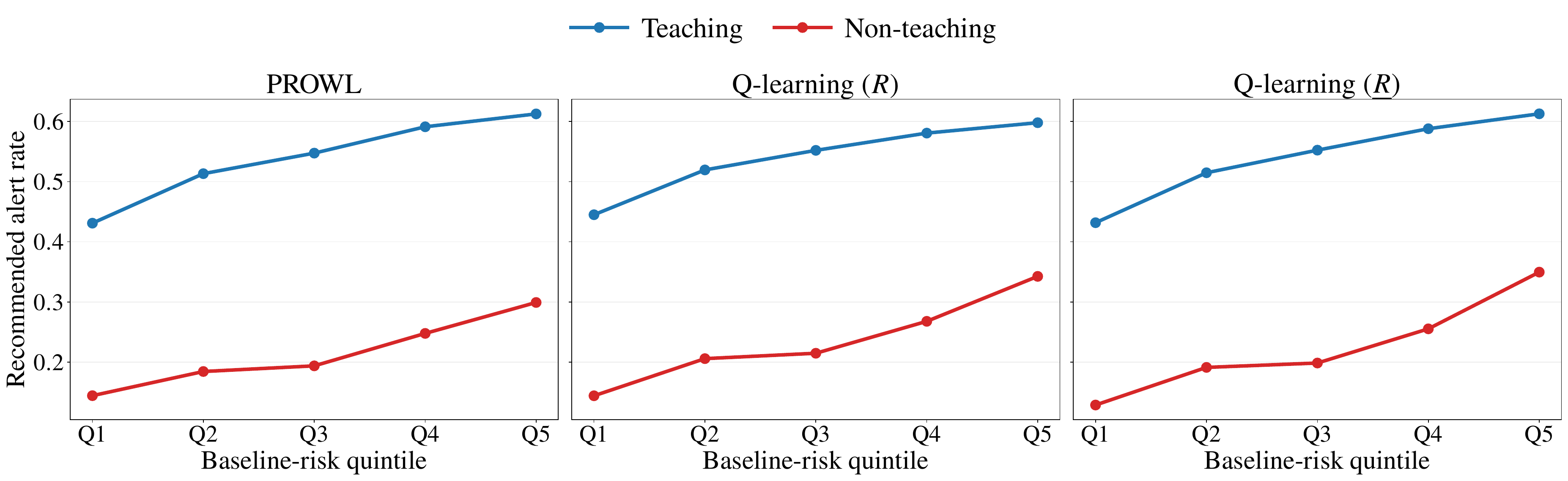}}
\caption{
Learned alert allocation by baseline-risk quintile and hospital type.
Baseline-risk quintiles are obtained within each training fold by fitting a control-arm risk model for the primary composite outcome and then evaluating the resulting risk scores on the held-out fold.
Each panel displays the mean recommended alert rate across the repeated sample splits for one policy.
Teaching hospitals are shown in blue and non-teaching hospitals in red.
PROWL is compared against the strongest raw-reward baseline, Q-learning $(R)$, and the strongest lower-reward plug-in baseline, Q-learning $(\uR)$.
}
\label{fig:actual-policy-allocation}
\end{center}
\vskip -0.2in
\end{figure*}
\section{Discussion}
\label{sec:discussion}

This paper establishes a reward-certified framework for outcome weighted learning. 
Motivated by applications where the observable reward serves merely as a noisy, delayed, or overly optimistic proxy for the latent utility of interest, PROWL effectively addresses this structural mismatch.
It achieves this by directly embedding a one-sided uncertainty certificate into the learning objective to optimize a nonasymptotic lower bound on the target value.
Two theoretical implications of this formulation are particularly notable.
First, the exact certified reduction in Theorem~\ref{thm:exact-certified-reduction} demonstrates that uncertainty-aware policy learning remains exactly reducible to a weighted classification problem.
This reduction preserves the algorithmic tractability that has made standard OWL widely adopted.
Second, Theorem~\ref{thm:exact-pac-bayes} establishes that the posterior maximizing our PAC-Bayes lower bound is precisely a generalized Bayes posterior.
Consequently, the generalized Bayesian update is not merely an axiomatic postulate in our setting.
Rather, it emerges naturally as the unique and exact optimizer of a finite-sample lower-confidence criterion for learning treatment rules.

This perspective rigorously situates PROWL within the broader direct-learning literature.
A substantial body of work has advanced OWL through residualization, augmentation, doubly robust scoring, sparse regularization, and flexible policy classes \citep{Song2015,zhou2017residual,liu2018augmented,zhu2017greedy,zhang2020multicategory,pan2021improved,zhou2023offline}.
However, these methods predominantly assume that the observed outcome is the definitive target and therefore optimize a point-estimated reward.
In contrast, PROWL is explicitly designed for environments where the observed reward is systematically misaligned with the true utility.
Furthermore, our framework complements recent causal learning approaches that rely on surrogate outcomes, semi-supervised labels, or latent variables to recover long-term targets prior to policy optimization \citep{chen2021learning,sonabend2023semi,yang2024targeting,fei2026learning}.
Whereas existing methods focus on identifying the target reward, PROWL shifts the paradigm toward certification.
Specifically, it asks which policy yields the highest utility that can be mathematically guaranteed from finite samples under strictly one-sided reward information.

The inherent conservatism underpinning the safety guarantees of PROWL also dictates its primary limitation.
When the certificate is tight and informative, the procedure effectively penalizes policies whose apparent values are inflated in regions of high reward uncertainty.
However, if the certificate is valid but excessively loose, the certified reward becomes overly pessimistic.
This pessimism attenuates the certified advantage, effectively flattening the learning objective and complicating policy discrimination.
This intrinsic trade-off highlights that certificate construction is not a benign preprocessing step but rather a core component of the statistical decision problem itself.
Practical success relies not only on marginal validity but also on ensuring that the certificate is highly policy-discriminating.
This necessary discrimination can be achieved through either treatment-specific structural constraints or active clipping.
As noted in Remark~\ref{rem:certificate-ranking}, if the uncertainty penalty acts in a policy-invariant manner, the certification remains valid but structurally fails to shift the optimal decision boundary.

Several directions for future theoretical work remain. 
First, while Theorem~\ref{thm:exact-pac-bayes} strictly certifies the expected value of the randomized Gibbs policy $V^\ast(d_Q)$, clinical applications typically demand a deterministic rule $T(Q)$. 
Bounding the derandomization regret $V^\ast(d_Q)-V^\ast(T(Q))$ is an open challenge, requiring the extension of margin-based and disintegrated PAC-Bayesian tools \citep{germain2015risk,biggs2022margins,viallard2024disintegration} to handle value-based objectives with inverse-propensity weighting and joint nuisance learning.
Second, our current analysis assumes that the prior and any actively learned certificates are constructed independently of the policy-learning sample. 
Tuning the certificate $U$ on the same data $Z_{1:n}$ to tighten the bound would induce a double-overfitting phenomenon that alters the empirical objective itself; formalizing this necessitates new simultaneous high-probability bounds leveraging distribution-dependent priors or random hypothesis sets \citep{parrado2012pac,lever2013tighter,dziugaite2018data,dupuis2024uniform}. 
Third, our exact general Bayes update relies on the linearity of the certified value, which follows from its exact reduction to a weighted $0$--$1$ risk. 
Extending this mathematically rigorous framework to non-linear, ambiguity-aware value functionals, while simultaneously controlling uncertainties arising from both off-policy evaluation and the reward certificate \citep{athey2021policy,sakhi2023pac,gouverneur2025refined}, represents a critical next step for robust policy learning.

Finally, the PROWL framework can be overly conservative in finite samples because it bounds the optimistic error instead of using an exact oracle bias correction.
Mitigating this over-correction while maintaining strict one-sided validity remains a crucial open problem.
One possible solution is to develop higher-order corrections or value-aware priors, similar to asymptotically unbiased Bayesian frameworks \citep{hartigan1965asymptotically,sakai2025priors}.
However, this application is challenging, since the target policy-dependent objective requires inverse-probability weighting and joint nuisance learning, which introduce large variances and complex correlations, making standard prior adjustments difficult. 

In conclusion, PROWL establishes a formal bridge connecting causal policy learning, PAC-Bayesian generalization theory, and generalized Bayesian inference. 
By elevating reward uncertainty from a post-hoc consideration to a fundamental component of the optimization objective, our framework shifts the focus of individualized treatment estimation from maximizing nominal values to verifiable safety. 
This certified approach provides a mathematically principled foundation for deploying data-driven decision rules in high-stakes domains. 
In such settings, establishing the reliability of a policy is as critical as optimizing its empirical performance.

\section*{Code Availability}
The Python implementation of the proposed method and simulation experiments in this study are available at \url{https://github.com/shutech2001/PROWL}.

\section*{Acknowledgements}
This work was supported by JSPS KAKENHI Grant Number 25K24203.

\bibliographystyle{apalike}
\bibliography{mybib}

\appendix
\section{Additional Theory}
\label{app:sec:additional-theory}

Since $\cX\subset\RR^p$ is a Borel subset of a Polish space, regular conditional distributions given $X$ exist.
Throughout this appendix, we fix measurable versions of every conditional expectation and conditional variance that appears below.
Furthermore, because $U$, $\nu_a$, and $f$ are jointly measurable and the mapping $\sgn:\RR\to\{-1,1\}$ is Borel measurable, the composite mappings $(x,\beta)\mapsto d_\beta(x)$, $(z,\alpha)\mapsto \Gamma_a^\alpha(z)$, $(z,\beta,\alpha)\mapsto \Gamma_{d_\beta}^\alpha(z)$, as well as the induced maps $(z,\alpha)\mapsto D_\alpha(z)$, $(z,\alpha)\mapsto Y_\alpha(z)$, $(z,\alpha)\mapsto W_\alpha(z)$, $(z,\beta,\alpha)\mapsto \ell_{\mathrm{val}}(\beta,\alpha;z)$, and $(z,\beta,\alpha)\mapsto \ell_{\mathrm{h}}(\beta,\alpha;z)$ are all rigorously measurable.
Consequently, all expectations, conditional expectations, and posterior integrals utilized below are rigorously well-defined.

\subsection{Identification and Certified Lower-Value Domination}
\label{app:identification}

We begin by presenting the standard inverse-probability weighting representation, followed by a formalization of the policy-wise domination induced by the certificate.

\begin{proposition}[Inverse-propensity representations]
\label{prop:ips}
    Under Assumption~\ref{ass:id}, every measurable ITR $d:\cX\to\{-1,1\}$ exactly satisfies
    \begin{equation}
    \label{eq:ips}
        V(d)
        =
        \EE\Biggl[
            \frac{R\1\{A=d(X)\}}{\pi(A\mid X)}
        \Biggr],
        \quad
        V^\ast(d)
        =
        \EE\Biggl[
            \frac{R^\ast\1\{A=d(X)\}}{\pi(A\mid X)}
        \Biggr]
        .
    \end{equation}
\end{proposition}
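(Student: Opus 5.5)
The plan is to prove both identities at once, since they use the same argument. Let $S$ stand for either $R$ or $R^\ast$, and let $S^a$ be the matching potential reward. First, decompose the indicator over the two treatment values. Because $d(X)\in\{-1,1\}$, we have
\begin{equation*}
    \frac{S\1\{A=d(X)\}}{\pi(A\mid X)}
    =
    \sum_{a\in\{-1,1\}}\1\{d(X)=a\}\frac{\1\{A=a\}}{\pi(a\mid X)}S^a
    \quad\text{almost surely.}
\end{equation*}
Here we have used consistency from Assumption~\ref{ass:id}: $S=S^a$ almost surely on $\{A=a\}$, and on that event $\pi(A\mid X)=\pi(a\mid X)$. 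Strict overlap gives $\pi(a\mid X)\ge\varepsilon$, so every term is bounded by $\varepsilon^{-1}$ and all expectations below are finite.

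Second, condition on $X$ and apply the tower property. Both $\1\{d(X)=a\}$ and $\pi(a\mid X)$ are $\sigma(X)$-measurable, so they factor out of the conditional expectation. It then suffices to show
\begin{equation*}
    \EE\bigl[\1\{A=a\}S^a\mid X\bigr]=\pi(a\mid X)\,\EE[S^a\mid X].
\end{equation*}
This is where conditional ignorability enters: $S^a\indep A\mid X$ lets the conditional expectation of the product split into $\PP(A=a\mid X)\,\EE[S^a\mid X]$. The step needs regular conditional distributions, which exist as noted at the start of the appendix, and it uses the fact that $S^a\in[0,1]$ is bounded. Dividing by $\pi(a\mid X)>0$ leaves $\EE[S^a\mid X]$.

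Third, reassemble the pieces:
\begin{equation*}
    \EE\Bigl[\sum_a\1\{d(X)=a\}\EE[S^a\mid X]\Bigr]
    =
    \EE\Bigl[\EE\bigl[S^{d(X)}\mid X\bigr]\Bigr]
    =
    \EE[S^d].
\end{equation*}
The first equality holds because $d(X)$ is $X$-measurable, so it can be pulled inside the conditional expectation. Taking $S=R$ gives $V(d)$ and taking $S=R^\ast$ gives $V^\ast(d)$.

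The only delicate point is the ignorability factorization in the second step. It is routine, but it must be applied to the joint potential-outcome vector as stated in Assumption~\ref{ass:id}, with measurable versions of the conditional expectations fixed in advance. Everything else is bookkeeping with indicators and boundedness.
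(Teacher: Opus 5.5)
Your proposal is correct and follows essentially the same route as the paper's proof: split the indicator over the two arms, use consistency to replace $R^\ast\1\{A=a\}$ by $(R^\ast)^a\1\{A=a\}$, factor the conditional expectation via ignorability, cancel $\pi(a\mid X)$, and take the outer expectation. The only difference is cosmetic: you work with $\sigma(X)$-conditional expectations rather than pointwise at $X=x$, which is slightly cleaner, and the factorization step does not actually need regular conditional distributions.
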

See Appendix~\ref{app:pf-prop-ips} for the proof.
Proposition~\ref{prop:ips} formally establishes that the expected and target values of any measurable policy can be robustly identified using standard inverse-probability weighting under the strict overlap and ignorability assumptions.

\begin{proposition}[Certified lower-value domination]
\label{prop:lower-value-domination}
    Under Assumptions~\ref{ass:id} and \ref{ass:certificate}, every measurable ITR $d$ strictly satisfies
    \begin{equation}
    \label{eq:lower-value-domination}
        V^\ast(d)\ge V_{\uR}(d)
        .
    \end{equation}
    If, in addition, $U\le R$ almost surely, then $\uR=R-U$ almost surely, yielding
    \begin{equation}
    \label{eq:lower-value-domination-unclipped}
        V^\ast(d)
        \ge
        V(d)
        -
        \EE\Biggl[
            \frac{U\1\{A=d(X)\}}{\pi(A\mid X)}
        \Biggr]
        .
    \end{equation}
\end{proposition}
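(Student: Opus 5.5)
The plan is to reduce everything to the IPW representation of Proposition~\ref{prop:ips} and then use pointwise monotonicity of the integrand in the reward. By Proposition~\ref{prop:ips}, $V^\ast(d)=\EE[R^\ast\1\{A=d(X)\}/\pi(A\mid X)]$. By definition, $V_{\uR}(d)=\EE[\uR\1\{A=d(X)\}/\pi(A\mid X)]$. Both integrands are bounded, since $R^\ast,\uR\in[0,1]$ and $\pi\ge\varepsilon$ by Assumption~\ref{ass:id}. Hence both expectations are finite, and linearity gives
\begin{equation*}
    V^\ast(d)-V_{\uR}(d)=\EE\Biggl[\frac{(R^\ast-\uR)\1\{A=d(X)\}}{\pi(A\mid X)}\Biggr].
\end{equation*}

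The key step is to show $R^\ast\ge\uR$ almost surely. Assumption~\ref{ass:certificate} gives $R^\ast\ge R-U$ a.s., and $R^\ast\in[0,1]$ gives $R^\ast\ge 0$. Therefore $R^\ast\ge\max\{R-U,0\}=(R-U)_+=\uR$ a.s. The integrand above is then a product of an a.s.\ nonnegative quantity with the nonnegative factor $\1\{A=d(X)\}/\pi(A\mid X)$, so the expectation is nonnegative. This proves \eqref{eq:lower-value-domination}.

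For \eqref{eq:lower-value-domination-unclipped}, assume also $U\le R$ a.s. Then $R-U\ge 0$, so $(R-U)_+=R-U$ a.s. Substituting into the definition of $V_{\uR}(d)$ and splitting by linearity, which is legitimate because $R$ and $U$ are bounded and $\pi\ge\varepsilon$, gives
\begin{equation*}
    V_{\uR}(d)=\EE\bigl[R\1\{A=d(X)\}/\pi(A\mid X)\bigr]-\EE\bigl[U\1\{A=d(X)\}/\pi(A\mid X)\bigr].
\end{equation*}
The first term equals $V(d)$ by Proposition~\ref{prop:ips}. Combining this with the first part yields the second inequality.

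No step is a real obstacle. The only points needing care are integrability and measurability: $U(X,A,R)$ and $\uR$ must be measurable random variables, which holds by the measurability of $U$ and of the positive-part map. The one genuinely essential observation is that $R^\ast\ge 0$ is what makes the clipping in $\uR$ harmless.
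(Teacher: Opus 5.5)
Your proof is correct and follows essentially the same route as the paper. Like the paper, you get $R^\ast\ge(R-U)_+=\uR$ pointwise from Assumption~\ref{ass:certificate} and $R^\ast\ge 0$, take expectations via the IPW representation of Proposition~\ref{prop:ips}, and use linearity for the unclipped case; your explicit integrability remarks simply make precise what the paper leaves implicit.
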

See Appendix~\ref{app:pf-prop-lower-value-domination} for the proof.
Proposition~\ref{prop:lower-value-domination} demonstrates that any policy's unobserved target value is fundamentally bounded below by its observable certified value, guaranteeing that optimizing this proxy strictly improves the lower bound of the target reward.

\begin{remark}[When the certificate does not alter policy ranking]
\label{rem:certificate-ranking}
    Suppose that $U\le R$ almost surely and that $U=U(X)$ depends exclusively on $X$.
    Then, for every measurable ITR $d$, we have
    \begin{equation*}
        \EE\Biggl[
            \frac{U(X)\1\{A=d(X)\}}{\pi(A\mid X)}
        \Biggr]
        =
        \EE\bigl[U(X)\bigr]
        ,
    \end{equation*}
    which follows directly from the identity
    \begin{equation*}
        \EE\Biggl[
            \frac{\1\{A=d(X)\}}{\pi(A\mid X)}
            \Bigm| X
        \Biggr]
        =
        1
        \quad\text{a.s.}
    \end{equation*}
    Hence, the penalty term detailed in \eqref{eq:lower-value-domination-unclipped} is entirely independent of the policy.
    Consequently, reward uncertainty impacts the ranking of policies only through treatment-dependent certification or the active clipping inherent in the definition $\uR=(R-U)_+$.
\end{remark}

\subsection{Special Cases and Nuisance Design}
\label{app:special-cases}

We next highlight several salient special cases of the certified reduction, accompanied by conditional moment calculations that elucidate the optimal design of the nuisance functions.

\begin{proposition}[Special cases of the certified reduction]
\label{prop:special-cases}
    Fix $\alpha\in\cA$.
    
    \begin{enumerate}[label=(\roman*)]
        \item \textbf{Certified OWL.}
        If $\nu_{1,\alpha}\equiv \nu_{-1,\alpha}\equiv 0$, then for $a\in\{-1,1\}$,
        \begin{equation*}
            \Gamma_a^\alpha(Z)
            =
            \frac{\uR\1\{A=a\}}{\pi(a\mid X)}
            ,
        \end{equation*}
        and therefore,
        \begin{equation*}
            D_\alpha(Z)
            =
            \frac{A\uR}{\pi(A\mid X)}
            ,
            \quad
            W_\alpha(Z)
            =
            \frac{\uR}{\pi(A\mid X)}
            ,
            \quad
            Y_\alpha(Z)=A
            \quad\text{on }\{\uR>0\}
            .
        \end{equation*}
        Because $W_\alpha(Z)=0$ on the event $\{\uR=0\}$, the resulting weighted classification problem corresponds exactly to OWL applied to the certified reward $\uR$.
    
        \item \textbf{Certified residual weighted learning.}
        If $\nu_{1,\alpha}=\nu_{-1,\alpha}=m_\alpha$ for some measurable mapping $m_\alpha:\cX\to[0,1]$, then
        \begin{equation*}
            D_\alpha(Z)
            =
            \frac{A\{\uR-m_\alpha(X)\}}{\pi(A\mid X)}
            ,
            \quad
            W_\alpha(Z)
            =
            \frac{|\uR-m_\alpha(X)|}{\pi(A\mid X)}
            ,
            \quad
            Y_\alpha(Z)
            =
            A\,\sgn\{\uR-m_\alpha(X)\}
            .
        \end{equation*}
        Under a balanced randomization scenario and setting $m_\alpha(x)=\EE[\uR\mid X=x]$, this yields the exact certified equivalent of residual weighted learning.
    
        \item \textbf{Arm-specific augmentation.}
        For arbitrarily measurable arm-specific nuisances $\nu_{1,\alpha}$ and $\nu_{-1,\alpha}$, each $\Gamma_a^\alpha$ functions as an unbiased action-specific score, such that
        \begin{equation*}
            \EE[\Gamma_a^\alpha(Z)\mid X=x]=\mu_a(x)
            \quad\text{for }P_X\text{-a.e. }x
            .
        \end{equation*}
    \end{enumerate}
\end{proposition}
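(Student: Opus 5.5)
The plan is a direct substitution into the definitions \eqref{eq:Gamma-a}--\eqref{eq:DYW}, one part at a time, using only the binary structure $A\in\{-1,1\}$, so that $\1\{A=1\}+\1\{A=-1\}=1$ and $\1\{A=a\}=(1+aA)/2$.

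For part (i), set $\nu_{a,\alpha}\equiv 0$ in \eqref{eq:Gamma-a}. This gives $\Gamma_a^\alpha(Z)=\uR\1\{A=a\}/\pi(a\mid X)$ at once. On $\{A=a\}$ we have $\pi(a\mid X)=\pi(A\mid X)$. Therefore
\begin{equation*}
D_\alpha(Z)=\uR\bigl[\1\{A=1\}-\1\{A=-1\}\bigr]/\pi(A\mid X)=A\uR/\pi(A\mid X).
\end{equation*}
Since $\uR\ge0$ and $\pi>0$, we get $W_\alpha=|D_\alpha|=\uR/\pi(A\mid X)$. On $\{\uR>0\}$, $\sgn(D_\alpha)=A$. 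On $\{\uR=0\}$, the convention $\sgn(0)=1$ may make $Y_\alpha\neq A$, but there $W_\alpha=0$, so that observation contributes nothing to $\cR_{01}^\alpha$. The resulting weighted risk is therefore exactly the OWL risk with reward $\uR$.

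For part (ii), take $\nu_{1,\alpha}=\nu_{-1,\alpha}=m_\alpha$. The baseline terms $m_\alpha(X)$ cancel in $D_\alpha=\Gamma_1^\alpha-\Gamma_{-1}^\alpha$. What remains is
\begin{equation*}
D_\alpha(Z)=\{\uR-m_\alpha(X)\}\bigl[\1\{A=1\}/\pi(1\mid X)-\1\{A=-1\}/\pi(-1\mid X)\bigr]=A\{\uR-m_\alpha(X)\}/\pi(A\mid X).
\end{equation*}
Taking absolute values gives $W_\alpha$ immediately. For the sign, $\sgn(D_\alpha)=A\,\sgn\{\uR-m_\alpha(X)\}$ whenever $\uR\neq m_\alpha(X)$. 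On the tie set the two sides may disagree under $\sgn(0)=1$, but $W_\alpha=0$ there, so the identity holds wherever it matters for the weighted risk. I will note this explicitly.

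The identification with residual weighted learning under balanced randomization and $m_\alpha=\EE[\uR\mid X]$ is then a matter of matching the definition in \citet{zhou2017residual}. It is also consistent with \eqref{eq:balanced-treatment-free-optimal}.

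For part (iii), condition on $X=x$ and expand \eqref{eq:Gamma-a}. The key step is
\begin{equation*}
\EE[\1\{A=a\}\uR\mid X=x]=\pi(a\mid x)\mu_a(x),
\end{equation*}
together with $\EE[\1\{A=a\}\mid X=x]=\pi(a\mid x)$. Overlap (Assumption~\ref{ass:id}) makes the division by $\pi(a\mid x)$ legitimate. This yields
\begin{equation*}
\EE[\Gamma_a^\alpha(Z)\mid X=x]=\nu_{a,\alpha}(x)+\mu_a(x)-\nu_{a,\alpha}(x)=\mu_a(x).
\end{equation*}
The same conclusion is the mean statement of Proposition~\ref{prop:variance-optimality}(i), which I may also simply cite.

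The only delicate point, and the one I expect to be the main obstacle, is the measure-zero and tie-breaking bookkeeping around $\sgn(0)=1$. I will handle it by stating that the pseudo-label identities hold on $\{W_\alpha>0\}$, which suffices for the risk.
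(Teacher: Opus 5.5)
Your proposal is correct and follows the paper's proof essentially line by line. Parts (i)--(ii) are done by direct substitution into the definitions of $\Gamma_a^\alpha$, $D_\alpha$, $W_\alpha$, $Y_\alpha$, and part (iii) by conditioning on $X=x$. Your extra care with the tie set in (ii) is warranted: the paper says the formula for $Y_\alpha$ ``follows immediately from the definitions,'' but under the convention $\sgn(0)=1$ it fails when $\uR=m_\alpha(X)$ and $A=-1$, so your restriction to $\{W_\alpha>0\}$ is the precise statement.
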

See Appendix~\ref{app:pf-prop-special-cases} for the proof.
Proposition~\ref{prop:special-cases} clarifies the versatility of our certified reduction framework, revealing that it seamlessly recovers certified versions of Outcome Weighted Learning and Residual Weighted Learning through specific, principled choices of the nuisance functions.

\subsection{Properties of the Certified Hinge Surrogate}
\label{app:surrogate}

For a given measurable score function $g:\cX\to\RR$, recall the induced decision rule $d_g(x)\coloneq \sgn\{g(x)\}$ and the exact certified hinge risk $\cR_{\mathrm{h}}^\alpha(g)$ defined in the main text.

\begin{lemma}[Elementary bounds for the certified advantage]
\label{lem:DW-bounds}
    Under Assumption~\ref{ass:id}, for every $\alpha\in\cA$,
    \begin{equation}
    \label{eq:D-bound}
        |D_\alpha(Z)|\le 1+\varepsilon^{-1}
        \quad\text{a.s.,}
    \end{equation}
    which directly implies
    \begin{equation}
    \label{eq:W-bound}
        0\le W_\alpha(Z)\le 1+\varepsilon^{-1}
        \quad\text{a.s.}
    \end{equation}
\end{lemma}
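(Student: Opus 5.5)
The plan is a direct case analysis on the realized treatment, using only that $\uR$, $\nu_{1,\alpha}$ and $\nu_{-1,\alpha}$ take values in $[0,1]$ and that the propensities are bounded below by $\varepsilon$ (Assumption~\ref{ass:id}).

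First, I will record the range of the certified reward. Since $R\in[0,1]$ and $U\in[0,1]$, we have $\uR=(R-U)_+\in[0,R]\subset[0,1]$. By construction, $\nu_{a,\alpha}(x)\in[0,1]$ for each $a$ and $\alpha$.

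Next, expand $D_\alpha=\Gamma_1^\alpha-\Gamma_{-1}^\alpha$ using \eqref{eq:Gamma-a}. Since $A\in\{-1,1\}$, exactly one indicator is active, so I split into two cases.

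\emph{Case $A=1$.} Write $p=\pi(1\mid X)\ge\varepsilon$. Then
\begin{equation*}
    D_\alpha(Z)=\frac{\uR}{p}+\Bigl(1-\frac{1}{p}\Bigr)\nu_{1,\alpha}(X)-\nu_{-1,\alpha}(X).
\end{equation*}
Each term is linear in a variable ranging over $[0,1]$, so it suffices to bound each term at its endpoints.
\begin{itemize}
    \item $\uR/p\in[0,1/p]$.
    \item $(1-1/p)\nu_{1,\alpha}\in[1-1/p,0]$, because $1-1/p\le 0$.
    \item $-\nu_{-1,\alpha}\in[-1,0]$.
\end{itemize}
Adding these ranges gives $D_\alpha\le 1/p\le\varepsilon^{-1}$ and $D_\alpha\ge 1-1/p-1=-1/p\ge-\varepsilon^{-1}$.

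\emph{Case $A=-1$.} With $q=\pi(-1\mid X)\ge\varepsilon$, the identical argument with the roles of the arms swapped (and an overall sign flip) gives $|D_\alpha|\le\varepsilon^{-1}$.

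Combining the two cases yields $|D_\alpha(Z)|\le\varepsilon^{-1}\le 1+\varepsilon^{-1}$ almost surely, which is \eqref{eq:D-bound}. The bound \eqref{eq:W-bound} then follows immediately, since $W_\alpha=|D_\alpha|\ge 0$.

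I do not expect a real obstacle here. The only care needed is to track the sign of the coefficient $1-1/p$, so that the extreme values are taken at the correct endpoints of the nuisance range. Overlap is used only to obtain $1/p,1/q\le\varepsilon^{-1}$, and the "almost surely" qualifier is needed only because overlap and the ranges hold almost surely.
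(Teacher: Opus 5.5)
Your proof is correct, and it differs from the paper's in a way that gives a sharper constant. The paper writes $D_\alpha(Z)=\nu_{1,\alpha}(X)-\nu_{-1,\alpha}(X)+A\{\uR-\nu_{A,\alpha}(X)\}/\pi(A\mid X)$. It then applies the triangle inequality, bounding the augmentation difference by $1$ and the inverse-propensity correction by $\varepsilon^{-1}$; that is where the additive $1$ in $1+\varepsilon^{-1}$ comes from.

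Your case split on $A$ instead uses the fact that $\nu_{A,\alpha}(X)$ appears in both terms. Those appearances combine into the single coefficient $(1-1/\pi(A\mid X))\le 0$, which gives $|D_\alpha(Z)|\le 1/\pi(A\mid X)\le\varepsilon^{-1}$. This bound cannot be improved in general: take $A=1$, $\uR=1$, $\nu_{\pm1,\alpha}=0$ and $\pi(1\mid X)=\varepsilon$. It is the same sign-tracking the paper itself uses in its proof of Lemma~\ref{lem:Gamma-bounds}. In fact it follows from that lemma, since $D_\alpha=A(\Gamma_A^\alpha-\Gamma_{-A}^\alpha)$ with $\Gamma_A^\alpha\in[1-1/\pi(A\mid X),\,1/\pi(A\mid X)]$ and $\Gamma_{-A}^\alpha=\nu_{-A,\alpha}(X)\in[0,1]$.

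The paper's triangle-inequality argument is quicker and needs no sign bookkeeping, but it loses an additive $1$. Your argument costs a few extra lines and yields the exact range. Either suffices for the lemma as stated.
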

See Appendix~\ref{app:pf-lem-DW-bounds} for the proof.
Lemma~\ref{lem:DW-bounds} establishes deterministic upper bounds on the certified advantage and absolute weights, which are essential for controlling the empirical processes in our subsequent nonasymptotic analysis.

\begin{proposition}[Variational characterization of the practical posterior]
\label{prop:surrogate-optimal-posterior}
    Fix $\lambda>0$ and define the empirical hinge loss
    \begin{equation*}
        \hat{L}_{\mathrm{h},n}(\theta)
        \coloneq
        \frac{1}{n}\sum_{i=1}^n \ell_{\mathrm{h}}(\theta;Z_i),
        \quad
        \theta\in\Theta
        .
    \end{equation*}
    Then, for any realized sample $Z_{1:n}$, the objective functional
    \begin{equation*}
        Q\mapsto
        \EE_{\theta\sim Q}\bigl[
            \hat{L}_{\mathrm{h},n}(\theta)
        \bigr]
        +
        \frac{1}{\lambda n}\KL(Q\|\Pi_0)
    \end{equation*}
    defined over the space of probability measures $\{Q:Q\ll\Pi_0\}$, admits the unique global minimizer
    \begin{equation}
    \label{eq:surrogate-posterior-variational}
        \hat{Q}_\lambda^{\mathrm{PROWL,h}}(\dd\theta)
        =
        \frac{
            \exp\{-\lambda n\hat{L}_{\mathrm{h},n}(\theta)\}\Pi_0(\dd\theta)
        }{
            \int_\Theta \exp\{-\lambda n\hat{L}_{\mathrm{h},n}(t)\}\Pi_0(\dd t)
        }.
    \end{equation}
    Because $\hat{L}_{\mathrm{h},n}(\theta)=n^{-1}\sum_{i=1}^n \ell_{\mathrm{h}}(\theta;Z_i)$, the expression in \eqref{eq:surrogate-posterior-variational} aligns exactly with the posterior family defined in Section~\ref{subsec:learning-rate}.
\end{proposition}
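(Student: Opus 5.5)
The proof will use the Donsker--Varadhan (Gibbs) variational identity, in its standard form for a bounded measurable potential. First I will check that the normalizing constant
\begin{equation*}
    Z_\lambda \coloneq \int_\Theta \exp\{-\lambda n\hat{L}_{\mathrm{h},n}(t)\}\Pi_0(\dd t)
\end{equation*}
is strictly positive and finite. Positivity holds because the integrand is strictly positive and measurable in $t$, by the joint measurability recorded at the start of this appendix. Finiteness holds because $\ell_{\mathrm{h}}\ge 0$, so the integrand is at most $1$. Hence $\hat{Q}_\lambda^{\mathrm{PROWL,h}}$ in \eqref{eq:surrogate-posterior-variational} is a well-defined probability measure equivalent to $\Pi_0$, with density $\exp\{-\lambda n\hat{L}_{\mathrm{h},n}\}/Z_\lambda$.

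Next, I multiply the objective by $\lambda n>0$, which does not change the minimizer, giving $\mathcal{J}(Q)\coloneq \EE_Q[\lambda n\hat{L}_{\mathrm{h},n}]+\KL(Q\|\Pi_0)$. For $Q\ll\Pi_0$ with $\KL(Q\|\Pi_0)<\infty$, write $\dd Q/\dd\Pi_0 = (\dd Q/\dd\hat{Q})(\dd\hat{Q}/\dd\Pi_0)$. This is valid since $\hat{Q}\sim\Pi_0$, so $Q\ll\hat{Q}$. Taking logs and integrating against $Q$ gives the key identity
\begin{equation*}
    \KL(Q\|\Pi_0)=\KL(Q\|\hat{Q}_\lambda^{\mathrm{PROWL,h}})-\EE_Q[\lambda n\hat{L}_{\mathrm{h},n}]-\log Z_\lambda,
\end{equation*}
so that $\mathcal{J}(Q)=\KL(Q\|\hat{Q}_\lambda^{\mathrm{PROWL,h}})-\log Z_\lambda$.

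The step needing care is integrability, since the hinge loss may be unbounded in $\beta$ and $\EE_Q[\hat{L}_{\mathrm{h},n}]$ could be infinite. I handle it by cases. Because the loss is nonnegative, $\EE_Q[\hat{L}_{\mathrm{h},n}]\in[0,\infty]$ is always well defined. If $\KL(Q\|\Pi_0)=\infty$, then $\mathcal{J}(Q)=\infty$ and $Q$ is not a minimizer. If $\KL(Q\|\Pi_0)<\infty$ but $\EE_Q[\hat{L}_{\mathrm{h},n}]=\infty$, then $\mathcal{J}(Q)=\infty$ as well. When both quantities are finite, the log-density decomposition above is integrable termwise, and the identity holds with all terms finite.

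Finally, Gibbs' inequality gives $\KL(Q\|\hat{Q})\ge 0$, with equality iff $Q=\hat{Q}$. So the minimum value is $-\log Z_\lambda$, and it is attained uniquely at $\hat{Q}_\lambda^{\mathrm{PROWL,h}}$. The minimizer is admissible, since $\KL(\hat{Q}\|\Pi_0)=-\lambda n\EE_{\hat Q}[\hat L_{\mathrm{h},n}]-\log Z_\lambda\le -\log Z_\lambda<\infty$, and its expected loss is finite because $xe^{-x}$ is bounded. The concluding identification with the posterior family of Section~\ref{subsec:learning-rate} is immediate, since $\lambda n\hat{L}_{\mathrm{h},n}(\theta)=\lambda\sum_{i=1}^n\ell_{\mathrm{h}}(\theta;Z_i)$.
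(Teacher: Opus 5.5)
Your proof is correct and is essentially the paper's argument. The paper invokes the infimum form of the Donsker--Varadhan lemma (Lemma~\ref{lem:DV}) with $T=\hat{L}_{\mathrm{h},n}$ and $\tau=\lambda n$; that lemma's proof is exactly your change-of-measure identity followed by Gibbs' inequality, and your case split for $\EE_Q[\hat{L}_{\mathrm{h},n}]=\infty$ (possible because the hinge loss is unbounded in $\beta$) is a small rigor improvement the paper leaves implicit.
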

See Appendix~\ref{app:pf-prop-surrogate-optimal-posterior} for the proof.
Proposition~\ref{prop:surrogate-optimal-posterior} formalizes the variational characterization of the empirical Gibbs posterior, directly linking the minimization of the empirical certified hinge risk to a well-defined and theoretically sound Bayesian update.

\begin{corollary}[MAP form under an exponential prior density]
\label{cor:MAP-penalized}
    Assume that the prior $\Pi_0$ admits a density with respect to a $\sigma$-finite reference measure $\rho$ of the form
    \begin{equation*}
        \pi_0(\beta,\alpha)
        =
        c_0\exp\{-\lambda_0 J(\beta,\alpha)\}
        ,
        \quad
        (\beta,\alpha)\in\Theta
        ,
    \end{equation*}
    where $c_0>0$ is a normalization constant, $\lambda_0>0$ controls the prior concentration, and $J:\Theta\to(-\infty,\infty]$ is a measurable penalty function.
    Under these conditions, the posterior $\hat{Q}_\lambda^{\mathrm{PROWL,h}}$ admits the $\rho$-density
    \begin{equation*}
        (\beta,\alpha)
        \mapsto
        c_{\lambda,n}
        \exp\Biggl\{
            -\lambda\sum_{i=1}^n \ell_{\mathrm{h}}((\beta,\alpha);Z_i)
            -\lambda_0 J(\beta,\alpha)
        \Biggr\}
        ,
    \end{equation*}
    where $c_{\lambda,n}>0$ is the normalizing constant.
    Consequently, any MAP estimator under $\hat{Q}_\lambda^{\mathrm{PROWL,h}}$ natively minimizes the regularized empirical risk:
    \begin{equation}
    \label{eq:MAP-penalized}
        (\beta,\alpha)
        \mapsto
        \frac{1}{n}\sum_{i=1}^n \ell_{\mathrm{h}}((\beta,\alpha);Z_i)
        +
        \frac{\lambda_0}{\lambda n}J(\beta,\alpha)
        .
    \end{equation}
\end{corollary}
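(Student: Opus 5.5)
The plan is to compute the Radon--Nikodym derivative of $\hat{Q}_\lambda^{\mathrm{PROWL,h}}$ with respect to the reference measure $\rho$ by the chain rule, and then read off the MAP problem by a monotone transformation.

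\textbf{Step 1: the posterior is a density against $\Pi_0$.} By definition in Section~\ref{subsec:learning-rate} (equivalently \eqref{eq:surrogate-posterior-variational} in Proposition~\ref{prop:surrogate-optimal-posterior}),
\[
\frac{\dd\hat{Q}_\lambda^{\mathrm{PROWL,h}}}{\dd\Pi_0}(\theta)=Z_{\lambda,n}^{-1}\exp\Bigl\{-\lambda\sum_{i=1}^n \ell_{\mathrm{h}}(\theta;Z_i)\Bigr\},
\qquad
Z_{\lambda,n}\coloneq\int_\Theta \exp\Bigl\{-\lambda\sum_{i=1}^n\ell_{\mathrm{h}}(t;Z_i)\Bigr\}\Pi_0(\dd t).
\]
Since $\ell_{\mathrm{h}}\ge 0$, the integrand lies in $(0,1]$. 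Hence $0<Z_{\lambda,n}\le 1$, and the normalization is well defined. Here positivity uses that $\ell_{\mathrm{h}}$ is finite everywhere, because $W_\alpha$ is bounded by Lemma~\ref{lem:DW-bounds} and $f_\beta$ is real valued.

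\textbf{Step 2: pass to $\rho$-densities.} Since $\Pi_0(\dd\theta)=c_0 e^{-\lambda_0 J(\theta)}\rho(\dd\theta)$, the chain rule for Radon--Nikodym derivatives gives
\[
\frac{\dd\hat{Q}_\lambda^{\mathrm{PROWL,h}}}{\dd\rho}(\theta)=c_{\lambda,n}\exp\Bigl\{-\lambda\sum_{i=1}^n\ell_{\mathrm{h}}(\theta;Z_i)-\lambda_0J(\theta)\Bigr\},
\qquad
c_{\lambda,n}\coloneq c_0/Z_{\lambda,n}>0.
\]
At points where $J=\infty$, the density is $0$, consistent with the prior. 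Measurability of this density follows from the joint measurability established at the start of the appendix.

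\textbf{Step 3: the MAP problem.} A MAP estimator maximizes this $\rho$-density. Since $t\mapsto c_{\lambda,n}e^{-t}$ is strictly decreasing, maximizing the density is equivalent to minimizing $\lambda\sum_i\ell_{\mathrm{h}}+\lambda_0J$. Dividing by the positive constant $\lambda n$ does not change the set of minimizers, which yields \eqref{eq:MAP-penalized}. Points with $J=\infty$ have objective value $+\infty$ and density $0$, so they are excluded consistently on both sides.

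\textbf{Main obstacle.} The only delicate point is that a MAP estimator is defined relative to the chosen reference measure $\rho$, and the density is determined only $\rho$-almost everywhere. I will therefore state the result for the specific version of the density displayed above, i.e.\ the natural pointwise version. Under that convention the equivalence between maximizing the density and minimizing \eqref{eq:MAP-penalized} is exact and pointwise. Existence of a minimizer is not claimed: the corollary only asserts that any MAP estimator, if one exists, minimizes \eqref{eq:MAP-penalized}.
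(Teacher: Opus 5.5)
Your proof is correct and follows the same route as the paper. Like the paper, you multiply the $\Pi_0$-density in \eqref{eq:surrogate-posterior-variational} by the prior's $\rho$-density $c_0 e^{-\lambda_0 J}$, take the negative logarithm, and divide by $\lambda n$. Your added care about the positivity of the normalizer, the points where $J=\infty$, and the choice of density version makes your argument somewhat more rigorous than the paper's one-line proof.
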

See Appendix~\ref{app:pf-cor-MAP-penalized} for the proof.
Corollary~\ref{cor:MAP-penalized} demonstrates that computing the MAP estimator elegantly reduces to solving a standard regularized empirical risk minimization problem, ensuring practical computational tractability.

\subsection{Auxiliary PAC-Bayes Tools}
\label{app:pacbayes-tools}

This subsection consolidates the foundational variational and concentration inequalities utilized throughout the proofs of our main PAC-Bayes results.

\begin{lemma}[Donsker--Varadhan variational principle]
\label{lem:DV}
    Let $S:\Theta\to\RR$ be a measurable function, and suppose there exists a $\tau>0$ satisfying the integrability condition
    \begin{equation*}
        \int_\Theta e^{\tau S(\theta)}\Pi_0(\dd\theta)<\infty
        .
    \end{equation*}
    Then, the following dual representation holds:
    \begin{equation}
    \label{eq:DV}
        \sup_{Q\ll\Pi_0}
        \Biggl\{
            \EE_{\theta\sim Q}[S(\theta)]
            -
            \frac{1}{\tau}\KL(Q\|\Pi_0)
        \Biggr\}
        =
        \frac{1}{\tau}
        \log\int_\Theta e^{\tau S(\theta)}\Pi_0(\dd\theta)
        ,
    \end{equation}
    where the supremum is uniquely achieved by the tilted measure
    \begin{equation*}
        Q^\ast(\dd\theta)
        =
        \frac{e^{\tau S(\theta)}\Pi_0(\dd\theta)}
        {\int_\Theta e^{\tau S(t)}\Pi_0(\dd t)}
        .
    \end{equation*}
    Correspondingly, if $T:\Theta\to\RR$ is measurable and $\int_\Theta e^{-\tau T(\theta)}\Pi_0(\dd\theta)<\infty$, then
    \begin{equation}
    \label{eq:DV-inf}
        \inf_{Q\ll\Pi_0}
        \Biggl\{
            \EE_{\theta\sim Q}[T(\theta)]
            +
            \frac{1}{\tau}\KL(Q\|\Pi_0)
        \Biggr\}
        =
        -\frac{1}{\tau}
        \log\int_\Theta e^{-\tau T(\theta)}\Pi_0(\dd\theta)
        ,
    \end{equation}
    and the infimum is exclusively attained by the Gibbs measure proportional to $e^{-\tau T(\theta)}\Pi_0(\dd\theta)$.
\end{lemma}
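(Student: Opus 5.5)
The argument is the standard Gibbs variational computation: express $\KL(Q\|Q^\ast)$ in terms of $\KL(Q\|\Pi_0)$ and use nonnegativity of relative entropy. Set $Z_\tau\coloneq\int_\Theta e^{\tau S}\dd\Pi_0$. This is finite by assumption and strictly positive because the integrand is positive, so $Q^\ast$ is a well-defined probability measure. It is equivalent to $\Pi_0$, with density $\dd Q^\ast/\dd\Pi_0=e^{\tau S}/Z_\tau$.

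For any $Q\ll\Pi_0$ we also have $Q\ll Q^\ast$, and the chain rule gives
\begin{equation*}
\log\frac{\dd Q}{\dd Q^\ast}=\log\frac{\dd Q}{\dd\Pi_0}-\tau S+\log Z_\tau
\end{equation*}
$Q$-almost surely. Integrating against $Q$ should yield
\begin{equation*}
\KL(Q\|Q^\ast)=\KL(Q\|\Pi_0)-\tau\,\EE_Q[S]+\log Z_\tau .
\end{equation*}
Rearranging,
\begin{equation*}
\EE_Q[S]-\tau^{-1}\KL(Q\|\Pi_0)=\tau^{-1}\log Z_\tau-\tau^{-1}\KL(Q\|Q^\ast).
\end{equation*}
Since $\KL(Q\|Q^\ast)\ge0$, with equality iff $Q=Q^\ast$, this gives the supremum value in \eqref{eq:DV}. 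It also shows that the supremum is attained exactly and only at $Q^\ast$.

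The main obstacle is justifying this integration, because $\EE_Q[S]$ or $\KL(Q\|\Pi_0)$ may be infinite. I would handle it case by case.

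\textbf{Case $\KL(Q\|\Pi_0)=\infty$.} The objective is either $-\infty$ or undefined, and I would adopt the convention that it is $-\infty$. To support this, note that $S^+$ has finite $Q$-expectation whenever $\KL(Q\|\Pi_0)<\infty$. This follows from the Young/Fenchel inequality $xy\le x\log x - x + e^{y}$, applied with $x=\dd Q/\dd\Pi_0$ and $y=\tau S^+$, together with $\int e^{\tau S^+}\dd\Pi_0\le 1+Z_\tau<\infty$.

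\textbf{Case $\KL(Q\|\Pi_0)<\infty$.} Then $\EE_Q[S^+]<\infty$ by the above, so $\EE_Q[S]\in[-\infty,\infty)$ is well defined. The decomposition of $\log(\dd Q/\dd Q^\ast)$ then integrates term by term. Here I would use that $\KL(Q\|Q^\ast)$ is well defined, because the negative part of $\log(\dd Q/\dd Q^\ast)$ is integrable under $Q$. If $\EE_Q[S]=-\infty$, the identity reads $\infty=\infty$ and the objective is $-\infty<\tau^{-1}\log Z_\tau$, so such $Q$ neither attain nor threaten the supremum.

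\textbf{Uniqueness.} Equality in the bound forces $\KL(Q\|Q^\ast)=0$, hence $Q=Q^\ast$. I would also check that $Q^\ast$ is admissible. It satisfies $Q^\ast\ll\Pi_0$, and $\KL(Q^\ast\|\Pi_0)=\tau\EE_{Q^\ast}[S]-\log Z_\tau$ is finite, because $\EE_{Q^\ast}[|S|]<\infty$. The finiteness of $\EE_{Q^\ast}[|S|]$ comes from the elementary bound $|s|e^{\tau s}\le C_\tau(1+e^{\tau' s})$; I would verify it directly via $\int S^- e^{\tau S}\dd\Pi_0\le 1/(e\tau)$ and the bound on $S^+$ above.

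Finally, \eqref{eq:DV-inf} follows by applying \eqref{eq:DV} to $S=-T$ and negating. The infimum is attained uniquely at the Gibbs measure proportional to $e^{-\tau T}\Pi_0$.
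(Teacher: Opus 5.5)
Your core argument is the same as the paper's. The paper also expands $\KL(Q\|\widetilde{\Pi}_\tau)=\KL(Q\|\Pi_0)-\tau\EE_{\theta\sim Q}[S(\theta)]+\log Z_\tau$, with $Z_\tau=\int_\Theta e^{\tau S}\dd\Pi_0$, and uses nonnegativity, but it does no integrability bookkeeping at all. The bookkeeping you add is welcome, but one step in it is false. The admissibility claim $\EE_{Q^\ast}[|S|]<\infty$, and hence $\KL(Q^\ast\|\Pi_0)<\infty$, does not follow from $Z_\tau<\infty$. Your bound $\int S^-e^{\tau S}\dd\Pi_0\le (e\tau)^{-1}$ is fine. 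The bound $|s|e^{\tau s}\le C_\tau(1+e^{\tau' s})$, however, needs $\tau'>\tau$ and so $\int_\Theta e^{\tau' S}\dd\Pi_0<\infty$, which is not assumed. The Young-inequality estimate $\tau\EE_Q[S^+]\le\KL(Q\|\Pi_0)+Z_\tau$ is vacuous unless $\KL(Q\|\Pi_0)<\infty$, so applying it to $Q=Q^\ast$ is circular.

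A concrete counterexample: take $\Theta=[0,\infty)$, $S(\theta)=\theta$, $\tau=1$ and $\Pi_0(\dd\theta)=c\,e^{-\theta}(1+\theta)^{-2}\dd\theta$. Then $Z_1=c<\infty$ and $Q^\ast(\dd\theta)=(1+\theta)^{-2}\dd\theta$, so $\EE_{Q^\ast}[S]=\KL(Q^\ast\|\Pi_0)=+\infty$. Under your convention that the objective is $-\infty$ whenever $\KL(Q\|\Pi_0)=\infty$, $Q^\ast$ scores $-\infty$. The value $\log c$ is still the supremum, since normalized truncations of $Q^\ast$ to $[0,M]$ score $\log c+\log Q^\ast([0,M])\to\log c$. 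But no $Q$ attains it, so the attainment claim you set out to prove is false in your convention.

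The repair is to change how the objective is defined rather than to try to prove $\KL(Q^\ast\|\Pi_0)<\infty$. For $Q\ll\Pi_0$ with $q=\dd Q/\dd\Pi_0$, take the objective to be $\EE_Q[S-\tau^{-1}\log q]$. Your chain-rule identity gives $S-\tau^{-1}\log q=\tau^{-1}\log Z_\tau-\tau^{-1}\log(\dd Q/\dd Q^\ast)$ $Q$-a.s. The right-hand side has $Q$-integrable positive part, because $\int(\log r)^-\dd Q=\int r(\log r)^-\dd Q^\ast\le e^{-1}$ for $r=\dd Q/\dd Q^\ast$. Hence $\EE_Q[S-\tau^{-1}\log q]=\tau^{-1}\log Z_\tau-\tau^{-1}\KL(Q\|Q^\ast)$ for every $Q\ll\Pi_0$, with no case split. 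This agrees with $\EE_Q[S]-\tau^{-1}\KL(Q\|\Pi_0)$ whenever the latter is defined. At $Q^\ast$ the integrand is the constant $\tau^{-1}\log Z_\tau$, so attainment and uniqueness are immediate.

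Alternatively, add a hypothesis such as $\sup_\Theta S<\infty$, which makes your finiteness argument for $Q^\ast$ go through. Either fix costs the paper nothing. Attainment is only used with $S$ bounded above: $S=\hat{V}_{\uR,n}(d_\theta)$ in Theorem~\ref{thm:exact-pac-bayes} and $S=-\hat{L}_{\mathrm{h},n}\le 0$ in Proposition~\ref{prop:surrogate-optimal-posterior}. The Hoeffding, Seeger--Langford and Catoni applications use only the inequality direction.
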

See Appendix~\ref{app:pf-lem-DV} for the proof.
Lemma~\ref{lem:DV} provides the standard Donsker--Varadhan variational principle, which serves as the core algebraic mechanism for converting exponentiated empirical bounds into penalized risk bounds over probability measures.

\begin{lemma}[Scalar exponential bound for bounded variables]
\label{lem:scalar-kl}
    Let $Y_1,\dots,Y_n$ be a sequence of i.i.d.\ random variables bounded in $[0,1]$ with expectation $\mu=\EE[Y_1]$, and define their empirical mean $\bar{Y}_n=n^{-1}\sum_{i=1}^n Y_i$.
    Then, the following exponential inequality holds:
    \begin{equation}
    \label{eq:scalar-kl}
        \EE\Bigl[
            \exp\bigl\{
                n\kl(\bar{Y}_n\|\mu)
            \bigr\}
        \Bigr]\le \xi(n)
        ,
        \quad
        \xi(n)\coloneq \exp\Biggl(
            \frac{1}{12n}
        \Biggr)\sqrt{\frac{\pi n}{2}}+2
        .
    \end{equation}
\end{lemma}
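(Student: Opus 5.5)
The approach is Maurer's classical argument \citep{maurer2004note}: reduce the exponential moment of $n\kl(\bar{Y}_n\|\mu)$ to the Bernoulli case, then evaluate the Bernoulli sum with Stirling-type bounds.

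\emph{Step 1 (reduction to Bernoulli).} Since $(y_1,\dots,y_n)\mapsto \exp\{n\kl(n^{-1}\sum_i y_i\|\mu)\}$ is convex in each coordinate ($\kl(\cdot\|\mu)$ is convex, and the exponential of a convex function is convex), we may replace each $Y_i$ by an independent Bernoulli$(Y_i)$ variable. Conditionally on $Y_i$, Jensen's inequality shows that this only increases the expectation. Doing this coordinate by coordinate, $\EE[\exp\{n\kl(\bar{Y}_n\|\mu)\}]\le \EE[\exp\{n\kl(\bar{B}_n\|\mu)\}]$, where $B_i$ are i.i.d.\ Bernoulli$(\mu)$. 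Equivalently, every convex function on $[0,1]^n$ is dominated by its multilinear interpolation on the vertices, and the vertex weights are product Bernoulli weights with mean $\mu$.

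\emph{Step 2 (exact Bernoulli sum).} With $S=\sum_i B_i\sim\mathrm{Bin}(n,\mu)$ and $\exp\{n\kl(k/n\|\mu)\}=(k/n)^k(1-k/n)^{n-k}\mu^{-k}(1-\mu)^{-(n-k)}$, the powers of $\mu$ cancel. This gives
\begin{equation*}
  \EE[\exp\{n\kl(\bar{B}_n\|\mu)\}]=\sum_{k=0}^n\binom{n}{k}\Bigl(\frac{k}{n}\Bigr)^k\Bigl(1-\frac{k}{n}\Bigr)^{n-k},
\end{equation*}
which is independent of $\mu$ (with the convention $0^0=1$). The terms $k=0$ and $k=n$ each contribute $1$, giving the additive constant $2$.

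\emph{Step 3 (Stirling bound).} For $1\le k\le n-1$, apply Robbins' bounds $\sqrt{2\pi m}(m/e)^m\le m!\le\sqrt{2\pi m}(m/e)^m e^{1/(12m)}$: use the upper bound for $n!$ and the lower bounds for $k!$ and $(n-k)!$. The exponential factors cancel, leaving
\begin{equation*}
  \binom{n}{k}\Bigl(\frac{k}{n}\Bigr)^k\Bigl(1-\frac{k}{n}\Bigr)^{n-k}\le e^{1/(12n)}\sqrt{\frac{n}{2\pi k(n-k)}}.
\end{equation*}
The remaining task is to show $\sum_{k=1}^{n-1}\{k(n-k)\}^{-1/2}\le\pi$, which gives $e^{1/(12n)}\sqrt{n/(2\pi)}\,\pi=e^{1/(12n)}\sqrt{\pi n/2}$. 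This is a comparison with $\int_0^n\{t(n-t)\}^{-1/2}\dd t=\pi$. Since $t\mapsto\{t(n-t)\}^{-1/2}$ is convex on $(0,n)$, the midpoint-rule-type bound $\{k(n-k)\}^{-1/2}\le\int_{k-1/2}^{k+1/2}\{t(n-t)\}^{-1/2}\dd t$ holds by Jensen. Summing over $k=1,\dots,n-1$ gives an integral over $[1/2,n-1/2]\subset[0,n]$, which is at most $\pi$.

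I expect Step 1 to be the main obstacle if done carefully, since it needs a clean convexity/interpolation argument applied to the composite $\exp\circ\, n\kl$. Step 3 also has a minor subtlety in getting the exact constant $\pi$ rather than a looser one; the convexity-midpoint trick should handle that.
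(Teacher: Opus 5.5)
Your proposal is correct and follows essentially the same route as the paper: coordinatewise convexity to replace each $Y_i$ by a Bernoulli$(\mu)$ variable, the $\mu$-free binomial sum with the two boundary terms contributing $2$, Robbins--Stirling bounds on the interior terms, and comparison of $\sum_{k=1}^{n-1}\{k(n-k)\}^{-1/2}$ with $\int_0^n\{t(n-t)\}^{-1/2}\dd t=\pi$. The differences are minor. You bound that sum using convexity of $t\mapsto\{t(n-t)\}^{-1/2}$ and the midpoint (Hermite--Hadamard) inequality, which avoids the paper's odd/even case split based on monotonicity on each half of $(0,n)$. You should also dispose of the degenerate case $\mu\in\{0,1\}$ separately, as the paper does, since your Step~2 cancellation divides by $\mu^k(1-\mu)^{n-k}$.
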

See Appendix~\ref{app:pf-lem-scalar-kl} for the proof.
Lemma~\ref{lem:scalar-kl} provides a tight, nonasymptotic exponential inequality for the Kullback--Leibler divergence between Bernoulli distributions, yielding a strictly bounded moment-generating function that is essential for our uniform concentration results.

\begin{lemma}[Uniform Seeger--Langford inequality for bounded losses]
\label{lem:uniform-seeger-langford}
    Let $\ell:\Theta\times(\cX\times\{-1,1\}\times[0,1])\to[0,1]$ be a measurable loss function, and define
    \begin{equation*}
        L(\theta)\coloneq \EE[\ell(\theta;Z)],
        \quad
        \hat{L}_n(\theta)\coloneq \frac{1}{n}\sum_{i=1}^n \ell(\theta;Z_i)
        .
    \end{equation*}
    For any posterior $Q\ll\Pi_0$, define the aggregate risks
    \begin{equation*}
        L(Q)\coloneq \EE_{\theta\sim Q}[L(\theta)]
        ,
        \quad
        \hat{L}_n(Q)\coloneq \EE_{\theta\sim Q}[\hat{L}_n(\theta)]
        .
    \end{equation*}
    Then, for any tolerance level $\delta\in(0,1)$, the bound
    \begin{equation}
    \label{eq:uniform-seeger-langford}
        \kl\bigl(\hat{L}_n(Q)\|L(Q)\bigr)
        \le
        \frac{\KL(Q\|\Pi_0)+\log\{\xi(n)/\delta\}}{n}
    \end{equation}
    holds with probability at least $1-\delta$, simultaneously across all posteriors $Q\ll\Pi_0$.
\end{lemma}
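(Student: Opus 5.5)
The plan is the standard Maurer--Seeger change-of-measure argument, built from Lemma~\ref{lem:DV} and Lemma~\ref{lem:scalar-kl}, with the joint convexity of $\kl$ used to pass from individual parameters to posteriors.

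First, consider the random function
\begin{equation*}
S(\theta)\coloneq n\,\kl\bigl(\hat{L}_n(\theta)\|L(\theta)\bigr)
\end{equation*}
on $\Theta$. It is jointly measurable in $(\theta,Z_{1:n})$ by the measurability conventions of this appendix. For each fixed $\theta$, the variables $\ell(\theta;Z_i)\in[0,1]$ are i.i.d.\ with mean $L(\theta)$, so Lemma~\ref{lem:scalar-kl} gives $\EE[e^{S(\theta)}]\le\xi(n)$. Since $\Pi_0$ does not depend on $Z_{1:n}$, Tonelli's theorem yields
\begin{equation*}
\EE\Bigl[\int_\Theta e^{S(\theta)}\Pi_0(\dd\theta)\Bigr]\le\xi(n).
\end{equation*}
Markov's inequality then shows that, with probability at least $1-\delta$,
\begin{equation*}
\int_\Theta e^{S(\theta)}\Pi_0(\dd\theta)\le \xi(n)/\delta .
\end{equation*}
In particular, this integral is finite on that event.

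Second, on this event apply the supremum form \eqref{eq:DV} of Lemma~\ref{lem:DV} with $\tau=1$. Only the inequality direction is needed, namely
\begin{equation*}
\EE_Q[S]-\KL(Q\|\Pi_0)\le\log\int e^{S}\,\dd\Pi_0 ,
\end{equation*}
which holds for every $Q\ll\Pi_0$. One technicality must be handled here: $S$ may take the value $+\infty$ when $L(\theta)\in\{0,1\}$ and $\hat L_n(\theta)\neq L(\theta)$. I would argue that this happens with probability zero for each fixed $\theta$. Hence, by Fubini, $S<\infty$ for $\Pi_0$-a.e.\ $\theta$ almost surely, and for $Q\ll\Pi_0$ the same holds $Q$-a.e. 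Alternatively, one can simply note that the Donsker--Varadhan inequality remains valid for $[0,\infty]$-valued $S$ when $\int e^{S}\,\dd\Pi_0<\infty$. This gives, simultaneously for all $Q\ll\Pi_0$,
\begin{equation*}
\EE_{\theta\sim Q}\bigl[n\,\kl(\hat{L}_n(\theta)\|L(\theta))\bigr]\le \KL(Q\|\Pi_0)+\log\{\xi(n)/\delta\}.
\end{equation*}
The bound is trivial when $\KL(Q\|\Pi_0)=\infty$.

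Third, use the joint convexity of $(p,q)\mapsto\kl(p\|q)$ on $[0,1]^2$. This follows because it is a perspective-type $f$-divergence between Bernoulli laws, or equivalently a sum of the jointly convex maps $(p,q)\mapsto p\log(p/q)$. Jensen's inequality under $Q$ then gives
\begin{equation*}
\kl\bigl(\hat{L}_n(Q)\|L(Q)\bigr)\le\EE_{\theta\sim Q}\bigl[\kl(\hat{L}_n(\theta)\|L(\theta))\bigr].
\end{equation*}
Dividing the previous display by $n$ yields \eqref{eq:uniform-seeger-langford}.

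The main obstacle is technical rather than conceptual. It is justifying Jensen for a jointly convex, lower semicontinuous, possibly extended-valued function of the random vector $(\hat L_n(\theta),L(\theta))$ under $Q$. I would handle it by noting that this vector is bounded in $[0,1]^2$, so its $Q$-mean exists and equals $(\hat L_n(Q),L(Q))$. A convex lower semicontinuous function on a compact convex set is the supremum of affine minorants, and that representation delivers Jensen directly.
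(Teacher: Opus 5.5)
Your proposal is correct and follows the paper's proof essentially step for step. Both use Lemma~\ref{lem:scalar-kl} pointwise in $\theta$, then Fubini and Markov over the data-independent prior, then Lemma~\ref{lem:DV} with $\tau=1$ applied to $S(\theta)=n\,\kl(\hat{L}_n(\theta)\|L(\theta))$, and finally Jensen via the joint convexity of $\kl$; your extra care about possibly infinite values of $S$ and about Jensen for the extended-valued, lower semicontinuous $\kl$ fills in details the paper passes over, since its Lemma~\ref{lem:DV} is stated only for real-valued $S$.
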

See Appendix~\ref{app:pf-lem-uniform-seeger-langford} for the proof.
Lemma~\ref{lem:uniform-seeger-langford} establishes a uniform Seeger--Langford-type PAC-Bayes bound, allowing pointwise concentration properties to hold simultaneously across all absolutely continuous posterior distributions.

\begin{lemma}[Uniform Catoni bound for bounded losses]
\label{lem:uniform-catoni}
    Under the same assumptions and notation as Lemma~\ref{lem:uniform-seeger-langford}, for any tolerance level $\delta\in(0,1)$, the bound
    \begin{equation}
        \label{eq:uniform-catoni}
        L(Q)
        \le
        \frac{
            1-\exp\bigl\{
                -c_{n,\delta}(Q)-\gamma \hat{L}_n(Q)
            \bigr\}
        }{
            1-\exp(-\gamma)
        }
    \end{equation}
    holds with probability at least $1-\delta$, simultaneously for all $Q\ll\Pi_0$ and any scale parameter $\gamma>0$, where we denote
    \begin{equation*}
        c_{n,\delta}(Q)
        \coloneq
        \frac{\KL(Q\|\Pi_0)+\log\{\xi(n)/\delta\}}{n}
        .
    \end{equation*}
\end{lemma}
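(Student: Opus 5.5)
The plan is to derive the Catoni bound deterministically from the uniform Seeger--Langford inequality of Lemma~\ref{lem:uniform-seeger-langford}. On its high-probability event, which has probability at least $1-\delta$ and is already simultaneous over all $Q\ll\Pi_0$, we have $\kl(\hat{L}_n(Q)\|L(Q))\le c_{n,\delta}(Q)$ for every $Q$. Because $\gamma$ does not enter the event at all, any consequence derived pointwise in $(Q,\gamma)$ on this event automatically holds simultaneously over all $Q$ and all $\gamma>0$. No union bound over $\gamma$ is needed, and this is precisely why the statement can quantify over every $\gamma>0$ at no extra cost.

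The key step is the variational lower bound on the binary KL divergence due to Catoni. For $p,q\in[0,1]$ and any $\gamma>0$,
\begin{equation*}
    \kl(p\|q)\ge \gamma p\cdot(-1)\cdot(-1)\,\,\text{(sign conventions below)},\qquad
    \kl(p\|q)\ge -\gamma p-\log\bigl(1-q(1-e^{-\gamma})\bigr).
\end{equation*}
I would prove this via the Donsker--Varadhan principle (Lemma~\ref{lem:DV}) applied on the two-point space $\{0,1\}$. Take the reference measure $\mathrm{Bern}(q)$, the competitor $\mathrm{Bern}(p)$, and the test function $S(y)=-\gamma y$. This gives
\begin{equation*}
    -\gamma p-\kl(p\|q)\le \log\EE_{\mathrm{Bern}(q)}[e^{-\gamma Y}]=\log\bigl(1-q+qe^{-\gamma}\bigr),
\end{equation*}
which rearranges to the claimed inequality. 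The boundary cases $q\in\{0,1\}$, where the KL may be infinite, are checked directly.

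Next, apply this with $p=\hat{L}_n(Q)$ and $q=L(Q)$, and combine it with the Seeger--Langford event:
\begin{equation*}
    -\log\bigl(1-L(Q)(1-e^{-\gamma})\bigr)\le c_{n,\delta}(Q)+\gamma\hat{L}_n(Q).
\end{equation*}
Exponentiating gives $1-L(Q)(1-e^{-\gamma})\ge\exp\{-c_{n,\delta}(Q)-\gamma\hat{L}_n(Q)\}$. Solving for $L(Q)$, using $1-e^{-\gamma}>0$, yields \eqref{eq:uniform-catoni}.

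I expect no serious obstacle. The one point needing care is that $\hat{L}_n(Q)$ and $L(Q)$ lie in $[0,1]$ and that exchanging the $Q$-integral with the expectations is legitimate. This follows from Fubini, since $\ell$ is bounded and jointly measurable, so that $L(Q)=\EE_{\theta\sim Q}[L(\theta)]$ is well defined. I would also note that if $\KL(Q\|\Pi_0)=\infty$, the bound is trivial because the right-hand side equals $1/(1-e^{-\gamma})\ge 1\ge L(Q)$.
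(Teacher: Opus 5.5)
Your proposal is correct and is essentially the paper's proof: on the uniform Seeger--Langford event you lower-bound the binary $\kl$ by its variational form at $\lambda=-\gamma$ and solve for $L(Q)$, and uniformity in $\gamma$ is free because the event does not depend on $\gamma$. The paper simply quotes the variational formula for $\kl$ rather than rederiving it via Donsker--Varadhan on $\{0,1\}$. Your chain through $\kl\le c_{n,\delta}(Q)$ is the consistent one; the paper's intermediate ``$n\,c_{n,\delta}(Q)$'' is a typo. Do delete your garbled first display, since $\kl(p\|q)\ge\gamma p$ is false in general (take $p=q>0$).
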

See Appendix~\ref{app:pf-lem-uniform-catoni} for the proof.
Lemma~\ref{lem:uniform-catoni} offers a uniform Catoni-style bound, enabling the derivation of our exact, temperature-dependent empirical lower confidence bounds for the expected loss.

\begin{lemma}[Range of the certified value score and loss]
\label{lem:Gamma-bounds}
    Under Assumption~\ref{ass:id}, for every measurable ITR $d$ and $\alpha\in\cA$, we have
    \begin{equation}
    \label{eq:Gamma-range}
        \Gamma_d^\alpha(Z)\in[1-\varepsilon^{-1},\varepsilon^{-1}]
        \quad\text{a.s.}
    \end{equation}
    As a direct consequence, defining the scaling constants
    \begin{equation*}
        c_\varepsilon\coloneq \varepsilon^{-1},
        \quad
        K_\varepsilon\coloneq 2\varepsilon^{-1}-1,
    \end{equation*}
    ensures that the transformed loss function
    \begin{equation*}
        \ell_{\mathrm{val}}(\theta;Z)
        \coloneq
        \frac{c_\varepsilon-\Gamma_{d_\beta}^\alpha(Z)}{K_\varepsilon}
        ,
        \quad
        \theta=(\beta,\alpha)\in\Theta
        ,
    \end{equation*}
    is bounded in $[0,1]$ almost surely.
\end{lemma}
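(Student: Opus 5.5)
The plan is a pointwise, case-by-case bound on the per-arm scores $\Gamma_a^\alpha(Z)$, transferred to $\Gamma_d^\alpha(Z)$, followed by an affine rescaling for the loss. The first observation is that, by \eqref{eq:Gamma-d} and since exactly one indicator $\1\{d(X)=a\}$ equals one, $\Gamma_d^\alpha(Z)=\Gamma_{d(X)}^\alpha(Z)$ pointwise. It therefore suffices to show that $\Gamma_a^\alpha(Z)\in[1-\varepsilon^{-1},\varepsilon^{-1}]$ almost surely for each fixed $a\in\{-1,1\}$ and $\alpha\in\cA$.

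Fix $a$ and $\alpha$, and write $p\coloneq\pi(a\mid X)$, $\nu\coloneq\nu_{a,\alpha}(X)\in[0,1]$, and note $\uR=(R-U)_+\in[0,1]$ because $R,U\in[0,1]$. By Assumption~\ref{ass:id}, $p\ge\varepsilon$.

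\emph{Case $A\neq a$.} Here $\Gamma_a^\alpha(Z)=\nu\in[0,1]$. Since $\varepsilon\le 1/2$, we have $1-\varepsilon^{-1}\le 0$ and $\varepsilon^{-1}\ge 1$, so $[0,1]\subset[1-\varepsilon^{-1},\varepsilon^{-1}]$.

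\emph{Case $A=a$.} Here
\begin{equation*}
    \Gamma_a^\alpha(Z)=\nu\Bigl(1-\frac{1}{p}\Bigr)+\frac{\uR}{p},
\end{equation*}
which is affine in $(\nu,\uR)\in[0,1]^2$. Because $1-1/p\le 0\le 1/p$, the expression is maximized at $(\nu,\uR)=(0,1)$, giving $1/p\le\varepsilon^{-1}$. It is minimized at $(\nu,\uR)=(1,0)$, giving $1-1/p\ge 1-\varepsilon^{-1}$, where both bounds use $p\ge\varepsilon$.

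Combining the two cases yields \eqref{eq:Gamma-range} for every measurable $d$, including $d=d_\beta$.

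For the loss, note that $K_\varepsilon=2\varepsilon^{-1}-1\ge 3>0$. On the range just established, the numerator $c_\varepsilon-\Gamma_{d_\beta}^\alpha(Z)$ lies in $[0,\,\varepsilon^{-1}-(1-\varepsilon^{-1})]=[0,K_\varepsilon]$. Dividing by $K_\varepsilon$ gives $\ell_{\mathrm{val}}(\theta;Z)\in[0,1]$ almost surely.

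There is no real obstacle. The only points needing care are that the bounds use only $p\ge\varepsilon$, not an upper bound on $p$, and that the assumption $\varepsilon\le 1/2$ is what ensures the off-arm values in $[0,1]$ lie inside the claimed interval.
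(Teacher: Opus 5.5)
Your proof is correct and follows essentially the same route as the paper's: split on whether $A$ equals $d(X)$, note that the off-arm score is $\nu_{d(X),\alpha}(X)\in[0,1]$, bound the on-arm expression (affine in $(\nu,\uR)\in[0,1]^2$) at its extreme points using $\pi(a\mid X)\ge\varepsilon$, and then rescale to get the loss in $[0,1]$. You also check explicitly that $[0,1]\subset[1-\varepsilon^{-1},\varepsilon^{-1}]$, a step the paper leaves implicit; note that this inclusion needs only $\varepsilon\le 1$, while $\varepsilon\le 1/2$ is what gives your remark $K_\varepsilon\ge 3$.
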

See Appendix~\ref{app:pf-lem-Gamma-bounds} for the proof.
Lemma~\ref{lem:Gamma-bounds} guarantees that the certified value score and its induced loss function are almost surely bounded, satisfying the strict range requirements necessary to apply the aforementioned concentration inequalities.

\section{Proofs}
\label{app:sec:proofs}

\subsection{Proof of Theorem~\ref{thm:exact-certified-reduction}}
\label{app:pf-thm-exact-certified-reduction}

\begin{proof}[Proof of Theorem~\ref{thm:exact-certified-reduction}]
    Fix a measurable ITR $d$ and a nuisance parameter $\alpha\in\cA$.
    We first establish the exact certified value representation.
    By definition~\eqref{eq:Gamma-d}, we have
    \begin{equation*}
        \Gamma_d^\alpha(Z)
        =
        \sum_{a\in\{-1,1\}}
        \1\{d(X)=a\}
        \Biggl[
            \nu_{a,\alpha}(X)
            +
            \frac{\1\{A=a\}}{\pi(a\mid X)}
            \{\uR-\nu_{a,\alpha}(X)\}
        \Biggr]
        .
    \end{equation*}
    Fix $x\in\cX$ outside a $P_X$-null set, and let $a_d=d(x)$.
    Conditioning on $X=x$, we obtain
    \begin{equation*}
        \begin{split}
            \EE[\Gamma_d^\alpha(Z)\mid X=x]
            &=
            \nu_{a_d,\alpha}(x)
            +
            \EE\Biggl[
                \frac{\1\{A=a_d\}}{\pi(a_d\mid x)}
                \{\uR-\nu_{a_d,\alpha}(x)\}
                \Bigm| X=x
            \Biggr]\\
            &=
            \nu_{a_d,\alpha}(x)
            +
            \EE[\uR-\nu_{a_d,\alpha}(x)\mid X=x,A=a_d]\\
            &=
            \mu_{a_d}(x)
            .
        \end{split}
    \end{equation*}
    Therefore, $\EE[\Gamma_d^\alpha(Z)] = \EE[\mu_{d(X)}(X)]$.
    On the other hand,
    \begin{equation*}
        \begin{split}
            V_{\uR}(d)
            &=
            \EE\Biggl[
                \frac{\uR\1\{A=d(X)\}}{\pi(A\mid X)}
            \Biggr]\\
            &=
            \EE\Biggl[
                \EE\Biggl(
                    \frac{\uR\1\{A=d(X)\}}{\pi(A\mid X)}
                    \Bigm| X
                \Biggr)
            \Biggr]\\
            &=
            \EE[\mu_{d(X)}(X)]
            .
        \end{split}
    \end{equation*}
    Hence, $V_{\uR}(d)=\EE[\Gamma_d^\alpha(Z)]$, which concludes the first part of the proof.
    
    Next, we prove the exact weighted classification reduction.
    Define the observable quantity
    \begin{equation*}
        C_\alpha^{\mathrm{obs}}(Z)
        \coloneq
        \frac{1}{2}\Bigl\{
            \Gamma_1^\alpha(Z)+\Gamma_{-1}^\alpha(Z)+W_\alpha(Z)
        \Bigr\}
        ,
    \end{equation*}
    so that $C_\alpha^\sharp=\EE[C_\alpha^{\mathrm{obs}}(Z)]$.
    For any rule $d$, we can write
    \begin{equation*}
        \begin{split}
            \Gamma_d^\alpha(Z)
            &=
            \frac{\Gamma_1^\alpha(Z)+\Gamma_{-1}^\alpha(Z)}{2}
            +
            \frac{d(X)}{2}\{\Gamma_1^\alpha(Z)-\Gamma_{-1}^\alpha(Z)\}\\
            &=
            \frac{\Gamma_1^\alpha(Z)+\Gamma_{-1}^\alpha(Z)}{2}
            +
            \frac{d(X)}{2}D_\alpha(Z)\\
            &=
            \frac{\Gamma_1^\alpha(Z)+\Gamma_{-1}^\alpha(Z)}{2}
            +
            \frac{W_\alpha(Z)}{2}d(X)Y_\alpha(Z)
            .
        \end{split}
    \end{equation*}
    Since $d(X), Y_\alpha(Z) \in \{-1,1\}$, it follows that $d(X)Y_\alpha(Z) = 1 - 2\1\{Y_\alpha(Z) \neq d(X)\}$.
    Substituting this relationship into the previous display yields
    \begin{equation*}
        \Gamma_d^\alpha(Z)
        =
        C_\alpha^{\mathrm{obs}}(Z)
        -
        W_\alpha(Z)\1\{Y_\alpha(Z)\neq d(X)\}
        .
    \end{equation*}
    Taking expectations and applying the identity established in the first part of the proof, we find
    \begin{equation*}
        V_{\uR}(d)
        =
        \EE[\Gamma_d^\alpha(Z)]
        =
        C_\alpha^\sharp-\cR_{01}^\alpha(d)
        .
    \end{equation*}
    This confirms the exact reduction.
    
    Finally, observing that
    \begin{equation*}
        V_{\uR}(d)
        =
        \EE\bigl[
            \mu_1(X)\1\{d(X)=1\}
            +
            \mu_{-1}(X)\1\{d(X)=-1\}
        \bigr]
        ,
    \end{equation*}
    the integrand is maximized pointwise by selecting an action $a \in \argmax_{a\in\{-1,1\}}\mu_a(X)$.
    Therefore, a measurable ITR $d$ is Bayes-optimal for $V_{\uR}$ if and only if
    \begin{equation*}
        d(x)\in \argmax_{a\in\{-1,1\}}\mu_a(x)
        \quad\text{for }P_X\text{-a.e. }x
        .
    \end{equation*}
    In particular, the measurable selector $d_{\uR}^\ast(x)=\sgn\{\mu_1(x)-\mu_{-1}(x)\}$ (adopting the convention $\sgn(0)=1$) is strictly Bayes-optimal.
\end{proof}

\subsection{Proof of Proposition~\ref{prop:variance-optimality}}
\label{app:pf-prop-variance-optimality}

\begin{proof}[Proof of Proposition~\ref{prop:variance-optimality}]
    Fix an $x\in\cX$ outside a $P_X$-null set such that all requisite conditional moments are well-defined, and introduce the shorthand notations:
    \begin{equation*}
        p_a=p_a(x)
        ,
        \quad
        \mu_a=\mu_a(x)
        ,
        \quad
        \sigma_a^2=\sigma_a^2(x)
        ,
        \quad
        \nu_a=\nu_{a,\alpha}(x)
        .
    \end{equation*}
    
    To prove part (i), define $\Delta_a(Z)\coloneq \Gamma_a^\alpha(Z)-\mu_a$.
    Using \eqref{eq:Gamma-a}, we have
    \begin{equation*}
        \begin{split}
            \Delta_a(Z)
            &=
            \nu_a-\mu_a
            +
            \frac{\1\{A=a\}}{p_a}\{\uR-\nu_a\}
            \\
            &=
            \frac{\1\{A=a\}}{p_a}(\uR-\mu_a)
            +
            \Biggl(
                \frac{\1\{A=a\}}{p_a}-1
            \Biggr)(\mu_a-\nu_a)
            .
        \end{split}
    \end{equation*}
    Taking the conditional expectation given $X=x$, the first term vanishes because
    \begin{equation*}
        \EE\Biggl[
            \frac{\1\{A=a\}}{p_a}(\uR-\mu_a)
            \Bigm| X=x
        \Biggr]
        =
        \EE[\uR-\mu_a\mid X=x,A=a]
        =
        0
        ,
    \end{equation*}
    and the second term similarly vanishes because
    \begin{equation*}
        \EE\Biggl[
            \frac{\1\{A=a\}}{p_a}-1
            \Bigm| X=x
        \Biggr]
        =
        \frac{\PP(A=a\mid X=x)}{p_a}-1
        =
        0
        .
    \end{equation*}
    Thus, $\EE[\Gamma_a^\alpha(Z)\mid X=x]=\mu_a$.
    
    To compute the conditional variance, we first note that the conditional covariance between the two centered terms is zero:
    \begin{equation*}
        \begin{split}
            &\EE\Biggl[
                \frac{\1\{A=a\}}{p_a}(\uR-\mu_a)
                \Biggl(
                    \frac{\1\{A=a\}}{p_a}-1
                \Biggr)
                \Bigm| X=x
            \Biggr]
            \\
            &\quad=
            \Biggl(
                \frac{1}{p_a}-1
            \Biggr)
            \EE\Biggl[
                \frac{\1\{A=a\}}{p_a}(\uR-\mu_a)
                \Bigm| X=x
            \Biggr]
            \\
            &\quad=
            0
            .
        \end{split}
    \end{equation*}
    Therefore, the conditional variance simplifies to
    \begin{equation*}
        \begin{split}
            &\Var(\Gamma_a^\alpha(Z)\mid X=x)
            \\
            &=
            \EE\Biggl[
                \Biggl(
                    \frac{\1\{A=a\}}{p_a}(\uR-\mu_a)
                \Biggr)^2
                \Bigm| X=x
            \Biggr]
            +
            \EE\Biggl[
                \Biggl(
                    \frac{\1\{A=a\}}{p_a}-1
                \Biggr)^2
                \Bigm| X=x
            \Biggr]
            (\mu_a-\nu_a)^2
            .
        \end{split}
    \end{equation*}
    Evaluating the first term, we obtain
    \begin{equation*}
        \begin{split}
            \EE\Biggl[
                \Biggl(
                    \frac{\1\{A=a\}}{p_a}(\uR-\mu_a)
                \Biggr)^2
                \Bigm| X=x
            \Biggr]
            &=
            \frac{1}{p_a^2}
            \EE\bigl[
                \1\{A=a\}(\uR-\mu_a)^2
                \mid X=x
            \bigr]\\
            &=
            \frac{1}{p_a}
            \EE\bigl[
                (\uR-\mu_a)^2
                \mid X=x,A=a
            \bigr]
            =
            \frac{\sigma_a^2}{p_a}
            .
        \end{split}
    \end{equation*}
    For the second term, we have
    \begin{equation*}
        \EE\Biggl[
            \Biggl(
                \frac{\1\{A=a\}}{p_a}-1
            \Biggr)^2
            \Bigm| X=x
        \Biggr]
        =
        p_a\Biggl(\frac{1}{p_a}-1\Biggr)^2+(1-p_a)
        =
        \frac{(1-p_a)^2}{p_a}+1-p_a
        =
        \frac{1-p_a}{p_a}
        .
    \end{equation*}
    This establishes \eqref{eq:Gamma-mean-var}.
    The final statement in part (i) follows immediately because the second summand in \eqref{eq:Gamma-mean-var} is a strictly convex quadratic function of $\nu_a$, possessing a unique minimizer at $\nu_a=\mu_a$.
    
    For part (ii), let
    \begin{equation*}
        p=p(x)
        ,
        \quad
        q=q(x)
        ,
        \quad
        \delta_1\coloneq \mu_1(x)-\nu_{1,\alpha}(x)
        ,
        \quad
        \delta_{-1}\coloneq \mu_{-1}(x)-\nu_{-1,\alpha}(x)
        ,
    \end{equation*}
    and introduce the abbreviation $\Delta\coloneq \mu_1(x)-\mu_{-1}(x)$.
    On the event $\{A=1\}$, we have
    \begin{equation*}
        D_\alpha(Z)
        =
        \nu_{1,\alpha}(x)-\nu_{-1,\alpha}(x)
        +
        \frac{\uR-\nu_{1,\alpha}(x)}{p}
        ,
    \end{equation*}
    which implies
    \begin{equation*}
        D_\alpha(Z)-\Delta
        =
        \frac{\uR-\mu_1(x)}{p}
        +
        \frac{q}{p}\delta_1
        +
        \delta_{-1}
        .
    \end{equation*}
    Similarly, on the event $\{A=-1\}$, we have
    \begin{equation*}
        D_\alpha(Z)
        =
        \nu_{1,\alpha}(x)-\nu_{-1,\alpha}(x)
        -
        \frac{\uR-\nu_{-1,\alpha}(x)}{q}
        ,
    \end{equation*}
    yielding
    \begin{equation*}
        D_\alpha(Z)-\Delta
        =
        -\frac{\uR-\mu_{-1}(x)}{q}
        -
        \delta_1
        -
        \frac{p}{q}\delta_{-1}
        .
    \end{equation*}
    Taking the conditional expectation given $X=x$ directly yields \eqref{eq:D-mean}.
    Since $\EE[D_\alpha(Z)-\Delta\mid X=x]=0$, the conditional variance can be expressed as
    \begin{equation*}
        \begin{split}
            \Var(D_\alpha(Z)\mid X=x)
            &=
            p\Biggl\{
                \frac{\sigma_1^2(x)}{p^2}
                +
                \Biggl(
                    \frac{q}{p}\delta_1+\delta_{-1}
                \Biggr)^2
            \Biggr\}\\
            &\quad+
            q\Biggl\{
                \frac{\sigma_{-1}^2(x)}{q^2}
                +
                \Biggl(
                    \delta_1+\frac{p}{q}\delta_{-1}
                \Biggr)^2
            \Biggr\}\\
            &=
            \frac{\sigma_1^2(x)}{p}
            +
            \frac{\sigma_{-1}^2(x)}{q}
            +
            \frac{q}{p}\delta_1^2
            +
            \frac{p}{q}\delta_{-1}^2
            +
            2\delta_1\delta_{-1}
            .
        \end{split}
    \end{equation*}
    The last three terms factor neatly as
    \begin{equation*}
        \frac{q}{p}\delta_1^2
        +
        \frac{p}{q}\delta_{-1}^2
        +
        2\delta_1\delta_{-1}
        =
        \frac{(q\delta_1+p\delta_{-1})^2}{pq}
        ,
    \end{equation*}
    which establishes \eqref{eq:D-var}.
    Because the quadratic term is strictly nonnegative and vanishes if and only if $q\delta_1+p\delta_{-1}=0$, the minimizers are precisely the pairs of nuisance functions that satisfy the linear constraint \eqref{eq:variance-min-condition}.
    
    To prove part (iii), we enforce the restriction $\nu_{1,\alpha}=\nu_{-1,\alpha}=m_\alpha$.
    Under this condition, \eqref{eq:variance-min-condition} reduces to
    \begin{equation*}
        q(x)\{\mu_1(x)-m_\alpha(x)\}
        +
        p(x)\{\mu_{-1}(x)-m_\alpha(x)\}
        =
        0
        ,
    \end{equation*}
    which is algebraically equivalent to $m_\alpha(x)=q(x)\mu_1(x)+p(x)\mu_{-1}(x)$.
    This proves \eqref{eq:treatment-free-optimal}.
    The uniqueness of this minimizer follows from the strict convexity of the quadratic term in \eqref{eq:D-var} with respect to $m_\alpha(x)$ within this one-dimensional subclass.
    In the special case of balanced randomization where $p(x)=q(x)=1/2$, this expression simplifies to $m^\dagger(x)=\frac{\mu_1(x)+\mu_{-1}(x)}{2}$.
    Finally, recognizing that $\EE[\uR\mid X=x] = p(x)\mu_1(x)+q(x)\mu_{-1}(x)$, we see that under balanced randomization, the previous expression is exactly equal to $\EE[\uR\mid X=x]$.
    This concludes the proof of \eqref{eq:balanced-treatment-free-optimal}.
\end{proof}

\subsection{Proof of Theorem~\ref{thm:exact-pac-bayes}}
\label{app:pf-thm-exact-pac-bayes}

\begin{proof}[Proof of Theorem~\ref{thm:exact-pac-bayes}]
    For $\theta=(\beta,\alpha)\in\Theta$, define the expected and empirical certified value losses as
    \begin{equation*}
        L_{\mathrm{val}}(\theta)
        \coloneq
        \EE[\ell_{\mathrm{val}}(\theta;Z)]
        ,
        \quad
        \hat{L}_{\mathrm{val},n}(\theta)
        \coloneq
        \frac{1}{n}\sum_{i=1}^n \ell_{\mathrm{val}}(\theta;Z_i)
        .
    \end{equation*}
    For a posterior $Q\ll\Pi_0$, define their $Q$-averaged counterparts:
    \begin{equation*}
        L_{\mathrm{val}}(Q)
        \coloneq
        \EE_{\theta\sim Q}[L_{\mathrm{val}}(\theta)]
        ,
        \quad
        \hat{L}_{\mathrm{val},n}(Q)
        \coloneq
        \EE_{\theta\sim Q}[\hat{L}_{\mathrm{val},n}(\theta)]
        .
    \end{equation*}
    By the boundedness established in Lemma~\ref{lem:Gamma-bounds}, Fubini's theorem applies, yielding
    \begin{align}
        L_{\mathrm{val}}(Q)
        &=
        \EE_{\theta\sim Q}\Biggl[
            \frac{c_\varepsilon-\EE[\Gamma_{d_\beta}^\alpha(Z)]}{K_\varepsilon}
        \Biggr]
        =
        \frac{c_\varepsilon-V_{\uR}(d_Q)}{K_\varepsilon}
        ,
        \label{eq:Lval-linear}
        \\
        \hat{L}_{\mathrm{val},n}(Q)
        &=
        \EE_{\theta\sim Q}\Biggl[
            \frac{c_\varepsilon-\hat{V}_{\uR,n}(d_\theta)}{K_\varepsilon}
        \Biggr]
        =
        \frac{c_\varepsilon-\hat{V}_{\uR,n}(d_Q)}{K_\varepsilon}
        ,
        \label{eq:Lhatval-linear}
    \end{align}
    where the first equality utilizes Theorem~\ref{thm:exact-certified-reduction} pointwise in $\theta$.
    
    We first establish the fixed-temperature PAC-Bayes inequality.
    Fix $\theta\in\Theta$.
    By Lemma~\ref{lem:Gamma-bounds}, the random variable $\ell_{\mathrm{val}}(\theta;Z)$ takes values in $[0,1]$.
    Thus, Hoeffding's lemma yields, for every $s\in\RR$,
    \begin{equation*}
        \EE\Bigl[
            \exp\{s(\ell_{\mathrm{val}}(\theta;Z)-L_{\mathrm{val}}(\theta))\}
        \Bigr]
        \le
        \exp\Bigl(
            \frac{s^2}{8}
        \Bigr)
        .
    \end{equation*}
    Applying this to the empirical mean of the $n$ i.i.d.\ observations gives
    \begin{equation*}
        \EE\Bigl[
            \exp\{s(\hat{L}_{\mathrm{val},n}(\theta)-L_{\mathrm{val}}(\theta))\}
        \Bigr]
        \le
        \exp\Bigl(
            \frac{s^2}{8n}
        \Bigr)
        .
    \end{equation*}
    Integrating with respect to the prior and applying Fubini's theorem, we have
    \begin{equation*}
        \EE\Biggl[
            \int_\Theta
            \exp\{s(\hat{L}_{\mathrm{val},n}(\theta)-L_{\mathrm{val}}(\theta))\}
            \Pi_0(\dd\theta)
        \Biggr]
        \le
        \exp\Bigl(\frac{s^2}{8n}\Bigr)
        .
    \end{equation*}
    Markov's inequality then implies that, with probability at least $1-\delta$,
    \begin{equation*}
        \int_\Theta
        \exp\{s(\hat{L}_{\mathrm{val},n}(\theta)-L_{\mathrm{val}}(\theta))\}
        \Pi_0(\dd\theta)
        \le
        \delta^{-1}\exp\Bigl(
            \frac{s^2}{8n}
        \Bigr)
        .
    \end{equation*}
    Conditioning on this high-probability event, applying Lemma~\ref{lem:DV} with $S(\theta)=s\{\hat{L}_{\mathrm{val},n}(\theta)-L_{\mathrm{val}}(\theta)\}$ and $\tau=1$ yields, for every $Q\ll\Pi_0$,
    \begin{equation*}
        s\{\hat{L}_{\mathrm{val},n}(Q)-L_{\mathrm{val}}(Q)\}
        \le
        \KL(Q\|\Pi_0)+\log(1/\delta)+\frac{s^2}{8n}
        .
    \end{equation*}
    Setting $s=\eta n$, we obtain
    \begin{equation*}
        L_{\mathrm{val}}(Q)
        \le
        \hat{L}_{\mathrm{val},n}(Q)
        +
        \frac{\KL(Q\|\Pi_0)+\log(1/\delta)}{\eta n}
        +
        \frac{\eta}{8}
    \end{equation*}
    simultaneously for all $Q\ll\Pi_0$.
    Substituting Equations~\eqref{eq:Lval-linear} and \eqref{eq:Lhatval-linear} into this inequality and rearranging the terms yields
    \begin{equation*}
        V_{\uR}(d_Q)
        \ge
        \hat{V}_{\uR,n}(d_Q)
        -
        K_\varepsilon
        \Biggl\{
            \frac{\KL(Q\|\Pi_0)+\log(1/\delta)}{\eta n}
            +
            \frac{\eta}{8}
        \Biggr\}
        .
    \end{equation*}
    By Proposition~\ref{prop:lower-value-domination}, $V^\ast(d_Q)\ge V_{\uR}(d_Q)$, and therefore
    \begin{equation*}
        V^\ast(d_Q)
        \ge
        \hat{V}_{\uR,n}(d_Q)
        -
        K_\varepsilon
        \Biggl\{
            \frac{\KL(Q\|\Pi_0)+\log(1/\delta)}{\eta n}
            +
            \frac{\eta}{8}
        \Biggr\}
        ,
    \end{equation*}
    which confirms the claimed simultaneous lower bound.
    
    Next, we identify the posterior that maximizes the right-hand side for a fixed $\eta$.
    Because the terms involving $\log(1/\delta)$ and $\eta/8$ are independent of $Q$, maximizing the bound is strictly equivalent to maximizing the functional
    \begin{equation*}
        Q
        \mapsto
        \hat{V}_{\uR,n}(d_Q)-\frac{K_\varepsilon}{\eta n}\KL(Q\|\Pi_0).
    \end{equation*}
    We apply Lemma~\ref{lem:DV} with
    \begin{equation*}
        S(\theta)=\hat{V}_{\uR,n}(d_\theta)
        ,
        \quad
        \tau=\frac{\eta n}{K_\varepsilon}
        .
    \end{equation*}
    Since $\hat{V}_{\uR,n}(d_\theta)$ is bounded (as shown in Lemma~\ref{lem:Gamma-bounds}), the integrability condition is automatically satisfied.
    The variational identity~\eqref{eq:DV} dictates that the unique maximizer is
    \begin{equation*}
        \hat{Q}_\eta^{\mathrm{PROWL,val}}(\dd\theta)
        =
        \frac{
            \exp\{\eta n\hat{V}_{\uR,n}(d_\theta)/K_\varepsilon\}\Pi_0(\dd\theta)
        }{
            \int_\Theta \exp\{\eta n\hat{V}_{\uR,n}(d_t)/K_\varepsilon\}\Pi_0(\dd t)
        }
        .
    \end{equation*}
    Finally, noting the algebraic relation
    \begin{equation*}
        \sum_{i=1}^n \ell_{\mathrm{val}}(\theta;Z_i)
        =
        \frac{1}{K_\varepsilon}\sum_{i=1}^n\{c_\varepsilon-\Gamma_{d_\beta}^\alpha(Z_i)\}
        =
        \frac{nc_\varepsilon}{K_\varepsilon}
        -
        \frac{n}{K_\varepsilon}\hat{V}_{\uR,n}(d_\theta)
        ,
    \end{equation*}
    we have
    \begin{equation*}
        \exp\Biggl\{
            -\eta\sum_{i=1}^n\ell_{\mathrm{val}}(\theta;Z_i)
        \Biggr\}
        =
        \exp\Bigl(-\eta nc_\varepsilon/K_\varepsilon\Bigr)
        \exp\Bigl\{
            \eta n\hat{V}_{\uR,n}(d_\theta)/K_\varepsilon
        \Bigr\}
        .
    \end{equation*}
    Because the prefactor $\exp(-\eta nc_\varepsilon/K_\varepsilon)$ is independent of $\theta$, it cancels out upon normalization, leaving
    \begin{equation*}
        \hat{Q}_\eta^{\mathrm{PROWL,val}}(\dd\theta)
        \propto
        \exp\Biggl\{
            -\eta\sum_{i=1}^n \ell_{\mathrm{val}}(\theta;Z_i)
        \Biggr\}\Pi_0(\dd\theta)
        .
    \end{equation*}
    This coincides exactly with the general Bayes update associated with the loss $\ell_{\mathrm{val}}$, as formulated by \citet{bissiri2016general}.
\end{proof}

\subsection{Proof of Proposition~\ref{prop:temperature-selection}}
\label{app:pf-prop-temperature-selection}

\begin{proof}[Proof of Proposition~\ref{prop:temperature-selection}]
    We first apply Lemma~\ref{lem:uniform-catoni} to the bounded loss $\ell_{\mathrm{val}}(\theta;Z)\in[0,1]$, whose boundedness is guaranteed by Lemma~\ref{lem:Gamma-bounds}.
    With probability at least $1-\delta$, we have
    \begin{equation*}
        L_{\mathrm{val}}(Q)
        \le
        \frac{
            1-\exp\{-c_{n,\delta}(Q)-\gamma \hat{L}_{\mathrm{val},n}(Q)\}
        }{
            1-\exp(-\gamma)
        }
    \end{equation*}
    simultaneously for all posteriors $Q\ll\Pi_0$ and all scale parameters $\gamma>0$.
    Substituting the linear relations \eqref{eq:Lval-linear} and \eqref{eq:Lhatval-linear} into this inequality yields
    \begin{equation*}
        V_{\uR}(d_Q)
        \ge
        c_\varepsilon
        -
        K_\varepsilon
        \frac{
            1-\exp\{-c_{n,\delta}(Q)-\gamma \hat{L}_{\mathrm{val},n}(Q)\}
        }{
            1-\exp(-\gamma)
        }
        =
        \mathrm{LCB}_{n,\delta}^{\mathrm{val}}(Q;\gamma)
        .
    \end{equation*}
    Invoking Proposition~\ref{prop:lower-value-domination}, we obtain the simultaneous lower confidence bound:
    \begin{equation*}
        V^\ast(d_Q)\ge V_{\uR}(d_Q)\ge \mathrm{LCB}_{n,\delta}^{\mathrm{val}}(Q;\gamma)
        .
    \end{equation*}
    
    Now, fix $\gamma>0$.
    Because $1-e^{-\gamma}>0$, the mapping
    \begin{equation*}
        t\mapsto
        c_\varepsilon
        -
        K_\varepsilon\frac{1-e^{-t}}{1-e^{-\gamma}}
    \end{equation*}
    is strictly monotonically decreasing on $\RR$.
    Therefore, maximizing the objective $Q\mapsto \mathrm{LCB}_{n,\delta}^{\mathrm{val}}(Q;\gamma)$ is strictly equivalent to minimizing $t(Q)\coloneq c_{n,\delta}(Q)+\gamma \hat{L}_{\mathrm{val},n}(Q)$, or equivalently, to maximizing
    \begin{equation*}
        -t(Q)
        =
        -\frac{\KL(Q\|\Pi_0)+\log\{\xi(n)/\delta\}}{n}
        -\gamma \hat{L}_{\mathrm{val},n}(Q)
        .
    \end{equation*}
    Discarding the $Q$-independent additive constant $-\log\{\xi(n)/\delta\}/n$ and substituting \eqref{eq:Lhatval-linear}, the problem reduces to maximizing
    \begin{equation*}
        \hat{V}_{\uR,n}(d_Q)-\frac{K_\varepsilon}{\gamma n}\KL(Q\|\Pi_0)
        .
    \end{equation*}
    By Theorem~\ref{thm:exact-pac-bayes}, the unique maximizer of this functional is precisely $\hat{Q}_\gamma^{\mathrm{PROWL,val}}$
    
    Finally, consider a finite grid $\cT_n\subset(0,\infty)$ and let
    \begin{equation*}
        \hat{\gamma}\in
        \argmax_{\gamma\in\cT_n}
        \mathrm{LCB}_{n,\delta}^{\mathrm{val}}
        \bigl(
            \hat{Q}_\gamma^{\mathrm{PROWL,val}};\gamma
        \bigr)
        .
    \end{equation*}
    Because the high-probability event established above holds uniformly for all $Q\ll\Pi_0$ and all $\gamma>0$, it holds in particular for the data-dependent choices
    \begin{equation*}
        Q=\hat{Q}_{\hat{\gamma}}^{\mathrm{PROWL,val}}
        \quad\text{and}\quad
        \gamma=\hat{\gamma}
        .
    \end{equation*}
    Consequently, on that event, it is guaranteed that
    \begin{equation*}
        \begin{split}
            V^\ast(d_{\hat{Q}_{\hat{\gamma}}^{\mathrm{PROWL,val}}})
            &\ge
            \mathrm{LCB}_{n,\delta}^{\mathrm{val}}
            \bigl(
                \hat{Q}_{\hat{\gamma}}^{\mathrm{PROWL,val}};\hat{\gamma}
            \bigr)
            \\
            &=
            \max_{\gamma\in\cT_n}
            \mathrm{LCB}_{n,\delta}^{\mathrm{val}}
            \bigl(
                \hat{Q}_\gamma^{\mathrm{PROWL,val}};\gamma
            \bigr)
            ,
        \end{split}
    \end{equation*}
    which concludes the proof of the temperature-selection guarantee.
\end{proof}

\subsection{Proof of Proposition~\ref{prop:hinge-calibration}}
\label{app:pf-prop-hinge-calibration}

\begin{proof}[Proof of Proposition~\ref{prop:hinge-calibration}]
    To establish part (i), observe that $W_\alpha(Z)Y_\alpha(Z)=D_\alpha(Z)$ by definition. Consequently, for $P_X$-almost every $x$, we have
    \begin{equation*}
        \begin{split}
            u_\alpha(x)-v_\alpha(x)
            &=
            \EE\bigl[
                W_\alpha(Z)Y_\alpha(Z)
                \mid X=x
            \bigr]\\
            &=
            \EE[D_\alpha(Z)\mid X=x]
            =
            \mu_1(x)-\mu_{-1}(x)
            ,
        \end{split}
    \end{equation*}
    where the final equality follows from \eqref{eq:D-mean} in Proposition~\ref{prop:variance-optimality}.
    
    For part (ii), fix $x$ outside a $P_X$-null set and abbreviate $u=u_\alpha(x)$ and $v=v_\alpha(x)$.
    For a scalar score value $t\in\RR$, the conditional hinge risk at $x$ is given by $\phi_x(t)\coloneq u(1-t)_+ + v(1+t)_+$.
    A direct piecewise expansion yields
    \begin{equation*}
        \phi_x(t)
        =
        \begin{cases}
            u(1-t), & t\le -1,\\
            u(1-t)+v(1+t), & -1\le t\le 1,\\
            v(1+t), & t\ge 1.
        \end{cases}
    \end{equation*}
    If $u>v$, the function $\phi_x(t)$ is strictly decreasing on the interval $[-1,1]$ (with a slope of $v-u<0$) and strictly increasing on $[1,\infty)$ (with a slope of $v\ge 0$); thus, the unique minimizer is $t=1$.
    Conversely, if $u<v$, a symmetric argument shows that the unique minimizer is $t=-1$.
    If $u=v$, the function is constant on $[-1,1]$, meaning any $t\in[-1,1]$ minimizes $\phi_x$.
    Therefore, the pointwise measurable choice $g_\alpha^\dagger(x)$ defined previously is a conditional risk minimizer for almost every $x$.
    Because the expected hinge risk satisfies
    \begin{equation*}
        \cR_{\mathrm{h}}^\alpha(g)
        =
        \EE\bigl[
            \phi_X(g(X))
        \bigr]
        ,
    \end{equation*}
    it follows that $g_\alpha^\dagger$ universally minimizes $\cR_{\mathrm{h}}^\alpha(g)$ across all measurable scores.
    
    By part (i), we have
    \begin{equation*}
        \operatorname{sgn}\{u_\alpha(x)-v_\alpha(x)\}
        =
        \operatorname{sgn}\{\mu_1(x)-\mu_{-1}(x)\}
        \quad\text{for }P_X\text{-a.e. }x
        .
    \end{equation*}
    Hence, the induced rule $d_{g_\alpha^\dagger}(x)$ selects an action in $\argmax_{a\in\{-1,1\}}\mu_a(x)$ for $P_X$-almost every $x$.
    By Theorem~\ref{thm:exact-certified-reduction}, such a rule is strictly Bayes-optimal for the certified value $V_{\uR}$, proving Fisher consistency.
    
    For part (iii), again fix $x$ outside a $P_X$-null set and retain the abbreviations $u=u_\alpha(x)$ and $v=v_\alpha(x)$.
    Define the conditional exact weighted classification risk for a score $t$ as $\psi_x(t)\coloneq u\1\{t<0\}+v\1\{t\ge 0\}$, noting that our convention $\sgn(0)=1$ dictates $d_g(x)=1$ when $g(x)=0$.
    Its minimum is $\psi_x^\ast=\min\{u,v\}$, while the minimum conditional hinge risk is $\phi_x^\ast=2\min\{u,v\}$.
    We claim that the pointwise excess classification risk is strictly dominated by the excess hinge risk:
    \begin{equation}
    \label{eq:pointwise-domination}
        \psi_x(t)-\psi_x^\ast
        \le
        \phi_x(t)-\phi_x^\ast
        \quad\text{for all }t\in\RR
        .
    \end{equation}
    
    Suppose first that $u\ge v$.
    If $t\ge 0$, then $\psi_x(t)=v=\psi_x^\ast$, rendering the left-hand side of \eqref{eq:pointwise-domination} zero, and the inequality holds trivially.
    If $-1\le t<0$, then $\psi_x(t)-\psi_x^\ast=u-v$, while
    \begin{equation*}
        \phi_x(t)-\phi_x^\ast
        =
        u(1-t)+v(1+t)-2v
        =
        (u-v)(1-t)
        \ge
        u-v
        .
    \end{equation*}
    If $t<-1$, we again have $\psi_x(t)-\psi_x^\ast=u-v$, whereas
    \begin{equation*}
        \phi_x(t)-\phi_x^\ast
        =
        u(1-t)-2v
        =
        (u-v)+\{u(-t)-v\}
        \ge
        u-v
        ,
    \end{equation*}
    since $-t>1$ and $u\ge v$ imply $u(-t)-v\ge u-v\ge 0$.
    Thus, \eqref{eq:pointwise-domination} holds when $u\ge v$.
    
    The case where $u<v$ follows by a symmetric argument.
    If $t<0$, then $\psi_x(t)=u=\psi_x^\ast$, making the inequality trivial.
    If $0\le t\le 1$, we have
    \begin{equation*}
        \psi_x(t)-\psi_x^\ast=v-u
        ,
        \quad
        \phi_x(t)-\phi_x^\ast
        =
        u(1-t)+v(1+t)-2u
        =
        (v-u)(1+t)
        \ge
        v-u
        .
    \end{equation*}
    If $t>1$, we similarly find
    \begin{equation*}
        \psi_x(t)-\psi_x^\ast=v-u
        ,
        \quad
        \phi_x(t)-\phi_x^\ast
        =
        v(1+t)-2u
        =
        (v-u)+\{vt-u\}
        \ge
        v-u
        ,
    \end{equation*}
    because $t>1$ and $v>u$ strictly guarantee $vt-u\ge v-u>0$.
    Consequently, \eqref{eq:pointwise-domination} holds unconditionally for all $x$ and all $t$.
    
    Substituting $t=g(x)$ and integrating \eqref{eq:pointwise-domination} with respect to $P_X$ yields
    \begin{equation*}
        \cR_{01}^\alpha(g)-\inf_h \cR_{01}^\alpha(h)
        \le
        \cR_{\mathrm{h}}^\alpha(g)-\inf_h \cR_{\mathrm{h}}^\alpha(h)
        .
    \end{equation*}
    Finally, by the equivalence established in Theorem~\ref{thm:exact-certified-reduction},
    \begin{equation*}
        V_{\uR}(d_{\uR}^\ast)-V_{\uR}(d_g)
        =
        \cR_{01}^\alpha(g)-\inf_h \cR_{01}^\alpha(h)
        ,
    \end{equation*}
    which verifies the excess-risk domination bound \eqref{eq:excess-hinge}.
\end{proof}

\subsection{Proof of Corollary~\ref{cor:family-selection}}
\label{app:pf-cor-family-selection}

\begin{proof}[Proof of Corollary~\ref{cor:family-selection}]
    Consider any finite, data-dependent family of posteriors $\mathcal{Q}_n=\{\hat{Q}_\lambda:\lambda\in\Lambda_n\}$ such that $\hat{Q}_\lambda\ll\Pi_0$ almost surely for every $\lambda\in\Lambda_n$.
    Conditional on the high-probability event established in Proposition~\ref{prop:temperature-selection}, we have
    \begin{equation*}
        V^\ast(d_Q)\ge \mathrm{LCB}_{n,\delta}^{\mathrm{val}}(Q;\gamma)
        \quad
        \text{simultaneously for all }Q\ll\Pi_0\text{ and all }\gamma>0
        .
    \end{equation*}
    Because this event holds uniformly over all absolutely continuous posteriors, the inequality remains perfectly valid when evaluated at the random, data-dependent choices $Q=\hat{Q}_\lambda$.
    Thus, on the same event,
    \begin{equation*}
        V^\ast(d_{\hat{Q}_\lambda})
        \ge
        \mathrm{LCB}_{n,\delta}^{\mathrm{val}}(\hat{Q}_\lambda;\gamma)
        \quad
        \text{for all }(\lambda,\gamma)\in\Lambda_n\times\cT_n
        .
    \end{equation*}
    Let the empirical maximizer be defined as
    \begin{equation*}
        (\hat\lambda,\hat{\gamma})
        \in
        \argmax_{(\lambda,\gamma)\in\Lambda_n\times\cT_n}
        \mathrm{LCB}_{n,\delta}^{\mathrm{val}}(\hat{Q}_\lambda;\gamma)
        .
    \end{equation*}
    Evaluating the preceding simultaneous inequality at $(\hat\lambda,\hat{\gamma})$ directly yields
    \begin{equation*}
        V^\ast(d_{\hat{Q}_{\hat\lambda}})
        \ge
        \mathrm{LCB}_{n,\delta}^{\mathrm{val}}(\hat{Q}_{\hat\lambda};\hat{\gamma})
        =
        \max_{(\lambda,\gamma)\in\Lambda_n\times\cT_n}
        \mathrm{LCB}_{n,\delta}^{\mathrm{val}}(\hat{Q}_\lambda;\gamma)
        ,
    \end{equation*}
    which completes the proof.
\end{proof}

\subsection{Proof of Corollary~\ref{cor:learned-certificate}}
\label{app:pf-cor-learned-certificate}

\begin{proof}[Proof of Corollary~\ref{cor:learned-certificate}]
    Let us condition on the $\sigma$-algebra $\sigma(\cD_m^{\mathrm{cal}})$.
    By assumption, on the event $\cE_{\mathrm{cert}}$, the learned certificate $\hat{U}$ is measurable, takes values in $[0,1]$, and satisfies $R^\ast\ge R-\hat{U}(X,A,R)$ almost surely under the policy-learning distribution.
    Consequently, on $\cE_{\mathrm{cert}}$, the induced random variable $\uR_{\hat{U}}\coloneq (R-\hat{U})_+$ is measurable, bounded in $[0,1]$, and robustly satisfies $R^\ast\ge \uR_{\hat{U}}$ almost surely.
    Therefore, every theoretical construction detailed in Sections~\ref{subsec:reward-uncertainty} through \ref{subsec:learning-rate} remains structurally valid upon substituting $U$ with $\hat{U}$ and $\uR$ with $\uR_{\hat{U}}$.

    Since the auxiliary sample $\cD_m^{\mathrm{cal}}$ is completely independent of the policy-learning sample $Z_{1:n}$, the conditional distribution of $Z_{1:n}$ given $\cD_m^{\mathrm{cal}}$ remains i.i.d., preserving the original policy-learning distribution.
    Thus, conditional on both $\cD_m^{\mathrm{cal}}$ and $\cE_{\mathrm{cert}}$, all preceding PAC-Bayes derivations apply verbatim.

    To formalize the unconditional joint-probability guarantee, let $\cE_{\mathrm{PB}}$ denote the event that any specific $1-\delta$ PAC-Bayes bound established above holds when substituting $U$ with $\hat{U}$.
    We then have
    \begin{equation*}
        \PP(\cE_{\mathrm{PB}}\mid \cD_m^{\mathrm{cal}})\ge 1-\delta
        \quad\text{almost surely on }\cE_{\mathrm{cert}}
        .
    \end{equation*}
    Integrating over the distribution of $\cD_m^{\mathrm{cal}}$, we obtain
    \begin{equation*}
        \begin{split}
            \PP(\cE_{\mathrm{cert}}\cap \cE_{\mathrm{PB}})
            &=
            \EE\Bigl[
                \1\{\cE_{\mathrm{cert}}\}
                \PP(\cE_{\mathrm{PB}}\mid \cD_m^{\mathrm{cal}})
            \Bigr]\\
            &\ge
            (1-\delta)\PP(\cE_{\mathrm{cert}})\\
            &\ge
            (1-\delta)(1-\alpha_{\mathrm{cert}})
            .
        \end{split}
    \end{equation*}
    This establishes the claimed joint-probability bound, verifying that all preceding results—including the fixed-temperature bounds and the automated lower-confidence-bound selection—scale seamlessly to the setting of actively learned certificates.
\end{proof}

\subsection{Proof of Proposition~\ref{prop:ips}}
\label{app:pf-prop-ips}

\begin{proof}[Proof of Proposition~\ref{prop:ips}]
    We establish the identity for the target value $V^\ast(d)$; the derivation for the proxy value $V(d)$ follows symmetrically by replacing $R^\ast$ with $R$.
    Applying the law of iterated expectations, we have
    \begin{equation*}
        \EE\Biggl[
            \frac{R^\ast\1\{A=d(X)\}}{\pi(A\mid X)}
        \Biggr]
        =
        \EE\Biggl[
            \EE\Biggl(
                \frac{R^\ast\1\{A=d(X)\}}{\pi(A\mid X)}
                \Bigm| X
            \Biggr)
        \Biggr]
        .
    \end{equation*}
    Let $x\in\cX$ be a point outside a $P_X$-null set where the relevant conditional expectations are well-defined, and denote the deterministic action $a_d=d(x)\in\{-1,1\}$.
    Conditioning on $X=x$ yields
    \begin{equation*}
        \begin{split}            
            \EE\Biggl(
                \frac{R^\ast\1\{A=d(X)\}}{\pi(A\mid X)}
                \Bigm| X=x
            \Biggr)
            &=
            \sum_{a\in\{-1,1\}}
            \1\{a_d=a\}
            \EE\Biggl(
                \frac{R^\ast\1\{A=a\}}{\pi(a\mid x)}
                \Bigm| X=x
            \Biggr)
            .
        \end{split}
    \end{equation*}
    By causal consistency (Assumption~\ref{ass:id}), $R^\ast=(R^\ast)^a$ almost surely on the event $\{A=a\}$, which implies $R^\ast\1\{A=a\}=(R^\ast)^a\1\{A=a\}$ almost surely.
    Hence,
    \begin{equation*}
        \begin{split}
            \EE\Biggl(
                \frac{R^\ast\1\{A=a\}}{\pi(a\mid x)}
                \Bigm| X=x
            \Biggr)
            &=
            \EE\Biggl(
                \frac{(R^\ast)^a\1\{A=a\}}{\pi(a\mid x)}
                \Bigm| X=x
            \Biggr)\\
            &=
            \frac{1}{\pi(a\mid x)}
            \EE\Bigl[
                (R^\ast)^a\1\{A=a\}
                \mid X=x
            \Bigr]
            .
        \end{split}
    \end{equation*}
    Invoking conditional ignorability (Assumption~\ref{ass:id}), we have $(R^\ast)^a\indep A\mid X$, which factors the conditional expectation as
    \begin{equation*}
        \begin{split}
            \EE\Bigl[
                (R^\ast)^a\1\{A=a\}
                \mid X=x
            \Bigr]
            &=
            \EE\bigl[
                (R^\ast)^a\mid X=x
            \bigr]\PP(A=a\mid X=x)
            \\
            &=
            \EE\bigl[
                (R^\ast)^a\mid X=x
            \bigr]\pi(a\mid x)
            .
        \end{split}
    \end{equation*}
    Substituting this back, we obtain
    \begin{equation*}
        \EE\Biggl(
            \frac{R^\ast\1\{A=a\}}{\pi(a\mid x)}
            \Bigm| X=x
        \Biggr)
        =
        \EE\bigl[
            (R^\ast)^a\mid X=x
        \bigr]
        ,
    \end{equation*}
    and consequently,
    \begin{equation*}
        \EE\Biggl(
            \frac{R^\ast\1\{A=d(X)\}}{\pi(A\mid X)}
            \Bigm| X=x
        \Biggr)
        =
        \EE\bigl[
            (R^\ast)^{a_d}\mid X=x
        \bigr]
        =
        \EE\bigl[
            (R^\ast)^{d(x)}\mid X=x
        \bigr]
        .
    \end{equation*}
    Finally, taking the outer expectation over the marginal distribution of $X$ directly recovers the target value:
    \begin{equation*}
        \EE\Biggl[
            \frac{R^\ast\1\{A=d(X)\}}{\pi(A\mid X)}
        \Biggr]
        =
        \EE\bigl[
            (R^\ast)^{d(X)}
        \bigr]
        =
        V^\ast(d)
        .
    \end{equation*}
    This completes the proof of the identity \eqref{eq:ips}.
\end{proof}

\subsection{Proof of Proposition~\ref{prop:lower-value-domination}}
\label{app:pf-prop-lower-value-domination}

\begin{proof}[Proof of Proposition~\ref{prop:lower-value-domination}]
    By Assumption~\ref{ass:certificate}, $R^\ast\ge R-U$ almost surely.
    Given that $R^\ast\in[0,1]$, it naturally follows that $R^\ast\ge (R-U)_+=\uR$ almost surely.
    Consequently, we have
    \begin{equation*}
        \frac{R^\ast\1\{A=d(X)\}}{\pi(A\mid X)}
        \ge
        \frac{\uR\1\{A=d(X)\}}{\pi(A\mid X)}
        \quad\text{a.s.}
    \end{equation*}
    Taking the expectation of both sides and invoking Proposition~\ref{prop:ips} yields
    \begin{equation*}
        V^\ast(d)
        =
        \EE\Biggl[
            \frac{R^\ast\1\{A=d(X)\}}{\pi(A\mid X)}
        \Biggr]
        \ge
        \EE\Biggl[
            \frac{\uR\1\{A=d(X)\}}{\pi(A\mid X)}
        \Biggr]
        =
        V_{\uR}(d)
        ,
    \end{equation*}
    which establishes \eqref{eq:lower-value-domination}.
    
    Furthermore, if $U\le R$ almost surely, then $\uR=R-U$ almost surely. In this case, we obtain
    \begin{equation*}
        \begin{split}
            V_{\uR}(d)
            &=
            \EE\Biggl[
                \frac{(R-U)\1\{A=d(X)\}}{\pi(A\mid X)}
            \Biggr]
            \\
            &=
            \EE\Biggl[
                \frac{R\1\{A=d(X)\}}{\pi(A\mid X)}
            \Biggr]
            -
            \EE\Biggl[
                \frac{U\1\{A=d(X)\}}{\pi(A\mid X)}
            \Biggr]
            \\
            &=
            V(d)
            -
            \EE\Biggl[
                \frac{U\1\{A=d(X)\}}{\pi(A\mid X)}
            \Biggr]
            ,
        \end{split}
    \end{equation*}
    where the final equality follows from Proposition~\ref{prop:ips}.
    Combining this identity with \eqref{eq:lower-value-domination} yields the desired bound \eqref{eq:lower-value-domination-unclipped}.
\end{proof}

\subsection{Proof of Proposition~\ref{prop:special-cases}}
\label{app:pf-prop-special-cases}

\begin{proof}[Proof of Proposition~\ref{prop:special-cases}]
    To establish part (i), suppose $\nu_{1,\alpha}\equiv \nu_{-1,\alpha}\equiv 0$. Then, by \eqref{eq:Gamma-a}, we have
    \begin{equation*}
        \Gamma_a^\alpha(Z)
        =
        0+\frac{\1\{A=a\}}{\pi(a\mid X)}\{\uR-0\}
        =
        \frac{\uR\1\{A=a\}}{\pi(a\mid X)}
        .
    \end{equation*}
    Consequently,
    \begin{equation*}
        D_\alpha(Z)
        =
        \Gamma_1^\alpha(Z)-\Gamma_{-1}^\alpha(Z)
        =
        \frac{\uR\1\{A=1\}}{\pi(1\mid X)}
        -
        \frac{\uR\1\{A=-1\}}{\pi(-1\mid X)}
        =
        \frac{A\uR}{\pi(A\mid X)}
        .
    \end{equation*}
    Taking the absolute value yields the expression for $W_\alpha(Z)$.
    On the event $\{\uR>0\}$, the sign of $D_\alpha(Z)$ matches the sign of $A$, which implies $Y_\alpha(Z)=A$.
    Conversely, on the event $\{\uR=0\}$, we have $W_\alpha(Z)=0$; thus, the corresponding pseudo-label does not contribute to the weighted $0$--$1$ risk.
    
    Regarding part (ii), if we restrict $\nu_{1,\alpha}=\nu_{-1,\alpha}=m_\alpha$, we obtain
    \begin{equation*}
        D_\alpha(Z)
        =
        m_\alpha(X)-m_\alpha(X)
        +
        \frac{A}{\pi(A\mid X)}\{\uR-m_\alpha(X)\}
        =
        \frac{A\{\uR-m_\alpha(X)\}}{\pi(A\mid X)}
        ,
    \end{equation*}
    where the second equality follows from the definition in \eqref{eq:DYW}.
    The corresponding formulas for $W_\alpha(Z)$ and $Y_\alpha(Z)$ follow immediately from their definitions.
    
    For part (iii), fix $a\in\{-1,1\}$ and an $x\in\cX$ outside a $P_X$-null set.
    Invoking \eqref{eq:Gamma-a}, we can write
    \begin{equation*}
        \begin{split}
            \EE[\Gamma_a^\alpha(Z)\mid X=x]
            &=
            \nu_{a,\alpha}(x)
            +
            \EE\Biggl[
                \frac{\1\{A=a\}}{\pi(a\mid x)}
                \{\uR-\nu_{a,\alpha}(x)\}
                \Bigm| X=x
            \Biggr]\\
            &=
            \nu_{a,\alpha}(x)
            +
            \EE[\uR-\nu_{a,\alpha}(x)\mid X=x,A=a]\\
            &=
            \mu_a(x)
            .
        \end{split}
    \end{equation*}
    This confirms the unbiasedness claim.
\end{proof}

\subsection{Proof of Lemma~\ref{lem:DW-bounds}}
\label{app:pf-lem-DW-bounds}

\begin{proof}[Proof of Lemma~\ref{lem:DW-bounds}]
    By the definition in \eqref{eq:DYW}, we can express the certified advantage as
    \begin{equation*}
        D_\alpha(Z)
        =
        \nu_{1,\alpha}(X)-\nu_{-1,\alpha}(X)
        +
        \frac{A}{\pi(A\mid X)}\{\uR-\nu_{A,\alpha}(X)\}
        .
    \end{equation*}
    Since $\nu_{1,\alpha}(X)$, $\nu_{-1,\alpha}(X)$, and $\uR$ are bounded in $[0,1]$ almost surely, it naturally follows that
    \begin{equation*}
        |\nu_{1,\alpha}(X)-\nu_{-1,\alpha}(X)|\le 1,
        \quad
        |\uR-\nu_{A,\alpha}(X)|\le 1
        \quad\text{a.s.}
    \end{equation*}
    Furthermore, the strict overlap condition (Assumption~\ref{ass:id}) guarantees $\pi(A\mid X)\ge\varepsilon$ almost surely, which implies
    \begin{equation*}
        \Biggl|
            \frac{A}{\pi(A\mid X)}\{\uR-\nu_{A,\alpha}(X)\}
        \Biggr|
        \le
        \varepsilon^{-1}
        \quad\text{a.s.}
    \end{equation*}
    Applying the triangle inequality directly yields \eqref{eq:D-bound}.
    Finally, the bound in \eqref{eq:W-bound} is an immediate consequence of the definition $W_\alpha(Z)\coloneq |D_\alpha(Z)|$.
\end{proof}

\subsection{Proof of Proposition~\ref{prop:surrogate-optimal-posterior}}
\label{app:pf-prop-surrogate-optimal-posterior}

\begin{proof}[Proof of Proposition~\ref{prop:surrogate-optimal-posterior}]
    Since the hinge loss $\ell_{\mathrm{h}}(\theta;Z_i)$ is non-negative and finite for all $(\theta,Z_i)$, the normalizing constant in \eqref{eq:surrogate-posterior-variational} is guaranteed to be strictly positive and bounded by $1$.
    Applying the infimum formulation of the Donsker--Varadhan variational principle (Lemma~\ref{lem:DV}) with
    \begin{equation*}
        T(\theta)=\hat{L}_{\mathrm{h},n}(\theta)
        \quad\text{and}\quad
        \tau=\lambda n
        ,
    \end{equation*}
    yields
    \begin{equation*}
    \inf_{Q\ll\Pi_0}
        \Biggl\{
            \EE_{\theta\sim Q}\bigl[
                \hat{L}_{\mathrm{h},n}(\theta)
            \bigr]
            +
            \frac{1}{\lambda n}\KL(Q\|\Pi_0)
        \Biggr\}
        =
        -\frac{1}{\lambda n}
        \log\int_\Theta
        \exp\{-\lambda n\hat{L}_{\mathrm{h},n}(\theta)\}
        \Pi_0(\dd\theta)
        .
    \end{equation*}
    The unique minimizer of this variational problem is precisely the Gibbs measure defined in \eqref{eq:surrogate-posterior-variational}.
    The equivalence with the practical posterior defined in Section~\ref{subsec:learning-rate} is readily confirmed by substituting the identity
    \begin{equation*}
        n\hat{L}_{\mathrm{h},n}(\theta)=\sum_{i=1}^n \ell_{\mathrm{h}}(\theta;Z_i)
        .
    \end{equation*}
\end{proof}

\subsection{Proof of Corollary~\ref{cor:MAP-penalized}}
\label{app:pf-cor-MAP-penalized}

\begin{proof}[Proof of Corollary~\ref{cor:MAP-penalized}]
    The expression for the posterior density follows immediately by multiplying the prior density by the exponentiated empirical risk factor in \eqref{eq:surrogate-posterior-variational}.
    Taking the negative logarithm reveals that computing the MAP estimator is strictly equivalent to minimizing the regularized objective:
    \begin{equation*}
        (\beta,\alpha)
        \mapsto
        \lambda\sum_{i=1}^n \ell_{\mathrm{h}}((\beta,\alpha);Z_i)+\lambda_0 J(\beta,\alpha)
        ,
    \end{equation*}
    which directly translates to the empirical risk minimization problem \eqref{eq:MAP-penalized} upon division by $\lambda n$.
\end{proof}

\subsection{Proof of Lemma~\ref{lem:DV}}
\label{app:pf-lem-DV}

\begin{proof}[Proof of Lemma~\ref{lem:DV}]
    Define the tilted probability measure
    \begin{equation*}
        \widetilde{\Pi}_\tau(\dd\theta)
        \coloneq
        \frac{e^{\tau S(\theta)}\Pi_0(\dd\theta)}
        {\int_\Theta e^{\tau S(t)}\Pi_0(\dd t)}
        .
    \end{equation*}
    For any absolutely continuous measure $Q\ll\Pi_0$, we can express the Kullback--Leibler divergence as
    \begin{equation*}
        \begin{split}
            \KL(Q\|\widetilde{\Pi}_\tau)
            &=
            \int_\Theta
            \log\Biggl(
                \frac{\dd Q}{\dd \widetilde{\Pi}_\tau}
            \Biggr)
            Q(\dd\theta)\\
            &=
            \int_\Theta
            \log\Biggl(
                \frac{\dd Q}{\dd\Pi_0}
                \frac{\int_\Theta e^{\tau S(t)}\Pi_0(\dd t)}{e^{\tau S(\theta)}}
            \Biggr)
            Q(\dd\theta)\\
            &=
            \KL(Q\|\Pi_0)-\tau\EE_{\theta\sim Q}[S(\theta)]
            +\log\int_\Theta e^{\tau S(t)}\Pi_0(\dd t)
            .
        \end{split}
    \end{equation*}
    Since $\KL(Q\|\widetilde{\Pi}_\tau)\ge 0$, rearranging the terms yields the fundamental inequality:
    \begin{equation*}
        \EE_{\theta\sim Q}[S(\theta)]-\frac{1}{\tau}\KL(Q\|\Pi_0)
        \le
        \frac{1}{\tau}\log\int_\Theta e^{\tau S(t)}\Pi_0(\dd t)
        .
    \end{equation*}
    Equality is achieved if and only if $\KL(Q\|\widetilde{\Pi}_\tau)=0$, which holds if and only if $Q=\widetilde{\Pi}_\tau$ almost everywhere.
    This establishes \eqref{eq:DV} and the uniqueness of its maximizer.
    Finally, the infimum formulation \eqref{eq:DV-inf} is obtained directly by substituting $S=-T$ into \eqref{eq:DV}.
\end{proof}

\subsection{Proof of Lemma~\ref{lem:scalar-kl}}
\label{app:pf-lem-scalar-kl}

\begin{proof}[Proof of Lemma~\ref{lem:scalar-kl}]
    If $\mu\in\{0,1\}$, the boundedness $0\le Y_i\le 1$ implies $Y_i=\mu$ almost surely for all $i$.
    Consequently, $\bar{Y}_n=\mu$ almost surely, and the left-hand side of \eqref{eq:scalar-kl} trivially evaluates to $1$.
    Thus, it suffices to consider $\mu\in(0,1)$.
    
    Define
    \begin{equation*}
        g_\mu(x)\coloneq \exp\{n\kl(x\|\mu)\}
        ,
        \quad
        x\in[0,1]
        .
    \end{equation*}
    Because the mapping $x\mapsto \kl(x\|\mu)$ is convex on $[0,1]$ and the exponential function is both convex and strictly increasing, the composition $g_\mu$ is convex on $[0,1]$.
    Let
    \begin{equation*}
        F(y_1,\dots,y_n)\coloneq g_\mu\Bigl(
            \frac{1}{n}\sum_{i=1}^n y_i
        \Bigr)
        ,
        \quad
        (y_1,\dots,y_n)\in[0,1]^n
        .
    \end{equation*}
    For each coordinate $i$, the map
    \begin{equation*}
        y\mapsto F(y_1,\dots,y_{i-1},y,y_{i+1},\dots,y_n)
    \end{equation*}
    is convex on $[0,1]$.
    Hence, for every deterministic $y\in[0,1]$, we have
    \begin{equation*}
        \begin{split}
            &F(y_1,\dots,y_{i-1},y,y_{i+1},\dots,y_n)
            \\
            &\le
            (1-y)F(y_1,\dots,y_{i-1},0,y_{i+1},\dots,y_n)
            +
            yF(y_1,\dots,y_{i-1},1,y_{i+1},\dots,y_n)
            .
        \end{split}
    \end{equation*}
    Taking the conditional expectation with respect to $Y_i$ and noting that $\EE[Y_i]=\mu$ yields
    \begin{equation*}
        \begin{split}
            &\EE\bigl[
                F(Y_1,\dots,Y_n)
                \mid Y_j,\ j\neq i
            \bigr]\\
            &\quad\le
            (1-\mu)F(Y_1,\dots,Y_{i-1},0,Y_{i+1},\dots,Y_n)
            +
            \mu F(Y_1,\dots,Y_{i-1},1,Y_{i+1},\dots,Y_n)
            .
        \end{split}
    \end{equation*}
    Let $B_i\sim\mathrm{Bernoulli}(\mu)$ be independent of $(Y_j)_{j\neq i}$.
    The right-hand side of the above inequality is exactly
    \begin{equation*}
        \EE\bigl[
            F(Y_1,\dots,Y_{i-1},B_i,Y_{i+1},\dots,Y_n)
            \mid Y_j,\ j\neq i
        \bigr]
        .
    \end{equation*}
    Taking expectations over the remaining variables yields
    \begin{equation*}
        \EE\bigl[
            F(Y_1,\dots,Y_n)
        \bigr]
        \le
        \EE\bigl[
            F(Y_1,\dots,Y_{i-1},B_i,Y_{i+1},\dots,Y_n)
        \bigr]
        .
    \end{equation*}
    Repeating this replacement iteratively for $i=1,\dots,n$, we obtain
    \begin{equation*}
        \EE\bigl[
            g_\mu(\bar{Y}_n)
        \bigr]
        \le
        \EE\bigl[
            g_\mu(\bar{B}_n)
        \bigr]
        ,
    \end{equation*}
    where $B_1,\dots,B_n$ are i.i.d.\ $\mathrm{Bernoulli}(\mu)$ and $\bar{B}_n=n^{-1}\sum_{i=1}^n B_i$.
    
    Let $K_n=\sum_{i=1}^n B_i\sim \mathrm{Binomial}(n,\mu)$.
    Then,
    \begin{equation*}
        \begin{split}
            \EE\bigl[g_\mu(\bar{B}_n)\bigr]
            &=
            \sum_{k=0}^n
            \binom{n}{k}\mu^k(1-\mu)^{n-k}
            \exp\Bigl\{
                n\kl\Bigl(\frac{k}{n}\Big\|\mu\Bigr)
            \Bigr\}\\
            &=
            \sum_{k=0}^n
            \binom{n}{k}
            \Bigl(\frac{k}{n}\Bigr)^k
            \Bigl(1-\frac{k}{n}\Bigr)^{n-k}
            ,
        \end{split}
    \end{equation*}
    adopting the conventions $0^0=1$ and $1^0=1$.
    Note that the boundary terms for $k=0$ and $k=n$ both evaluate to $1$.
    
    Fix $1\le k\le n-1$.
    Using the standard Stirling bounds
    \begin{equation*}
        \sqrt{2\pi m}\Bigl(\frac{m}{e}\Bigr)^m
        \le
        m!
        \le
        \exp\Bigl(\frac{1}{12m}\Bigr)\sqrt{2\pi m}\Bigl(\frac{m}{e}\Bigr)^m,
        \quad
        m\in\mathbb{N}
        ,
    \end{equation*}
    we obtain
    \begin{equation*}
        \begin{split}
            \binom{n}{k}
            \Bigl(
                \frac{k}{n}
            \Bigr)^k
            \Bigl(
                1-\frac{k}{n}
            \Bigr)^{n-k}
            &=
            \frac{n!}{k!(n-k)!}\frac{k^k(n-k)^{n-k}}{n^n}\\
            &\le
            \frac{
                \exp\bigl(\frac{1}{12n}\bigr)\sqrt{2\pi n}(n/e)^n
            }{
                \sqrt{2\pi k}(k/e)^k
                \sqrt{2\pi(n-k)}((n-k)/e)^{n-k}
            }
            \frac{k^k(n-k)^{n-k}}{n^n}\\
            &=
            \frac{\exp(\frac{1}{12n})}{\sqrt{2\pi}}
            \sqrt{\frac{n}{k(n-k)}}
            .
        \end{split}
    \end{equation*}
    
    It remains to bound the sum
    \begin{equation*}
        S_n\coloneq \sum_{k=1}^{n-1}\frac{1}{\sqrt{k(n-k)}}
        .
    \end{equation*}
    Define $f_n(x)\coloneq \{x(n-x)\}^{-1/2}$ on $(0,n)$.
    Observe that $f_n$ is strictly decreasing on $(0,n/2]$ and strictly increasing on $[n/2,n)$.
    
    If $n=2m+1$ is odd, then
    \begin{equation*}
        \begin{split}
            S_n
            &=
            \sum_{k=1}^{m} f_n(k)+\sum_{k=m+1}^{2m} f_n(k)\\
            &\le
            \int_0^{m} f_n(x)\dd x+\int_{m+1}^{2m+1} f_n(x)\dd x\\
            &\le
            \int_0^{n} f_n(x)\dd x
            .
        \end{split}
    \end{equation*}
    If $n=2m$ is even, $f_n$ attains its minimum at $x=m$, implying $f_n(x)\ge f_n(m)=1/m$ for all $x\in[m-1,m+1]$.
    Hence,
    \begin{equation*}
        \begin{split}
            S_n
            &=
            \sum_{k=1}^{m-1} f_n(k)+f_n(m)+\sum_{k=m+1}^{2m-1} f_n(k)\\
            &\le
            \int_0^{m-1} f_n(x)\dd x+\int_{m-1}^{m+1} f_n(x)\dd x+\int_{m+1}^{2m} f_n(x)\dd x
            \\
            &=
            \int_0^{n} f_n(x)\dd x
            .
        \end{split}
    \end{equation*}
    In both cases, we establish the upper bound
    \begin{equation*}
        S_n\le \int_0^{n}\frac{\dd x}{\sqrt{x(n-x)}}
        .
    \end{equation*}
    Using the substitution $x=n\sin^2 t$, we evaluate the integral as
    \begin{equation*}
        \int_0^{n}\frac{\dd x}{\sqrt{x(n-x)}}
        =
        \int_0^{\pi/2} 2\dd t
        =
        \pi
        .
    \end{equation*}
    Therefore,
    \begin{equation*}
        \EE\bigl[
            g_\mu(\bar{Y}_n)
        \bigr]
        \le
        2+\frac{\exp(\frac{1}{12n})}{\sqrt{2\pi}}\sqrt{n}\,S_n
        \le
        2+\exp\Bigl(
            \frac{1}{12n}
        \Bigr)\sqrt{\frac{\pi n}{2}}
        =
        \xi(n)
        ,
    \end{equation*}
    which completes the proof of \eqref{eq:scalar-kl}.
\end{proof}

\subsection{Proof of Lemma~\ref{lem:uniform-seeger-langford}}
\label{app:pf-lem-uniform-seeger-langford}

\begin{proof}[Proof of Lemma~\ref{lem:uniform-seeger-langford}]
    Fix $\theta\in\Theta$.
    Since $\ell(\theta;Z_i)\in[0,1]$ almost surely and the observations are i.i.d., applying Lemma~\ref{lem:scalar-kl} to $Y_i=\ell(\theta;Z_i)$ yields
    \begin{equation*}
        \EE\Bigl[
            \exp\bigl\{
                n\kl(\hat{L}_n(\theta)\|L(\theta))
            \bigr\}
        \Bigr]
        \le
        \xi(n)
        .
    \end{equation*}
    Integrating with respect to the prior $\Pi_0$ and invoking Fubini's theorem, we obtain
    \begin{equation*}
        \EE\Biggl[
            \int_\Theta
            \exp\bigl\{
                n\kl(\hat{L}_n(\theta)\|L(\theta))
            \bigr\}
            \Pi_0(\dd\theta)
        \Biggr]
        \le
        \xi(n)
        .
    \end{equation*}
    By Markov's inequality, with probability at least $1-\delta$, we have
    \begin{equation}
    \label{eq:seeger-markov}
        \int_\Theta
        \exp\bigl\{n\kl(\hat{L}_n(\theta)\|L(\theta))\bigr\}
        \Pi_0(\dd\theta)
        \le
        \frac{\xi(n)}{\delta}
        .
    \end{equation}
    Conditioning on the high-probability event where \eqref{eq:seeger-markov} holds, we apply Lemma~\ref{lem:DV} with
    \begin{equation*}
        S(\theta)=n\kl(\hat{L}_n(\theta)\|L(\theta))
        \quad\text{and}\quad
        \tau=1
        ,
    \end{equation*}
    which ensures that for every $Q\ll\Pi_0$,
    \begin{equation*}
        n\EE_{\theta\sim Q}\bigl[\kl(\hat{L}_n(\theta)\|L(\theta))\bigr]
        \le
        \KL(Q\|\Pi_0)+\log\{\xi(n)/\delta\}
        .
    \end{equation*}
    Because the Bernoulli relative entropy $(p,q)\mapsto \kl(p\|q)$ is jointly convex on $[0,1]^2$, applying Jensen's inequality yields
    \begin{equation*}
        \kl\bigl(
            \hat{L}_n(Q)\|L(Q)
        \bigr)
        =
        \kl\Bigl(
            \EE_{\theta\sim Q}[\hat{L}_n(\theta)]
            \Big\|
            \EE_{\theta\sim Q}[L(\theta)]
        \Bigr)
        \le
        \EE_{\theta\sim Q}\bigl[\kl(\hat{L}_n(\theta)\|L(\theta))\bigr].
    \end{equation*}
    Combining the last two displays proves \eqref{eq:uniform-seeger-langford}.
\end{proof}

\subsection{Proof of Lemma~\ref{lem:uniform-catoni}}
\label{app:pf-lem-uniform-catoni}

\begin{proof}[Proof of Lemma~\ref{lem:uniform-catoni}]
    By Lemma~\ref{lem:uniform-seeger-langford}, with probability at least $1-\delta$, the inequality
    \begin{equation*}
        \kl\bigl(\hat{L}_n(Q)\|L(Q)\bigr)\le n\,c_{n,\delta}(Q)
    \end{equation*}
    holds simultaneously for all $Q\ll\Pi_0$.
    Conditioning on this event, fix an arbitrary posterior $Q\ll\Pi_0$ and a scale parameter $\gamma>0$.
    The Bernoulli relative entropy admits the variational representation
    \begin{equation}
        \label{eq:kl-var}
        \kl(p\|q)
        =
        \sup_{\lambda\in\RR}
        \Bigl\{
            \lambda p - \log\bigl(1-q+qe^\lambda\bigr)
        \Bigr\}
        ,
        \quad
        p,q\in[0,1]
        .
    \end{equation}
    Applying \eqref{eq:kl-var} with $\lambda=-\gamma$ provides the lower bound
    \begin{equation*}
        -\gamma \hat{L}_n(Q)
        -\log\bigl(1-L(Q)+L(Q)e^{-\gamma}\bigr)
        \le
        \kl\bigl(\hat{L}_n(Q)\|L(Q)\bigr)
        \le
        n\,c_{n,\delta}(Q)
        .
    \end{equation*}
    Exponentiating and rearranging the terms, we obtain
    \begin{equation*}
        L(Q)\{1-e^{-\gamma}\}
        \le
        1-\exp\bigl\{
            -c_{n,\delta}(Q)-\gamma \hat{L}_n(Q)
        \bigr\}
        .
    \end{equation*}
    Since $\gamma>0$, it strictly follows that $1-e^{-\gamma}>0$.
    Dividing both sides by this quantity yields \eqref{eq:uniform-catoni}, completing the proof.
\end{proof}

\subsection{Proof of Lemma~\ref{lem:Gamma-bounds}}
\label{app:pf-lem-Gamma-bounds}

\begin{proof}[Proof of Lemma~\ref{lem:Gamma-bounds}]
    Fix a measurable ITR $d$ and a nuisance parameter $\alpha\in\cA$, and let $a_d=d(X)$.
    If $A\neq a_d$, it trivially follows that
    \begin{equation*}
        \Gamma_d^\alpha(Z)=\nu_{a_d,\alpha}(X)\in[0,1]
        \quad\text{a.s.}
    \end{equation*}
    If $A=a_d$, then by \eqref{eq:Gamma-d}, we have
    \begin{equation*}
        \Gamma_d^\alpha(Z)
        =
        \nu_{a_d,\alpha}(X)
        +
        \frac{\uR-\nu_{a_d,\alpha}(X)}{\pi(a_d\mid X)}
        =
        \Biggl(
            1-\frac{1}{\pi(a_d\mid X)}
        \Biggr)\nu_{a_d,\alpha}(X)
        +
        \frac{\uR}{\pi(a_d\mid X)}
        .
    \end{equation*}
    Since $\uR$ and $\nu_{a_d,\alpha}(X)$ are bounded in $[0,1]$ and $\pi(a_d\mid X)\ge\varepsilon$, the supremum is attained at $\uR=1$ and $\nu_{a_d,\alpha}(X)=0$, yielding the upper bound $\Gamma_d^\alpha(Z)\le \varepsilon^{-1}$.
    Conversely, because $1-\pi(a_d\mid X)^{-1}\le 0$, the infimum is attained at $\uR=0$ and $\nu_{a_d,\alpha}(X)=1$, yielding the lower bound $\Gamma_d^\alpha(Z)\ge 1-\varepsilon^{-1}$.
    Thus, \eqref{eq:Gamma-range} holds almost surely.
    
    Subtracting $\Gamma_d^\alpha(Z)$ from $c_\varepsilon=\varepsilon^{-1}$ and dividing by
    \begin{equation*}
        K_\varepsilon
        =
        \varepsilon^{-1}-(1-\varepsilon^{-1})
        =
        2\varepsilon^{-1}-1
    \end{equation*}
    confirms that the transformed loss $\ell_{\mathrm{val}}(\theta;Z)$ lies in $[0,1]$ almost surely.
\end{proof}
\section{Detailed Numerical Experiments}
\label{app:sec:detail-num-exp}

\subsection{Detailed Experimental Setup}
\label{app:subsec:detailed-experimental-setup}

This subsection details the exact simulation and implementation configurations utilized in Section~\ref{sec:numerical-experiments}.
We refer to the two primary environments as Scenario~1 and Scenario~2.
For notational convenience, we define the clipping operator as
\begin{equation*}
    \operatorname{clip}_{[a,b]}(t)
    \coloneq
    \min\{b,\max\{a,t\}\}
    .
\end{equation*}

\subsubsection{Common Design}
\label{app:subsubsec:common-design}

In each Monte Carlo replication, we draw a policy-learning sample of size $N$ and an independent test sample of size $N_{\mathrm{test}}=10{,}000$.
The $\rho$-sweep experiments presented in the main text fix $N=200$ and vary $\rho\in\{0,0.25,0.5,0.75,1.0,1.25,1.5,1.75,2.0\}$, whereas the $N$-sweep experiments fix $\rho=1.5$ and vary $N\in\{100,200,500,1000,2000\}$.
The results for both sweeps are averaged over $30$ independent replications.
For the certificate diagnostics presented exclusively in this appendix, we fix $N=1000$ and reuse the identical $\rho$ grid.
The split-free ablation study fixes $\rho=1.5$ and utilizes the same sample size grid for $N$.
Unless explicitly stated otherwise, we employ oracle certificates throughout the synthetic experiments.

For each arm $a\in\{-1,1\}$, let
\begin{equation*}
    \mu_a^\ast(x)
    \coloneq
    \EE[(R^\ast)^a\mid X=x]
    ,
    \quad
    \mu_a(x)
    \coloneq
    \EE[R^a\mid X=x]
    ,
    \quad
    \underline{\mu}_a(x)
    \coloneq
    \EE[\uR^a\mid X=x]
    .
\end{equation*}
The true logging propensities are assumed to be known to the learner.
Scenario~1 employs balanced randomization, whereas Scenario~2 utilizes a covariate-dependent propensity score, as detailed below.
In both scenarios, all realized rewards are strictly bounded within $[0,1]$ via clipping.

\subsubsection{Scenario~1}
\label{app:subsubsec:scenario1}

We generate the covariates as
\begin{equation*}
    X_1,X_2
    \overset{\mathrm{i.i.d.}}{\sim}
    \mathrm{Unif}[-1,1]
    ,
    \quad
    \pi(1\mid x)=\pi(-1\mid x)=\frac{1}{2}
    .
\end{equation*}
The main and treatment effects are formulated as
\begin{equation*}
    m_1(x)=0.38x_1-0.22x_2
    ,
    \quad
    \tau_1(x)=0.72(x_1+0.65x_2)/1.65
    .
\end{equation*}
The arm-specific target conditional means are given by
\begin{equation*}
    \mu_a^\ast(x)
    =
    \operatorname{clip}_{[0,1]}
    \bigl(
        0.5+0.15\,m_1(x)+0.15\,a\,\tau_1(x)
    \bigr)
    ,
    \quad
    a\in\{-1,1\}
    .
\end{equation*}

We define an identical base certificate envelope for both arms:
\begin{equation*}
    U_{0,1}^{(1)}(x)=U_{0,-1}^{(1)}(x)
    =
    0.02+0.04\1\{x_1>0,\ x_2>0\}
    .
\end{equation*}
For a given uncertainty level $\rho$, the envelope is scaled as
\begin{equation*}
    U_{\rho,a}^{(1)}(x)
    =
    \min\{\rho\,U_{0,a}^{(1)}(x),\,0.10\}
    ,
    \quad
    a\in\{-1,1\}
    .
\end{equation*}
The proxy conditional means are then defined as
\begin{equation*}
    \mu_a(x)
    =
    \operatorname{clip}_{[0,1]}
    \bigl(
        \mu_a^\ast(x)+0.50\,U_{\rho,a}^{(1)}(x)
    \bigr)
    .
\end{equation*}

The realized target rewards are generated according to
\begin{equation*}
    (R^\ast)^a
    =
    \operatorname{clip}_{[0,1]}
    \bigl(
        \mu_a^\ast(X)+0.10\,\varepsilon_a^\ast
    \bigr)
    ,
    \quad
    \varepsilon_a^\ast
    \sim
    \mathrm{Unif}[-1,1]
    .
\end{equation*}
The optimism bias is introduced as
\begin{equation*}
    B^a
    =
    U_{\rho,a}^{(1)}(X)V_a
    ,
    \quad
    V_a
    \sim
    \mathrm{Unif}[0,1]
    ,
\end{equation*}
yielding the observed proxy and certified rewards:
\begin{equation*}
    R^a
    =
    \operatorname{clip}_{[0,1]}
    \bigl(
        (R^\ast)^a+B^a
    \bigr)
    ,
    \quad
    \uR^a
    =
    (R^a-U_{\rho,a}^{(1)}(X))_+
    .
\end{equation*}
Since an identical envelope is applied to both arms and the effect of active clipping is negligible in these configurations, the reward uncertainty in Scenario~1 is effectively policy-invariant.

\subsubsection{Scenario~2}
\label{app:subsubsec:scenario2}

We generate the covariates by first drawing
\begin{equation*}
    X_1,X_2,X_3,X_4
    \overset{\mathrm{i.i.d.}}{\sim}
    \mathrm{Unif}[-1,1]
    ,
    \quad
    \varepsilon_5,\varepsilon_6,\varepsilon_7,\varepsilon_8
    \overset{\mathrm{i.i.d.}}{\sim}
    N(0,1)
    ,
\end{equation*}
and subsequently defining the following transformations:
\begin{equation*}
    X_5
    =
    \operatorname{clip}_{[-1,1]}
    \bigl(
        0.50X_1-0.22X_2+0.14X_3+0.32\varepsilon_5
    \bigr)
    ,
\end{equation*}
\begin{equation*}
    X_6
    =
    \operatorname{clip}_{[-1,1]}
    \bigl(
        -0.30X_2+0.26X_3+0.18X_5+0.34\varepsilon_6
    \bigr)
    ,
\end{equation*}
\begin{equation*}
    X_7
    =
    \operatorname{clip}_{[-1,1]}
    \bigl(
        0.68X_5+0.40X_6+0.30X_1+0.28\varepsilon_7
    \bigr)
    ,
\end{equation*}
and
\begin{equation*}
    X_8
    =
    \operatorname{clip}_{[-1,1]}
    \bigl(
        0.46X_3-0.28X_4+0.60X_7+0.18X_5+0.30\varepsilon_8
    \bigr)
    .
\end{equation*}
Consequently, the variables $(X_5,\ldots,X_8)$ serve as proxies for frailty, workload, and biomarkers, exhibiting correlation with the primary clinical risk factors.

The main effect is formulated as
\begin{equation*}
    m_2(x)
    =
    \operatorname{clip}_{[-1,1]}
    \Bigl(
        0.20\sin(1.2\pi x_3)
        +0.14\cos(0.8\pi x_4)
        +0.10x_5
        -0.08x_6
        +0.06x_7x_8
    \Bigr)
    .
\end{equation*}
To define the treatment effect, we set
\begin{equation*}
    r(x)=0.90x_1-0.64x_2+0.45\sin(\pi x_3)+0.24x_4
    ,
    \quad
    w(x)=\exp\{-2.9(r(x)-0.02)^2\}
    ,
\end{equation*}
and
\begin{equation*}
    b(x)
    =
    0.14\tanh(x_3-0.85x_4)
    -0.04(x_5)_+
    -0.03(x_6)_+
    .
\end{equation*}
This yields the treatment effect function:
\begin{equation*}
    \tau_2(x)
    =
    \operatorname{clip}_{[-1,1]}
    \Bigl(
        \tanh\bigl(
            2.05\{w(x)+b(x)-0.60\}
        \bigr)
    \Bigr)
    .
\end{equation*}
The arm-specific target conditional means are then given by
\begin{equation*}
    \mu_a^\ast(x)
    =
    \operatorname{clip}_{[0,1]}
    \bigl(
        0.5+0.15\,m_2(x)+0.15\,a\,\tau_2(x)
    \bigr)
    ,
    \quad
    a\in\{-1,1\}
    .
\end{equation*}
This design yields a genuinely beneficial treatment effect localized strictly within a moderate-risk patient subgroup.

To construct the arm-specific reward certificates, we define the following auxiliary functions:
\begin{equation*}
    r_u(x)
    =
    0.94x_1-0.70x_2+0.48\sin(\pi x_3)+0.24x_4
    ,
    \quad
    h(x)
    =
    \frac{1}{1+\exp\{-2.6(r_u(x)-0.02)\}}
    ,
\end{equation*}
\begin{equation*}
    q(x)
    =
    \exp\{-8.6(r_u(x)-0.02)^2\}
    ,
    \quad
    f(x)
    =
    0.78(x_5)_+ + 0.58(x_6)_+
    ,
\end{equation*}
\begin{equation*}
    o(x)
    =
    \frac{1}{1+\exp\{-2.4(1.18x_7+0.60x_5-0.12)\}}
    ,
\end{equation*}
\begin{equation*}
    s(x)
    =
    \frac{1}{1+\exp\{-2.8(1.28x_8+0.92x_3x_4+0.78x_5x_7+0.48x_6x_8-0.06)\}}
    ,
\end{equation*}
\begin{equation*}
    t(x)
    =
    \frac{1}{1+\exp\{-2.5(1.08x_7+0.96x_8+0.92x_5x_7+0.70x_6x_8-0.08)\}}
    ,
\end{equation*}
and
\begin{equation*}
    c(x)
    =
    \frac{1}{1+\exp\{-1.9(0.48x_6-0.52x_7-0.28x_8-0.10)\}}
    .
\end{equation*}
The base envelopes are specified as
\begin{equation*}
    \begin{split}
        U_{0,1}^{(2)}(x)
        =
        \operatorname{clip}_{[0,0.32]}
        &\Bigl(
            0.010
            +0.020h(x)
            +q(x)\{0.16f(x)+0.17s(x)+0.11o(x)+0.19t(x)\}
        \\
        &\quad
            +0.075t(x)
            +0.022f(x)t(x)
        \Bigr)
        ,
    \end{split}
\end{equation*}
and
\begin{equation*}
    U_{0,-1}^{(2)}(x)
    =
    \operatorname{clip}_{[0,0.06]}
    \Bigl(
        0.002
        +0.006h(x)
        +0.010q(x)c(x)
        +0.004c(x)
    \Bigr)
    .
\end{equation*}
For a given uncertainty level $\rho$, these envelopes are scaled to
\begin{equation*}
    U_{\rho,a}^{(2)}(x)
    =
    \min\{\rho\,U_{0,a}^{(2)}(x),\,0.32\}
    ,
    \quad
    a\in\{-1,1\}
    .
\end{equation*}
Consequently, the treated arm is assigned a substantially larger and more complex uncertainty certificate compared to the control arm.

The proxy conditional means are constructed as
\begin{equation*}
    \mu_1(x)
    =
    \operatorname{clip}_{[0,1]}
    \bigl(
        \mu_1^\ast(x)+0.92\,U_{\rho,1}^{(2)}(x)
    \bigr)
    ,
    \quad
    \mu_{-1}(x)
    =
    \operatorname{clip}_{[0,1]}
    \bigl(
        \mu_{-1}^\ast(x)+0.14\,U_{\rho,-1}^{(2)}(x)
    \bigr)
    .
\end{equation*}
The realized target rewards are generated according to
\begin{equation*}
    (R^\ast)^1
    =
    \operatorname{clip}_{[0,1]}
    \bigl(
        \mu_1^\ast(X)+\{0.08+0.68U_{\rho,1}^{(2)}(X)\}\varepsilon_1^\ast
    \bigr)
    ,
\end{equation*}
\begin{equation*}
    (R^\ast)^{-1}
    =
    \operatorname{clip}_{[0,1]}
    \bigl(
        \mu_{-1}^\ast(X)+\{0.05+0.04U_{\rho,-1}^{(2)}(X)\}\varepsilon_{-1}^\ast
    \bigr)
    ,
\end{equation*}
where $\varepsilon_1^\ast,\varepsilon_{-1}^\ast\sim\mathrm{Unif}[-1,1]$.
The optimism biases are modeled as
\begin{equation*}
    B^1
    =
    U_{\rho,1}^{(2)}(X)V_1
    ,
    \quad
    V_1\sim\mathrm{Beta}(7.0,1.0)
    ,
\end{equation*}
and
\begin{equation*}
    B^{-1}
    =
    U_{\rho,-1}^{(2)}(X)V_{-1}
    ,
    \quad
    V_{-1}\sim\mathrm{Beta}(1.08,5.2)
    .
\end{equation*}
Finally, the observed proxy and certified rewards are computed as
\begin{equation*}
    R^a
    =
    \operatorname{clip}_{[0,1]}
    \bigl(
        (R^\ast)^a+B^a
    \bigr)
    ,
    \quad
    \uR^a
    =
    (R^a-U_{\rho,a}^{(2)}(X))_+
    .
\end{equation*}
This configuration establishes a challenging nuisance-conflict scenario: while the treatment is genuinely beneficial for a moderate-risk subgroup, the observed proxy reward for the treated arm exhibits the highest optimism bias specifically within clinically vulnerable regions.

Furthermore, the logging propensity in Scenario~2 is non-uniform.
We define
\begin{equation*}
    r_\pi(x)
    =
    0.84x_1-0.60x_2+0.42\sin(\pi x_3)+0.20x_4
    ,
\end{equation*}
and
\begin{equation*}
    \ell_\pi(x)
    =
    0.20r_\pi(x)
    +1.35(x_5)_+
    +1.10(x_6)_+
    +1.55x_7
    +1.25x_8
    +1.55x_5x_7
    +0.95x_6x_8
    -0.40x_3x_4
    .
\end{equation*}
This yields the propensity scores:
\begin{equation*}
    \pi(1\mid x)
    =
    \operatorname{clip}_{[0.01,0.99]}
    \Bigl(
        \frac{1}{1+\exp\{-\ell_\pi(x)\}}
    \Bigr)
    ,
    \quad
    \pi(-1\mid x)=1-\pi(1\mid x)
    .
\end{equation*}

\subsubsection{Feature Maps and Policy Classes}
\label{app:subsubsec:feature-maps-and-policy-classes}

All covariates are standardized based on the policy-learning sample prior to any feature expansion.
In Scenario~1, all score-based methods employ a standardized linear score that includes an intercept.
For Scenario~2, we utilize a richer, common clinical basis for all score-based methods.
This feature map comprises main effects, quadratic terms, sine transformations of $x_1,\ldots,x_4$, positive-part thresholding for $x_5,x_6,x_7$, and select low-order interactions among the primary clinical, frailty, workload, and biomarker variables (specifically, $x_1x_2$, $x_3x_4$, $x_1x_3$, $x_1x_4$, $x_2x_3$, $x_2x_4$, $x_4x_6$, $x_5x_6$, $x_5x_7$, $x_6x_7$, $x_4x_7$, $x_3x_8$), along with an intercept.
In contrast, Policy Tree operates directly on the raw, unexpanded covariates.

\subsubsection{Model Construction}
\label{app:subsubsec:model-construction}

For OWL, we train a weighted hinge-loss classifier, selecting the $\ell_2$ penalty from the grid $\{10^{-3},10^{-2},10^{-1},1\}$ via 5-fold cross-validation.
The tuning objective maximizes the inverse-probability-weighted empirical value corresponding to the specified reward variant.
For the $R$-family models, the training labels are the observed treatments $A$ with sample weights $R/\pi(A\mid X)$.
For the $\uR$-family models, the weights are replaced by $\uR/\pi(A\mid X)$.

For Q-learning, we fit a single-stage linear working model of the form $Q(x,a)=\beta^\top h_0(x)+a\,\psi^\top h_1(x)$, utilizing the scenario-specific feature map for both $h_0$ and $h_1$.
The ridge penalty is selected from the identical grid based on 5-fold cross-validated mean squared prediction error.

For RWL, we implement a linear residual weighted learner paired with a treatment-free residual regression model.
Both the policy score and the residual model employ the aforementioned scenario-specific feature map.
The regularization parameter is tuned over the same grid via 5-fold cross-validation, targeting the maximization of the stabilized value criterion.

For Policy Tree, we constrain the hypothesis class to depth-2 decision trees, enforcing a minimum node size of 20 and a split step of 25.
The input score matrix is constructed using a doubly robust plug-in estimator, which relies on arm-specific nuisance outcome regressions fitted to the corresponding reward variant.

PROWL and its uncertified ablation, PROWL ($U=0$), share an identical finite-library PAC-Bayesian implementation.
The nuisance models are estimated separately for each arm via ridge regression (with a fixed penalty of $10^{-6}$) on the entire policy-learning sample, explicitly avoiding sample splitting.
Temperature scaling parameters $\eta$ and $\gamma$ are jointly tuned over the grid $\{1/8,1/4,1/2,1,2,4,8\}$, selecting the pair that maximizes the exact-value lower confidence bound derived in Section~\ref{subsec:learning-rate}.
We specify the confidence level as $\delta=0.1$, the prior standard deviation as $5$, and restrict the score bound to $3$.

The finite posterior library is constructed using two Gaussian anchor particles, 32 prior particles, and four local perturbations per anchor with a local scale parameter of $0.3$.
This candidate library is further enriched by incorporating anchor solutions obtained from certified weighted-hinge optimizations, treatment-free residualized fits, and plug-in Q-learning or RWL solutions, scanned across an auxiliary penalty grid of $\{10^{-4},10^{-3},10^{-2},10^{-1},1\}$.
The PROWL ($U=0$) benchmark is executed by substituting the certified reward $\uR$ with the nominal proxy reward $R$ throughout this entire pipeline.

To ensure a fair comparison with the deterministic classical baselines, both PROWL and PROWL ($U=0$) output the deterministic rule corresponding to the Maximum a Posteriori (MAP) element selected from the fitted finite posterior library.

\subsubsection{Evaluation}
\label{app:subsubsec:evaluation}

For any learned deterministic policy $\hat{d}$, we evaluate the out-of-sample target and robust regrets on the independent test set utilizing the oracle conditional means:
\begin{equation*}
    \hat{V}^\ast(\hat{d})
    =
    \frac{1}{N_{\mathrm{test}}}
    \sum_{i=1}^{N_{\mathrm{test}}}
    \mu^\ast_{\hat{d}(X_i)}(X_i)
    ,
    \quad
    \hat{V}_{\uR}(\hat{d})
    =
    \frac{1}{N_{\mathrm{test}}}
    \sum_{i=1}^{N_{\mathrm{test}}}
    \underline{\mu}_{\hat{d}(X_i)}(X_i)
    .
\end{equation*}
The true target and robust regrets are subsequently computed by subtracting these realized values from the corresponding theoretical optimal values.

For the certificate diagnostics presented solely in this appendix, we further report the following policy-level gaps:
\begin{equation*}
    \begin{split}
        \mathrm{ProxyTargetGap}(\hat{d})
        &\coloneq
        V(\hat{d})-V^\ast(\hat{d})
        ,
        \\
        \mathrm{TargetCertifiedGap}(\hat{d})
        &\coloneq
        V^\ast(\hat{d})-V_{\uR}(\hat{d})
        .
    \end{split}
\end{equation*}
These metrics quantify the inherent optimism of the proxy reward and the degree of conservatism introduced by the certification process, respectively.
At the dataset level, we compute the expected certificate magnitude $\EE[U]$, the active clipping rate $\mathrm{Clip}$, and the certificate validity rate $\mathrm{Valid}$, defined as follows:
\begin{equation*}
    \mathrm{Clip}
    \coloneq
    \PP\{U>R\}
    ,
    \quad
    \mathrm{Valid}
    \coloneq
    \PP\bigl\{
        R^1-(R^\ast)^1\le U_{\rho,1}(X),\
        R^{-1}-(R^\ast)^{-1}\le U_{\rho,-1}(X)
    \bigr\}
    .
\end{equation*}
Here, $\EE[U]$ and $\mathrm{Clip}$ are estimated using the policy-learning sample, whereas $\mathrm{Valid}$ is evaluated on the independent test sample.

In the split-free ablation study, we compare our default split-free PROWL implementation against an honest sample-split variant.
The latter allocates one arm-stratified half of the data for estimating the nuisance models and reserves the remaining half for policy learning, maintaining identical tuning grids and evaluation criteria.

\subsection{Additional Results}
\label{app:subsec:num-results}

Figures~\ref{app:fig:scenario1-rho-sweep-comp} through \ref{app:fig:scenario2-n-sweep-comp} in the appendix present the complementary metric-family pairings that were omitted from the main text for brevity.
The qualitative conclusions derived from these figures align closely with those in the primary analysis.
While substituting $R$ with $\uR$ demonstrably improves the classical baselines—particularly in Scenario~2—PROWL consistently emerges as the most robust procedure.
Notably, its performance advantage is most pronounced in this highly heterogeneous clinical setting.

\begin{figure*}[tb]
\vskip 0.2in
\begin{center}
\centerline{\includegraphics[width=\columnwidth]{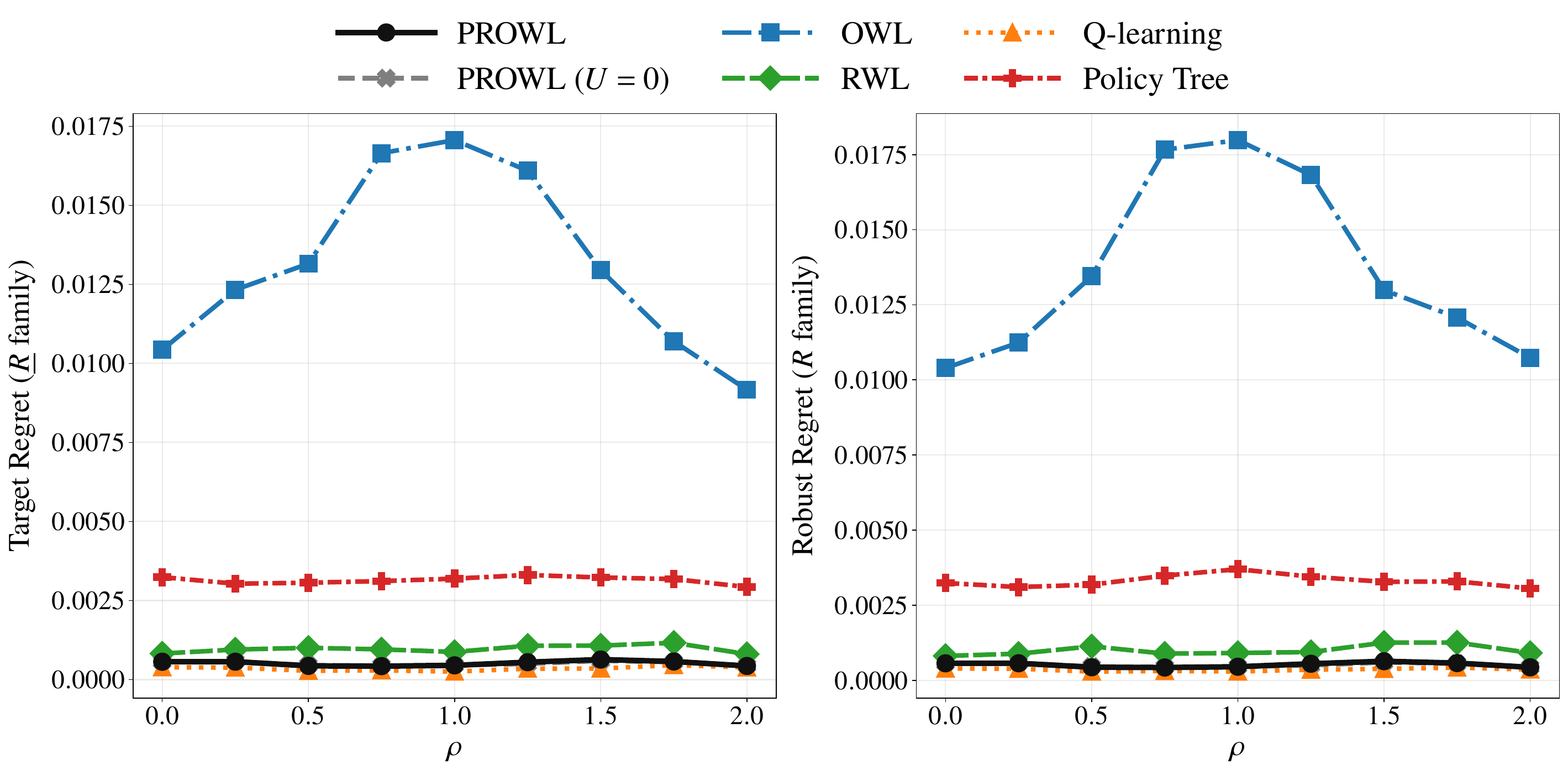}}
\caption{
Complementary performance comparison across varying uncertainty levels ($\rho$) for Scenario 1.
The left panel reports target regret against the $\uR$-family baselines, and the right panel reports robust regret against the $R$-family baselines.
}
\label{app:fig:scenario1-rho-sweep-comp}
\end{center}
\vskip -0.2in
\end{figure*}

\begin{figure*}[tb]
\vskip 0.2in
\begin{center}
\centerline{\includegraphics[width=\columnwidth]{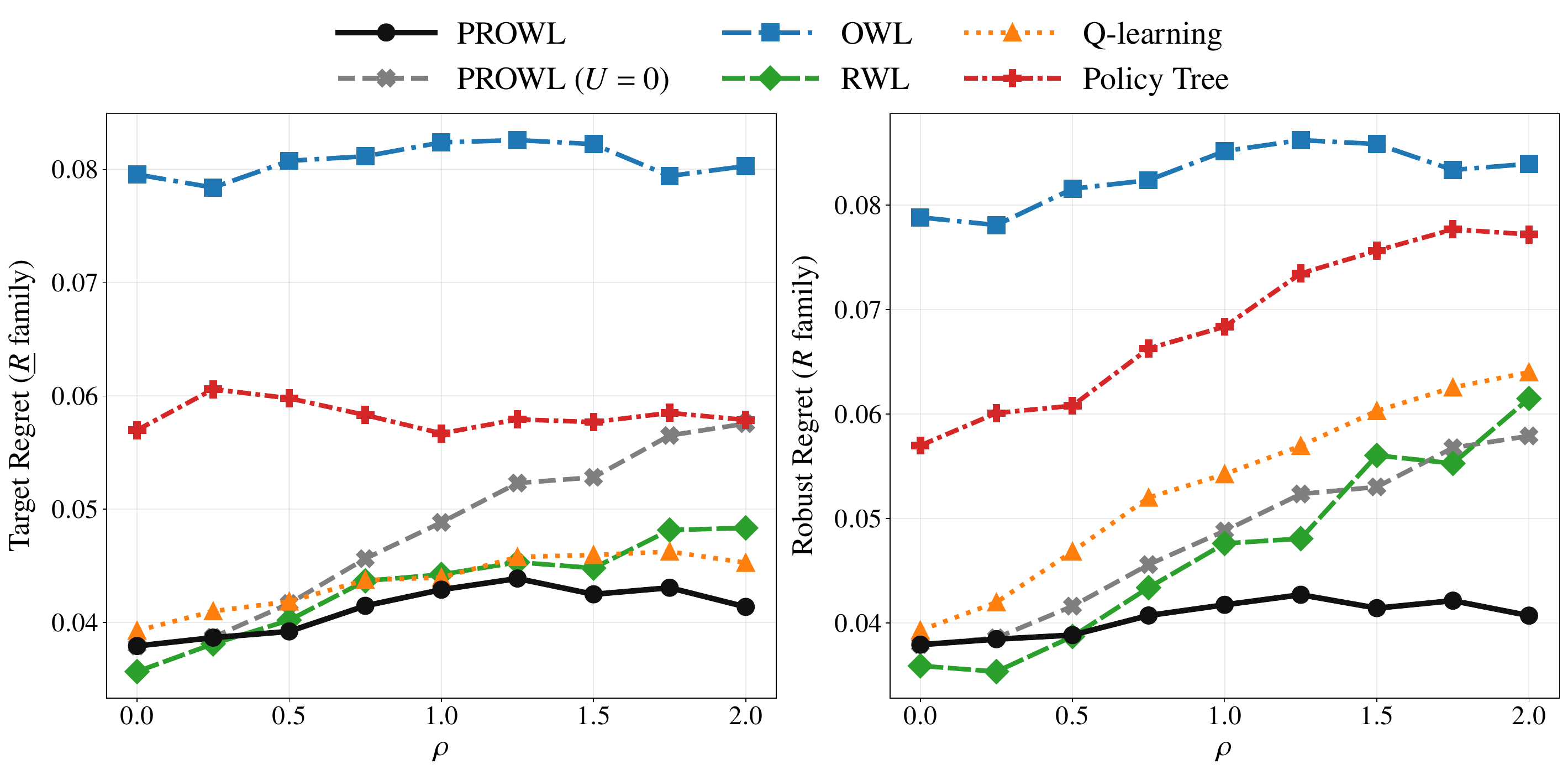}}
\caption{
Complementary performance comparison across varying uncertainty levels ($\rho$) for Scenario 2.
The left panel reports target regret against the $\uR$-family baselines, and the right panel reports robust regret against the $R$-family baselines.
}
\label{app:fig:scenario2-rho-sweep-comp}
\end{center}
\vskip -0.2in
\end{figure*}

\begin{figure*}[tb]
\vskip 0.2in
\begin{center}
\centerline{\includegraphics[width=\columnwidth]{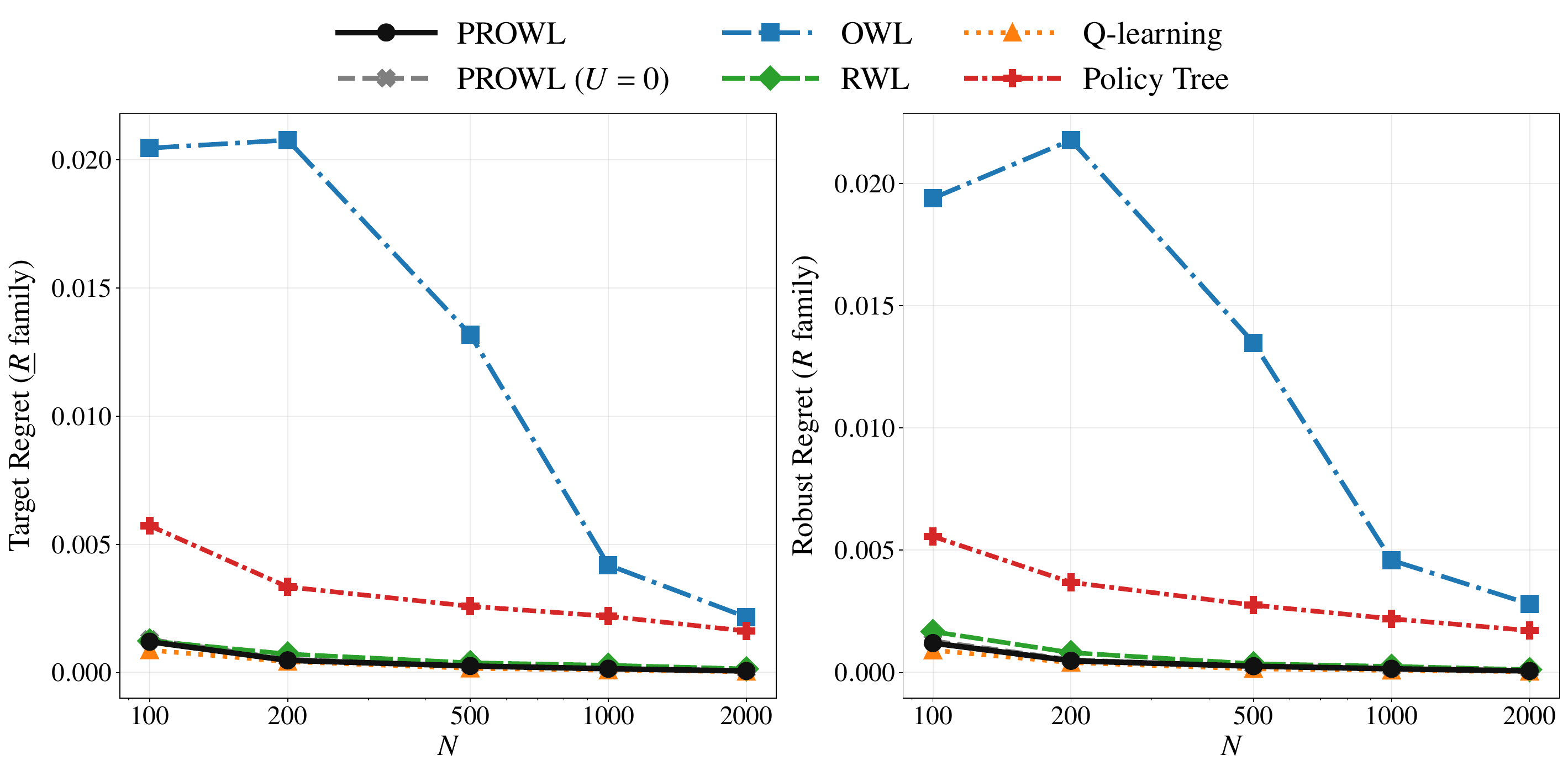}}
\caption{
Complementary performance comparison across varying sample sizes ($N$) for Scenario 1.
The left panel reports target regret against the $\uR$-family baselines, and the right panel reports robust regret against the $R$-family baselines.
}
\label{app:fig:scenario1-n-sweep-comp}
\end{center}
\vskip -0.2in
\end{figure*}

\begin{figure*}[tb]
\vskip 0.2in
\begin{center}
\centerline{\includegraphics[width=\columnwidth]{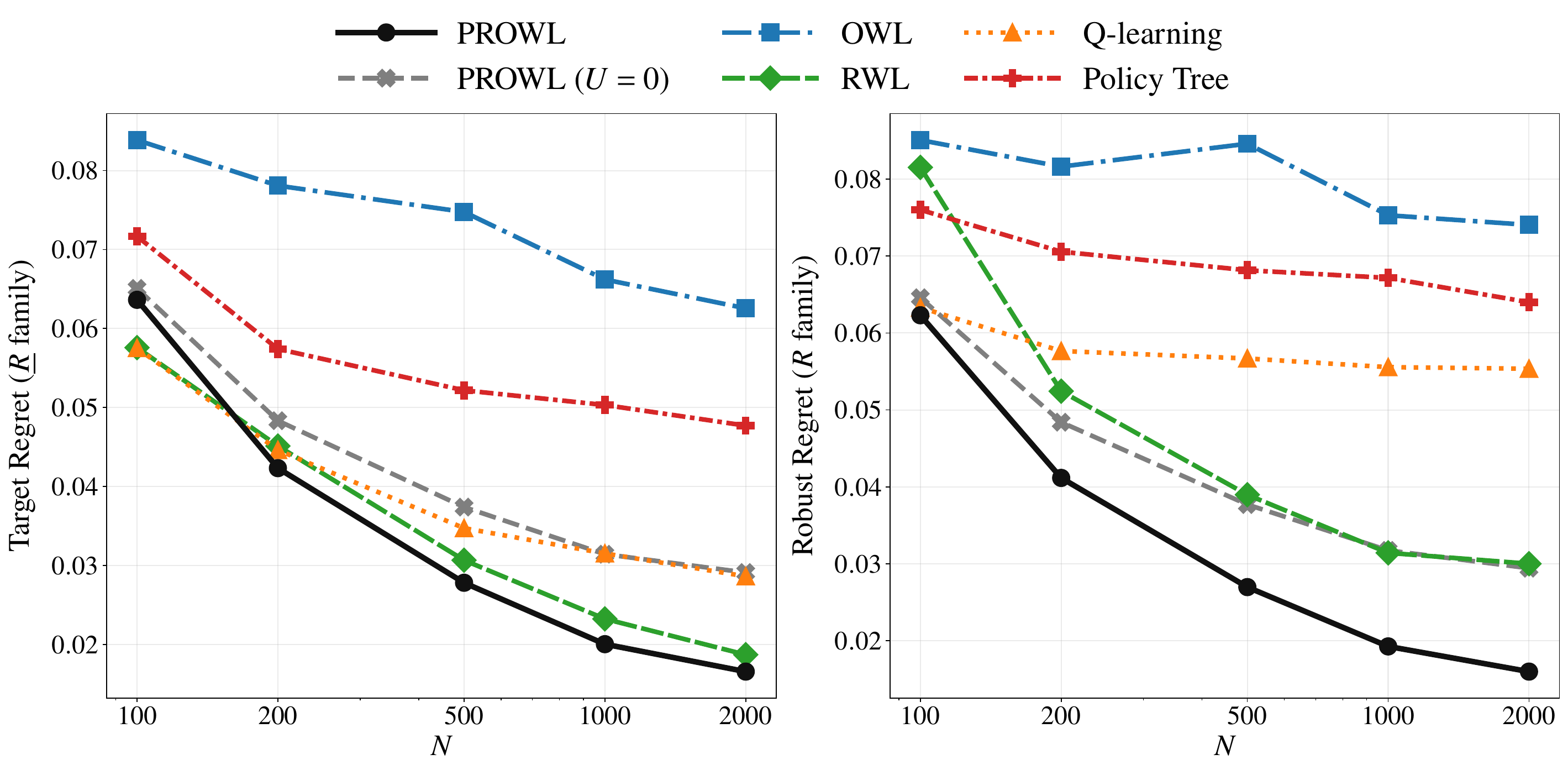}}
\caption{
Complementary performance comparison across varying sample sizes ($N$) for Scenario 2.
The left panel reports target regret against the $\uR$-family baselines, and the right panel reports robust regret against the $R$-family baselines.
}
\label{app:fig:scenario2-n-sweep-comp}
\end{center}
\vskip -0.2in
\end{figure*}

To understand how the oracle certificate structurally reshapes the decision problem, Figure~\ref{app:fig:certificate-diag} plots the proxy-target gap and the target-certified gap as functions of the uncertainty level $\rho$.
These metrics are evaluated at $N=1000$ for PROWL, PROWL ($U=0$), and the $\uR$-family of classical baselines.
In Scenario~1, the performance trajectories are nearly indistinguishable across the methods, a behavior that is theoretically consistent with the policy-invariant nature of the reward uncertainty in this setup.
Conversely, in Scenario~2, both gaps widen as $\rho$ increases, with the proxy-target gap remaining substantially larger than the target-certified gap.
This discrepancy indicates that the primary impediment to policy learning is the systematic optimism bias inherent in the proxy reward, rather than any excessive conservatism introduced by the certified reward bound.
Notably, PROWL consistently achieves the smallest, or near-smallest, proxy-target gap across the majority of the $\rho$-sweep.
This highlights its ability to strategically avoid covariate regions where the proxy reward is most misleading.

\begin{figure*}[tb]
\vskip 0.2in
\begin{center}
\centerline{\includegraphics[width=\columnwidth]{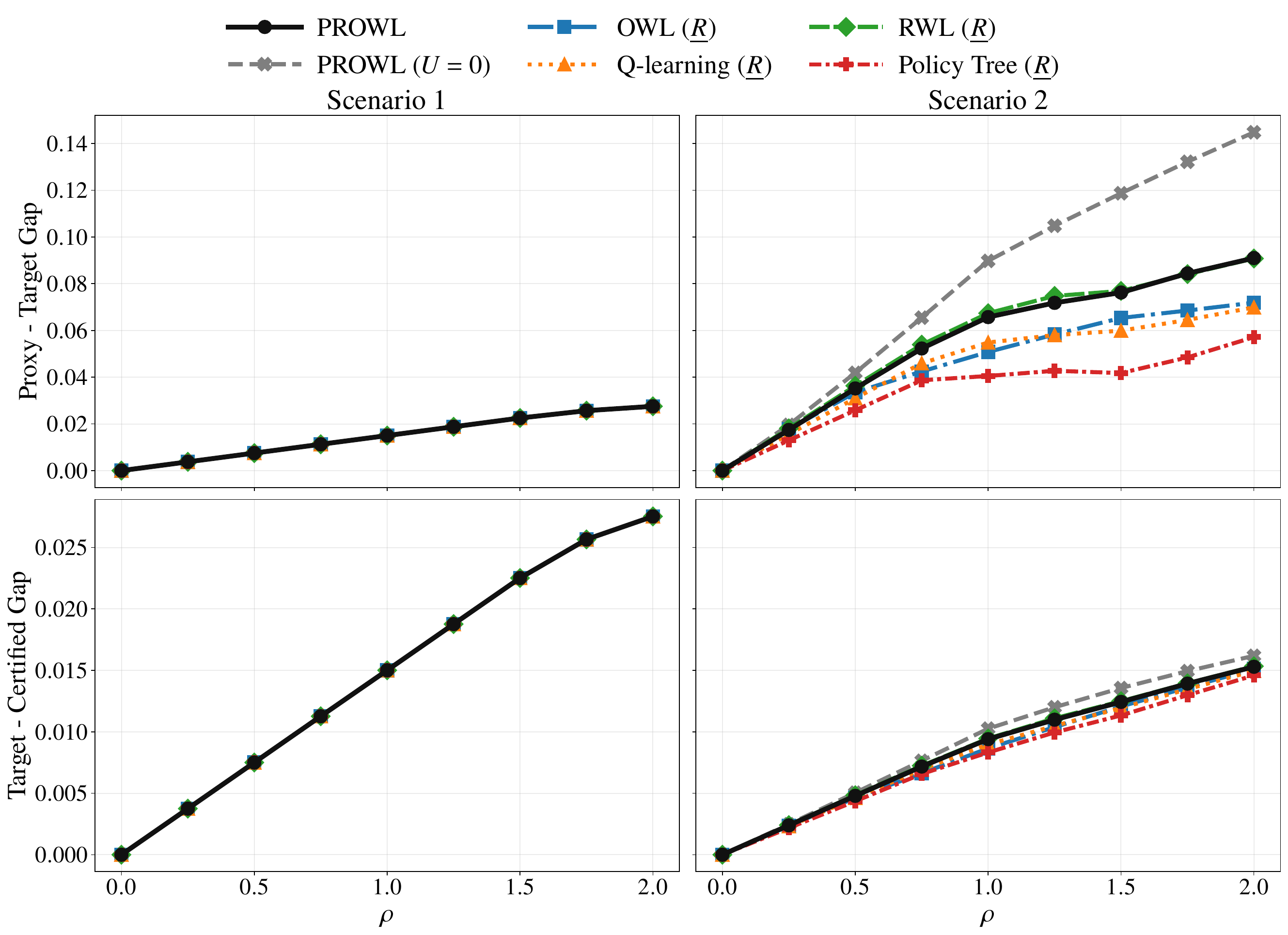}}
\caption{
Certificate-diagnostic plots.
Rows report the proxy-target gap $V(\hat{d})-V^\ast(\hat{d})$ and the target-certified gap $V^\ast(\hat{d})-V_{\uR}(\hat{d})$, while columns correspond to Scenarios~1 and 2.
Methods shown are PROWL, PROWL ($U=0$), and the classical $\uR$-family baselines.
}
\label{app:fig:certificate-diag}
\end{center}
\vskip -0.2in
\end{figure*}

Table~\ref{app:tab:certificate-dataset-diagnostics} summarizes the dataset-level diagnostics for the constructed certificates.
As expected, the average certificate magnitude increases monotonically with $\rho$ across both scenarios.
Furthermore, it is uniformly larger in Scenario~2, corroborating that the clinically heterogeneous environment necessitates a more substantial uncertainty correction.
Simultaneously, the active clipping rate ($\mathrm{Clip}$) remains effectively zero, and the validity rate ($\mathrm{Valid}$) is strictly one across all evaluated configurations.
This confirms that the certificates are both strictly valid and do not pathologically truncate the observed rewards.
Therefore, the performance discrepancies highlighted in the main text do not stem from aggressive reward clipping; rather, they reflect the varying capacities of the learning algorithms to effectively optimize under a valid, treatment-sensitive uncertainty correction.

\begin{table}[tb]
\centering
\caption{
Dataset-level certificate diagnostics across uncertainty levels $\rho$.
Here $\EE[U]$ is the logged-sample mean certificate magnitude, Clip is the logged-sample rate $\PP(U>R)$, and Valid is the independent-test validity rate that the oracle envelope upper-bounds the true proxy bias in both arms.
}
\label{app:tab:certificate-dataset-diagnostics}
\setlength{\tabcolsep}{6pt}
\begin{tabular}{c rrr rrr}
\toprule
& \multicolumn{3}{c}{Scenario 1} & \multicolumn{3}{c}{Scenario 2} \\
\cmidrule(lr){2-4} \cmidrule(lr){5-7}
$\rho$ & $\EE[U]$ & Clip & Valid & $\EE[U]$ & Clip & Valid \\
\midrule
$0$    & $0.000$ & $0.000$ & $1.000$
& $0.000$ & $0.000$ & $1.000$ \\
$0.25$ & $0.008$ & $0.000$ & $1.000$
& $0.023$ & $0.000$ & $1.000$ \\
$0.5$  & $0.015$ & $0.000$ & $1.000$
& $0.047$ & $0.000$ & $1.000$ \\
$0.75$ & $0.023$ & $0.000$ & $1.000$
& $0.070$ & $0.000$ & $1.000$ \\
$1$    & $0.030$ & $0.000$ & $1.000$
& $0.093$ & $0.000$ & $1.000$ \\
$1.25$ & $0.037$ & $0.000$ & $1.000$
& $0.107$ & $0.000$ & $1.000$ \\
$1.5$  & $0.045$ & $0.000$ & $1.000$
& $0.119$ & $0.000$ & $1.000$ \\
$1.75$ & $0.051$ & $0.000$ & $1.000$
& $0.131$ & $0.000$ & $1.000$ \\
$2$    & $0.055$ & $0.000$ & $1.000$
& $0.141$ & $0.000$ & $1.000$ \\
\bottomrule
\end{tabular}
\end{table}

Finally, Figure~\ref{app:fig:split-ablation} contrasts our default split-free PROWL implementation with an honest sample-splitting baseline, where one arm-stratified half of the data is used for nuisance estimation and the remaining half for policy optimization.
The split-free formulation achieves uniformly lower regret, with the performance gap being most pronounced in Scenario~2 under moderate sample sizes.
This gap gradually diminishes as $N$ increases and remains marginal in Scenario~1.
These observations suggest that the primary benefit of split-free learning lies in its ability to maximize finite-sample efficiency, rather than fundamentally altering the asymptotic learning target.

\begin{figure*}[tb]
\vskip 0.2in
\begin{center}
\centerline{\includegraphics[width=\columnwidth]{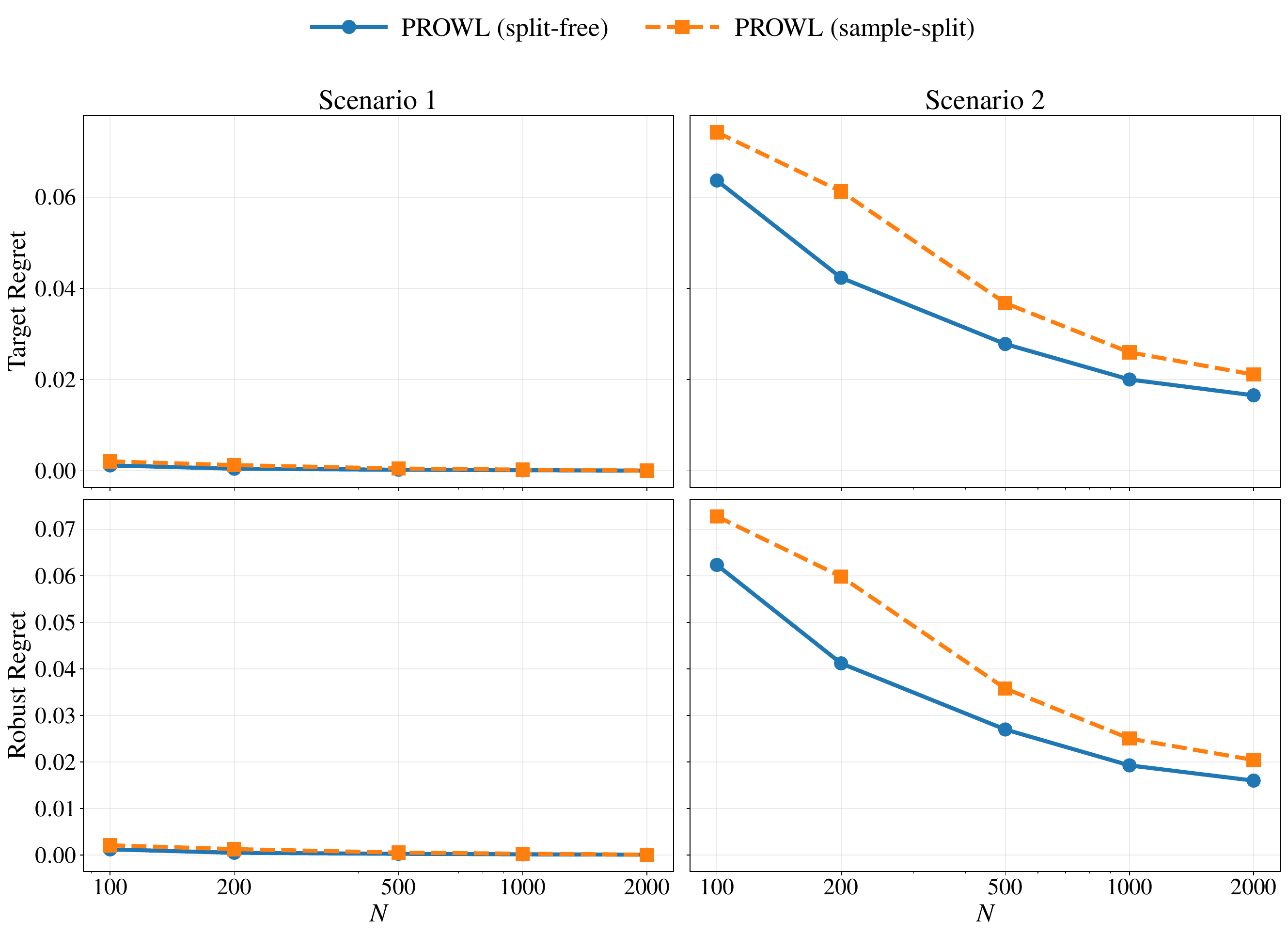}}
\caption{
Split-free ablation at $\rho=1.5$.
Rows report target regret and robust regret, while columns correspond to Scenarios~1 and 2.
PROWL (split-free) fits nuisance functions and the policy on the same sample, whereas PROWL (sample-split) uses a stratified honest half-sample split.
}
\label{app:fig:split-ablation}
\end{center}
\vskip -0.2in
\end{figure*}
\section{Detailed Actual-Data Experiments}
\label{app:sec:detail-actual-exp}

\subsection{Detailed Data Description and Implementation}
\label{app:subsec:actual-data-details}

The public ELAIA-1 dataset provides one record per randomized patient episode, comprising a mixture of baseline, process, and outcome variables.
While \citet{huang2025identification} restricted their focus to a secondary-analysis subset with available bicarbonate measurements, our policy-learning framework leverages the complete deidentified dataset and handles missing pre-randomization covariates via within-split imputation.
The primary policy covariate set consists of $48$ pre-randomization variables, categorized into:
(i) hospital and admission context;
(ii) demographics;
(iii) AKI timing and renal-function metrics;
(iv) laboratory measurements and vital signs at randomization;
(v) comorbidity indicators; and
(vi) recent medication or exposure history alongside cumulative alert burden.
We strictly exclude all post-randomization fields to prevent information leakage.
This encompasses the \texttt{*post24} variables, \texttt{consult14}, \texttt{aki\_documentation}, all clinical outcomes, event-time variables (\texttt{time\_to\_*}), length-of-stay (LOS) and cost summaries, provider-count summaries, and the highly leakage-prone \texttt{max\_stage} field.
A comprehensive variable inventory, detailing inclusion/exclusion statuses and missingness percentages, is provided in the supplementary material.

To ensure robust evaluation, all actual-data analyses employ repeated sample splitting rather than a single train/test partition.
Across $30$ independent replications, we generate $70/30$ training-to-test splits, stratified by both hospital and treatment arm.
Preprocessing pipelines are independently fitted on each training split.
Continuous covariates are median-imputed, standardized, and augmented with missingness indicators where applicable.
Binary and categorical covariates are mode-imputed, while hospital affiliation is encoded via six one-hot indicators.
Subsequently, all candidate learners are trained on the preprocessed training fold and evaluated on the strictly held-out test fold using the AIPW estimator, leveraging the known randomization propensity of $1/2$.

The primary reward specification utilizes the three-component hard clinical utility defined in Section~\ref{sec:actual-data-experiments}.
For completeness, the corresponding uncertainty set is formulated as
\begin{equation*}
    \mathcal{W}_\rho
    =
    \Biggl\{
        w\in\mathbb{R}_+^3:
        \sum_{j=1}^3 w_j=1,
        \,
        |w_j-w_{0j}|\le \rho \Delta_j
    \Biggr\}
    ,
\end{equation*}
$w_0=(0.60,0.25,0.15)^\top$, $\Delta=(0.10,0.05,0.05)^\top$ with $\rho=1$ serving as the baseline parameter in the main analysis.
The certified lower reward, $\uR_i=\min_{w\in\mathcal{W}_\rho}w^\top G_i$, is exactly computed via a straightforward three-dimensional linear program.
For the patient-centered sensitivity analysis detailed subsequently, we augment the utility vector to incorporate discharge-to-home status, defining
\begin{equation*}
    G_i^{(4)}
    =
    \Bigl(
        1-\texttt{death14}_i,\,
        1-\texttt{dialysis14}_i,\,
        1-\texttt{aki\_progression14}_i,\,
        \texttt{discharge\_to\_home}_i
    \Bigr)^\top
    ,
\end{equation*}
alongside the preference weights $w_0^{(4)}=(0.55,0.20,0.15,0.10)^\top$ and tolerance $\Delta^{(4)}=(0.10,0.05,0.05,0.05)^\top$.

Hyperparameter tuning for OWL, Q-learning, and RWL is conducted via three-fold cross-validation over the regularization grid $\{10^{-3},10^{-2},10^{-1},1\}$, optimized independently for both the nominal reward $R$ and the certified reward $\uR$.
PROWL and its uncertified ablation, PROWL $(U=0)$, calibrate the learning and temperature parameters over the grid $\eta,\gamma\in\{1/8,1/4,1/2,1,2,4,8\}$.
We fix the confidence level at $\delta=0.1$, the prior standard deviation at $5$, and the score bound at $3$.
The nuisance functions in PROWL are modeled using arm-specific ridge regressions.
The basis expansions include the standardized main effects, squared continuous covariates, and targeted interaction terms between care-unit indicators (hospital, ICU, ward) and key severity metrics, including SOFA score, baseline creatinine, initial eGFR, age, AKI duration, and cumulative alert burden.
While the main text reports outcomes based on the deterministic mean-rule deployment, the comparative efficacy of MAP and Gibbs sampling deployments is evaluated in Figure~\ref{app:fig:actual-prowl-deployment-sensitivity}.

\subsection{Additional Results of Actual-Data Experiments}
\label{app:subsec:actual-additional-results}

Table~\ref{app:tab:actual-outcome-decomposition} disaggregates the primary performance metrics into their constituent clinical outcomes.
Among the adaptive learning methods, PROWL achieves the lowest estimated risk for the primary composite outcome and the numerically lowest mortality risk, while remaining highly competitive regarding dialysis and AKI-progression risks.
Crucially, this demonstrates that PROWL's superior certified value is not artificially attained by trading off performance on one hard clinical component for another.
In contrast, the blanket Always alert policy performs uniformly worse across all hard-outcome components.
Furthermore, the OWL variants deploy substantially more aggressive alerting strategies without yielding any commensurate clinical benefit.

Figure~\ref{app:fig:hospital-hetero} highlights pronounced hospital-level heterogeneity in the intention-to-treat effect of the alerts.
Specifically, the two non-teaching hospitals exhibit positive risk differences for the primary composite outcome (favoring usual care), with Hospital 5 presenting a confidence interval entirely above zero and Hospital 6 displaying a directionally similar trend.
Conversely, effects within the teaching hospitals center closely around the null.
This discrepancy elucidates the strong empirical performance of the global Never alert rule and further underscores the necessity of site-aware, individualized treatment policies over blanket alerting directives.

Figure~\ref{app:fig:actual-rho-sweep} investigates the procedural sensitivity to the preference-uncertainty radius, $\rho$.
As $\rho$ increases, PROWL naturally adopts a more conservative stance: its alert rate decreases monotonically, yet the composite-free value remains remarkably stable.
The observed decline in certified value is solely a mechanical artifact of the increasingly stringent lower-utility criterion.
By contrast, the alerting behaviors of PROWL $(U=0)$ and Q-learning $(R)$ are virtually invariant to $\rho$, while Q-learning $(\uR)$ exhibits only marginal shifts.
This divergence confirms that explicit uncertainty certification fundamentally alters the learned policy, rather than merely adjusting the reported post-hoc lower bound.

Figure~\ref{app:fig:actual-sensitivity-without-hospital} presents an ablation study where hospital indicators are entirely excluded from the policy covariates.
While the absolute performance metrics slightly decrease across all methods, the qualitative ranking remains robustly preserved:
PROWL sustains its competitive edge in certified value and continues to consistently outperform both Q-learning baselines on the composite-free anchor.
This evidence strongly implies that our primary findings are not merely the result of the models memorizing site-specific labels.

Figure~\ref{app:fig:actual-prowl-deployment-sensitivity} compares the efficacy of three distinct deployment strategies: the mean-rule, the MAP rule, and the Gibbs sampling rule, all derived from the identically fitted PROWL posterior.
The three strategies are nearly indistinguishable in terms of certified value, composite-free value, and overall alert rate, though the MAP rule demonstrates a marginal numerical advantage.
Thus, our core empirical conclusions are strictly robust to the specific derandomization method chosen for practical deployment.

Finally, an auxiliary patient-centered sensitivity analysis, which incorporated discharge-to-home status into the utility definition (results not shown), reaffirmed the established qualitative rankings.
PROWL retained the highest certified value among all adaptive learners and further reduced the alert rate compared to PROWL $(U=0)$ and both Q-learning baselines.
This consistent performance underscores that the structural advantages of explicit reward certification persist even when evaluated under a broadened, more complex utility landscape.

\begin{table}[t]
\centering
\caption{
Component-wise decomposition of the repeated sample-split AIPW evaluation on ELAIA-1.
Entries are means with standard errors over $30$ repeated $70/30$ hospital-by-alert stratified splits.
This table complements Table~\ref{tab:actual-policy-comparison} by demonstrating that the improvements in certified value are not driven by deterioration in any single hard-outcome component.
}
\label{app:tab:actual-outcome-decomposition}
\setlength{\tabcolsep}{6pt}
\scalebox{0.7}{
\begin{tabular}{lrrrrr}
\toprule
Methods & death14 risk & dialysis14 risk & aki\_progression14 risk & composite\_outcome risk & discharge\_to\_home rate 
\\
\midrule
Never alert         & $0.086\,(0.001)$ & $0.031\,(0.000)$ &
$0.151\,(0.001)$    & $0.203\,(0.001)$ & $0.507\,(0.002)$ \\

Always alert        & $0.088\,(0.001)$ & $0.034\,(0.001)$ &
$0.162\,(0.002)$    & $0.217\,(0.002)$ & $0.490\,(0.003)$ \\

OWL ($R$)           & $0.089\,(0.001)$ & $0.034\,(0.001)$ &
$0.160\,(0.002)$    & $0.214\,(0.002)$ & $0.493\,(0.003)$ \\

Q-learning ($R$)    & $0.086\,(0.001)$ & $0.032\,(0.001)$ &
$0.156\,(0.002)$    & $0.207\,(0.002)$ & $0.494\,(0.002)$ \\

RWL ($R$)           & $0.088\,(0.001)$ & $0.031\,(0.001)$ &
$0.154\,(0.002)$    & $0.207\,(0.002)$ & $0.500\,(0.002)$ \\

OWL ($\uR$)         & $0.087\,(0.001)$ & $0.033\,(0.001)$ &
$0.160\,(0.002)$    & $0.213\,(0.002)$ & $0.493\,(0.003)$ \\

Q-learning ($\uR$)  & $0.086\,(0.001)$ & $0.032\,(0.001)$ &
$0.156\,(0.002)$    & $0.206\,(0.002)$ & $0.496\,(0.002)$ \\

RWL ($\uR$)         & $0.087\,(0.001)$ & $0.032\,(0.001)$ &
$0.154\,(0.002)$    & $0.206\,(0.002)$ & $0.501\,(0.003)$ \\

PROWL ($U=0$)       & $0.086\,(0.001)$ & $0.032\,(0.001)$ &
$0.154\,(0.002)$    & $0.206\,(0.002)$ & $0.499\,(0.003)$ \\

PROWL               & $0.086\,(0.001)$ & $0.032\,(0.001)$ &
$0.154\,(0.002)$    & $0.205\,(0.002)$ & $0.499\,(0.003)$ \\
\bottomrule
\end{tabular}
}
\end{table}

\begin{figure*}[tb]
\vskip 0.2in
\begin{center}
\centerline{\includegraphics[width=0.7\columnwidth]{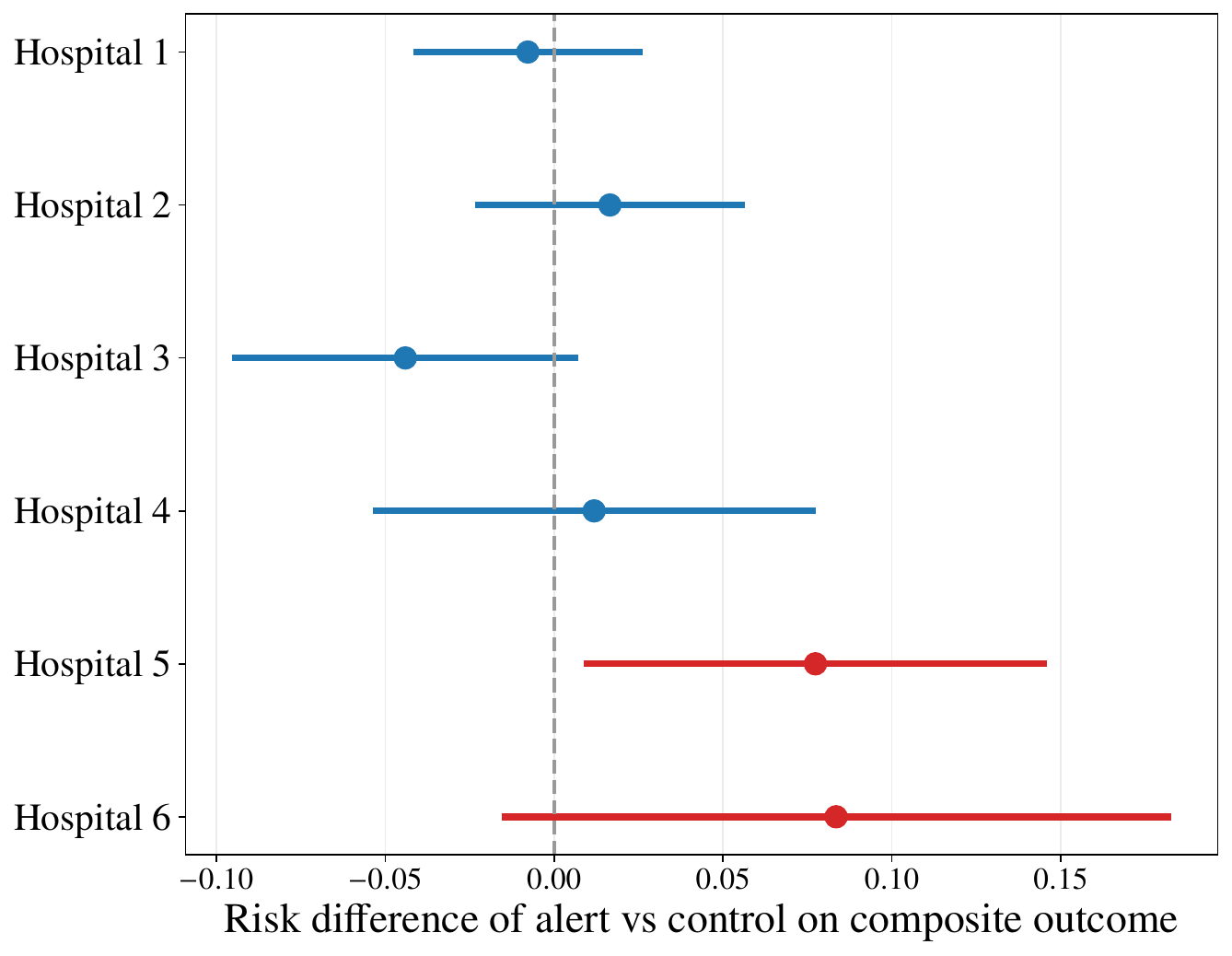}}
\caption{
Hospital-level intention-to-treat heterogeneity in the ELAIA-1 trial.
The plotted quantity is the risk difference of alert versus control on the 14-day composite outcome, accompanied by Wald-type $95\%$ confidence intervals.
Positive values favor usual care.
The two non-teaching hospitals exhibit the clearest adverse alert signal, strongly motivating the use of site-aware individualized alerting rules.
}
\label{app:fig:hospital-hetero}
\end{center}
\vskip -0.2in
\end{figure*}

\begin{figure*}[tb]
\vskip 0.2in
\begin{center}
\centerline{\includegraphics[width=\columnwidth]{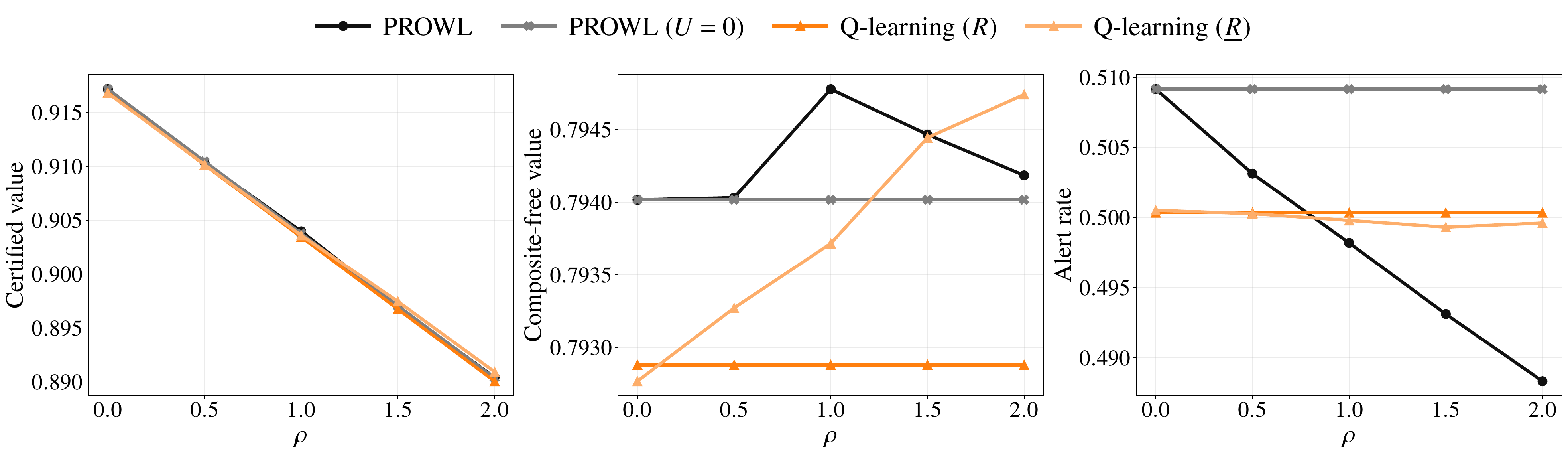}}
\caption{
Procedural sensitivity to the preference-uncertainty radius $\rho$ in the actual-data experiment.
The panels report the estimated certified value, composite-free value, and alert rate for PROWL, PROWL $(U=0)$, Q-learning $(R)$, and Q-learning $(\uR)$ across varying levels of $\rho$.
While a larger $\rho$ structurally induces a more conservative lower utility, PROWL strategically responds by modestly reducing its alerting frequency, consistently preserving the clinically anchored composite-free value.
}
\label{app:fig:actual-rho-sweep}
\end{center}
\vskip -0.2in
\end{figure*}

\begin{figure*}[tb]
\vskip 0.2in
\begin{center}
\centerline{\includegraphics[width=\columnwidth]{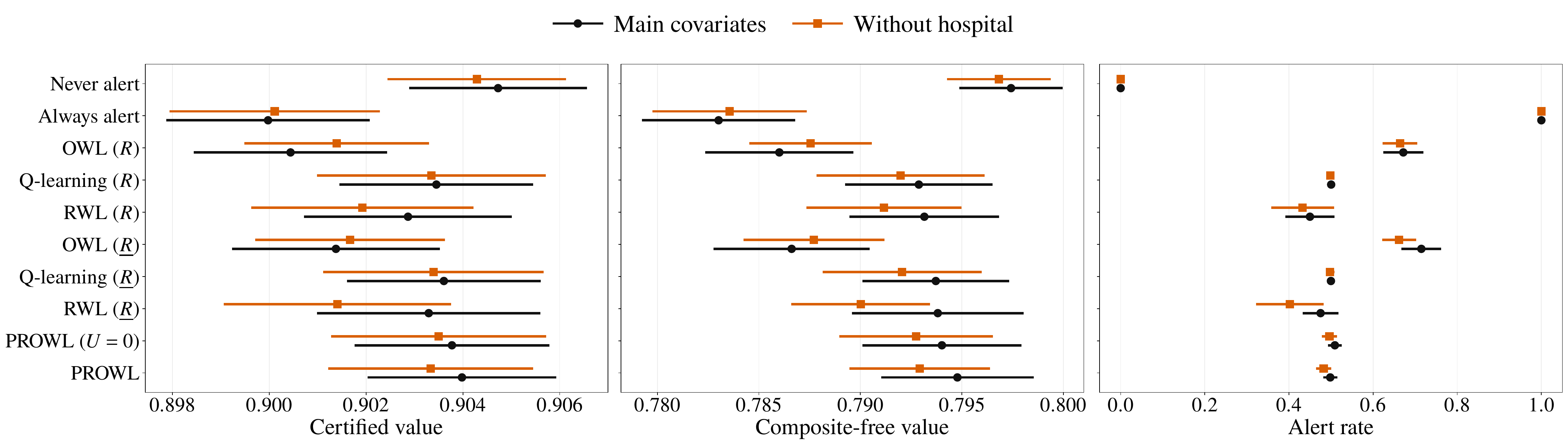}}
\caption{
Ablation sensitivity analysis excluding hospital covariates.
For each method, the points and intervals compare the baseline analysis against a refitted model that completely removes hospital indicators from the policy covariates.
Although absolute performance experiences a slight drop across all methods, the relative ordering remains robustly unchanged. This confirms that the observed gains of PROWL are not driven merely by the memorization of site-specific labels.
}
\label{app:fig:actual-sensitivity-without-hospital}
\end{center}
\vskip -0.2in
\end{figure*}

\begin{figure*}[tb]
\vskip 0.2in
\begin{center}
\centerline{\includegraphics[width=\columnwidth]{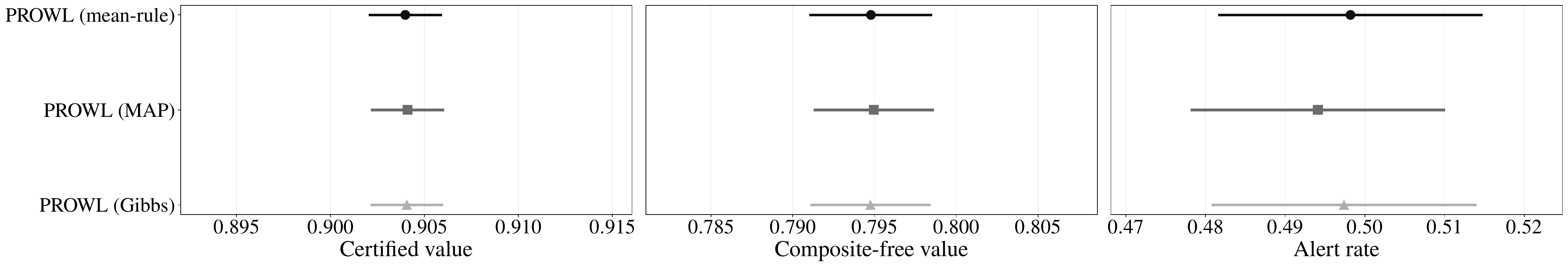}}
\caption{
Deployment derandomization sensitivity for PROWL.
This figure compares the deterministic mean-rule (utilized in the main text analysis) against MAP and Gibbs deployment strategies derived from the identically fitted posterior.
Across all three configurations, the certified value, composite-free value, and alert rates are virtually indistinguishable, underscoring the robustness of our empirical findings.
}
\label{app:fig:actual-prowl-deployment-sensitivity}
\end{center}
\vskip -0.2in
\end{figure*}

\end{document}